\documentclass{article}

\usepackage{microtype}
\usepackage{graphicx}
\usepackage{subcaption}
\usepackage{booktabs} 

\usepackage{hyperref}

\usepackage[accepted]{icml2026}

\usepackage{amsmath}
\usepackage{amssymb}
\usepackage{mathtools}
\usepackage{amsthm}

\usepackage[capitalize,noabbrev]{cleveref}

\theoremstyle{plain}
\newtheorem{theorem}{Theorem}[section]

\theoremstyle{definition}

\theoremstyle{remark}

\usepackage[textsize=tiny]{todonotes}

\usepackage{url}
\usepackage{wrapfig}
\usepackage{pifont}
\usepackage{makecell}
\usepackage{multirow}
\PassOptionsToPackage{table}{xcolor}
\usepackage{xcolor}
\usepackage[dvipsnames]{xcolor}
\usepackage{colortbl}
\usepackage{enumitem}
\usepackage{caption}
\usepackage{etoc}
\usepackage{extpfeil}
\definecolor{mycitecolor2}{HTML}{f78b6c}
\definecolor{mycitecolor}{HTML}{3498DC}
\hypersetup{
    colorlinks=true,
    urlcolor=mycitecolor2
}
\usepackage{tcolorbox}
\newcommand{\hypothesis}[2]{%
\begin{tcolorbox}[colback=mycitecolor!10!white,
    leftrule=2.5mm,
    size=title]
    \textbf{#1}: #2
\end{tcolorbox}
}
\usepackage{soul}

\icmltitlerunning{Angel or Demon: Investigating the Plasticity Interventions' Impact on Backdoor Threats in Deep Reinforcement Learning}

\begin{document}

\twocolumn[
  \icmltitle{Angel or Demon: Investigating the Plasticity Interventions' Impact \\ on Backdoor Threats in Deep Reinforcement Learning}



  \icmlsetsymbol{equal}{*}

  \begin{icmlauthorlist}
    \icmlauthor{Oubo Ma}{zju}
    \icmlauthor{Ruixiao Lin}{zju}
    \icmlauthor{Yang Dai}{gfkd}
    \icmlauthor{Jiahao Chen}{zju}
    \icmlauthor{Chunyi Zhou}{zju}
    \icmlauthor{Linkang Du}{xju}
    \icmlauthor{Shouling Ji$^*$}{zju}
  \end{icmlauthorlist}

  \icmlaffiliation{zju}{Zhejiang University}
  \icmlaffiliation{gfkd}{National University of Defense Technology}
  \icmlaffiliation{xju}{Xi'an Jiaotong University}

  \icmlcorrespondingauthor{Shouling Ji}{sji@zju.edu.cn}

  \icmlkeywords{Deep Reinforcement Learning, Backdoor Attack, Continuous Learning, Plasticity Loss}

  \vskip 0.3in
]



\printAffiliationsAndNotice{}  

\begin{abstract}
Extensive research has highlighted the severe threats posed by backdoor attacks to deep reinforcement learning (DRL).
However, prior studies primarily focus on vanilla scenarios, while plasticity interventions have emerged as indispensable built-in components of modern DRL agents.
Despite their effectiveness in mitigating plasticity loss, the impact of these interventions on DRL backdoor vulnerabilities remains underexplored, and this lack of systematic investigation poses risks in practical DRL deployments.
To bridge this gap, we empirically study 14,664 cases integrating representative interventions and attack scenarios.
We find that only one intervention (i.e., \textit{SAM}) exacerbates backdoor threats, while other interventions mitigate them.
Pathological analysis identifies that the exacerbation is attributed to backdoor gradient amplification, while the mitigation stems from activation pathway disruption and representation space compression.
From these findings, we derive two novel insights: 
(1) a conceptual framework \textit{SCC} for robust backdoor injection that deconstructs the mechanistic interplay between interventions and backdoors in DRL, and (2) abnormal loss landscape sharpness as a key indicator for DRL backdoor detection.
\end{abstract}

\section{Introduction}
\label{sec:introduction}
Deep Reinforcement Learning (DRL) has achieved widespread adoption in safety-critical applications, including robotic control~\citep{wang2024research}, drone navigation~\citep{kaufmann2023champion}, and autonomous driving~\citep{wang2025research}.
However, DRL agents are vulnerable to severe security threats from backdoor attacks~\citep{rathbun2024sleepernets,liu2025fox}, which embed malicious trigger-action mappings that result in catastrophic failures.
For instance, adversaries can exploit backdoors to force autonomous vehicles into abrupt maneuvers, resulting in traffic congestion or collisions~\citep{chentemporal}.

Existing DRL backdoor research primarily emphasizes enhancing attack techniques (e.g., transition tampering~\citep{kiourti2020trojdrl,dai2025trojanto} and backdoor reward exploration~\citep{ma2025unidoor,rathbun2025adversarial}), yet victim agents are trained and evaluated on vanilla DRL pipelines.
Concurrently, a cutting-edge research field addresses plasticity loss in DRL, which arises from its inherent properties of non-stationary input streams and shifting optimization objectives~\citep{dohare2024loss,lyle2024normalization,ma2024revisiting}.
These studies suggest that DRL agents typically incorporate \textit{plasticity interventions} to sustain their continuous learning capability, with interventions integrated as integral components within DRL implementation, such as \textit{Shrink \& Perturb}~\citep{ash2020warm}, \textit{Weight Clipping}~\citep{elsayed2024weight}, \textit{Spectral Normalization}~\citep{gogianu2021spectral}, \textit{Weight Decay}~\citep{dohare2024loss}, \textit{Layer Normalization}~\citep{lyle2023understanding}, \textit{ReDo}~\citep{sokar2023dormant}, and \textit{SAM}~\citep{lee2023plastic}.

\begin{figure*}
    \centering
    \includegraphics[width=0.99\textwidth]{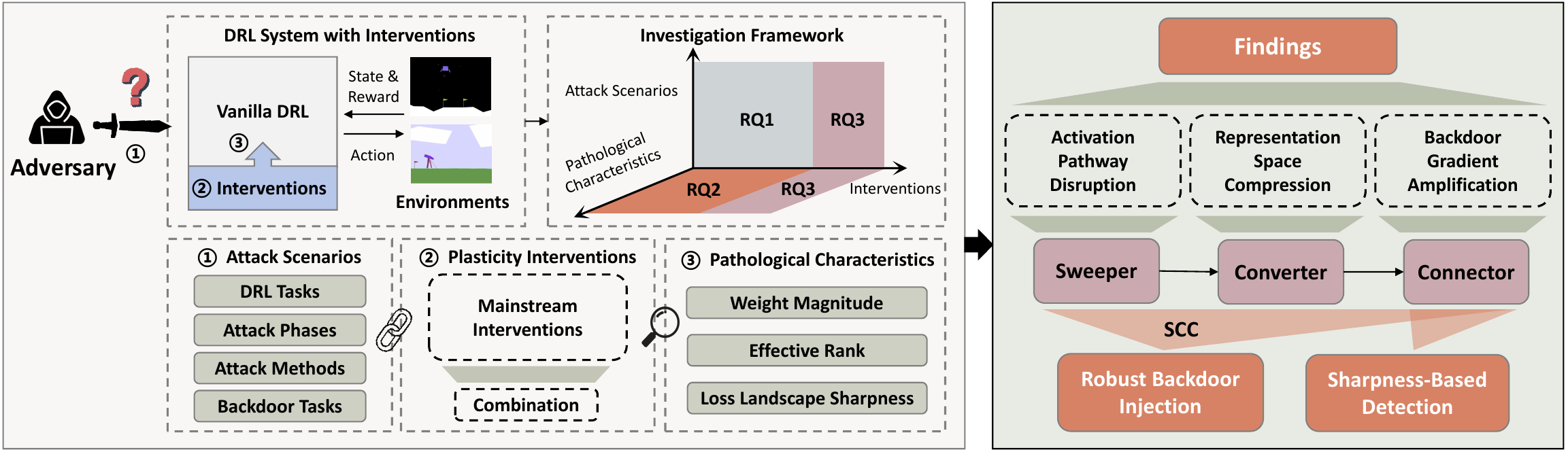}
    \caption{Outline of our investigation.}
    \label{fig:arc}
\end{figure*}

Despite significant research efforts in these respective fields, the impact of interventions on DRL backdoor attacks remains largely underexplored, particularly regarding whether they alter the internal properties of backdoors, thereby posing realistic risks in practical DRL deployments.
To bridge this gap, we conduct a comprehensive investigation following the evaluation framework outlined in Figure~\ref{fig:arc}, aiming to answer the following Research Questions (RQs):

\textbf{\textit{RQ1: How do plasticity interventions impact backdoor attacks in DRL?}}
We conduct a comprehensive empirical evaluation encompassing 9,024 cases, integrating seven prominent plasticity interventions in representative attack scenarios.
Each attack scenario comprises a DRL task, an attack phase (learning-from-scratch vs. post-training), an attack method, and a backdoor task.
The results reveal that interventions in the post-training phase have a more pronounced impact on DRL backdoors. Notably, only \textit{SAM} exacerbates backdoor risks, whereas the other interventions exhibit varying degrees of mitigation.
For instance, in robotic control tasks, the application of \textit{SAM} raises the attack success rate (ASR) from 0.178 to 0.326 and simultaneously enhances benign task performance (BTP) from 0.745 to 0.814.

\textbf{\textit{RQ2: What internal properties of DRL backdoors do interventions alter to drive these impacts?}}
Extending the pathological analysis from the plasticity loss domain, we attribute the observed effects in RQ1 to three distinct mechanisms:
\textbf{(M1)}~Activation pathway disruption\footnote{An activation pathway is conceptualized as a functional subnetwork formed by parameter connections.} induces competition between backdoor and benign tasks (e.g., \textit{Shrink \& Perturb}, \textit{Weight Clipping}, and \textit{ReDo}).
\textbf{(M2)}~Representation space compression drives backdoor and benign gradients from orthogonality toward alignment, creating denser backdoor pathways and exacerbating non-stationarity (e.g., \textit{Spectral Normalization}, \textit{Weight Decay}, \textit{Layer Normalization}).
\textbf{(M3)}~Backdoor gradient amplification, by capturing sharp losses, enables the backdoor pathway to rapidly converge to a flat-minimum region that is robust to parameter perturbations (e.g., \textit{SAM}).

\textbf{\textit{RQ3: What novel insights emerge from our findings for advancing DRL backdoor research?}}
Through an additional investigation of 5,640 cases, we discover that combining multiple interventions tends to amplify backdoor threats rather than mitigate them.
For instance, in robotic control tasks, ASR increases from 0.178 to 0.418, while BTP increases from 0.745 to 0.915.
Leveraging this insight alongside the internal properties identified in RQ2, we propose a conceptual framework for robust backdoor injection in post-training scenarios, termed \textit{Sweeper-Converter-Connector} (\textit{SCC}):
\textit{Sweeper} exploits \textbf{M1}, which denotes clipping or resetting weights to vacate neural pathways, thereby creating space for backdoor injection.
\textit{Converter} leverages \textbf{M2}, which conceptualizes aligning backdoor and benign gradients, transforming backdoors into multi-pathway structures.
\textit{Connector} applies \textbf{M3}, which guides optimization toward flat minima, stabilizing the co-construction of backdoor and benign representations across multiple pathways.
\textit{SCC} also enables a pre-deployment diagnosis, especially instrumental for the emerging deployment of integrating multiple interventions in the backdoor-sensitive scenarios.
Meanwhile, we reveal that abnormal loss landscape sharpness consistently manifests as a distinctive signature of DRL backdoors across both learning-from-scratch and post-training scenarios, offering a promising avenue for backdoor detection.

In summary, this study makes the following contributions:

\begin{itemize}
  \item We conduct the first comprehensive investigation (covering 14,664 cases) into how seven mainstream interventions and five combination strategies impact DRL backdoor attacks across diverse attack scenarios.
  \item We identify three mechanisms by which interventions alter the internal properties of DRL backdoors: activation pathway disruption, representation space compression, and backdoor gradient amplification.
  \item We derive two novel insights for advancing DRL backdoor research regarding backdoor internal properties (i.e., \textit{SCC} for robust injection) and external manifestations (i.e., sharpness-based detection). Additionally, we release the source code to facilitate reproducibility\footnote{\url{https://github.com/maoubo/Plasticity}}.
\end{itemize}

\begin{figure*}[ht]
    \centering
    \includegraphics[width=0.78\textwidth]{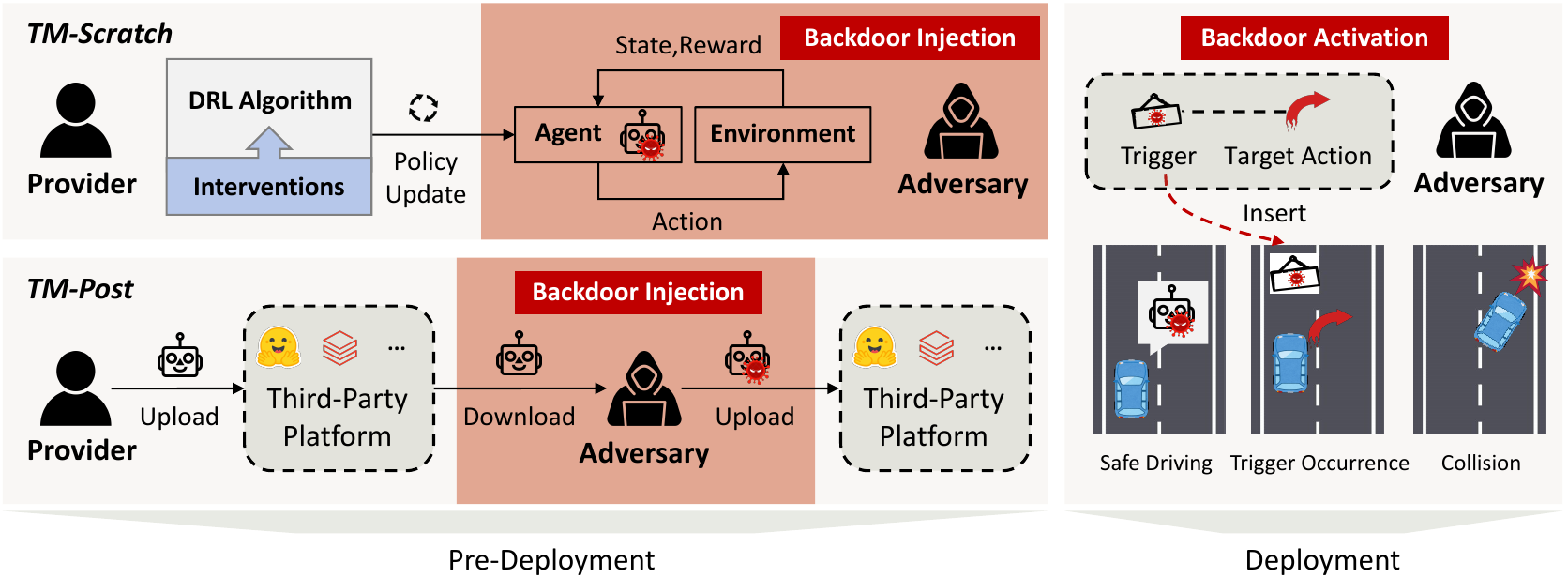}
    \caption{Threat models.}
    \label{fig:threat}
\end{figure*}

\section{Background}
\label{sec:background}

This section summarizes the background relevant to this study. Additional details are provided in Appendix~\ref{app:further_background}.

\noindent \textbf{DRL Backdoor Attacks.}
Existing backdoor attacks follow a two-step pipeline (i.e., backdoor injection and backdoor activation): the backdoor is first injected during the agent's training phase, and is later activated during deployment to enable unauthorized manipulation of the agent's behavior.
The primary technique for backdoor injection is transition tampering~\citep{kiourti2020trojdrl, dai2025trojanto}, where the adversary modifies the transitions (i.e., triplets consisting of state, action, and reward) stored by the agent to bind the trigger with the target action through backdoor rewards.
Additional injection techniques include environment perturbation~\citep{yang2019design,liu2025fox} and policy combination~\citep{wang2021backdoorl,gong2024baffle}.
The adversary can further escalate the backdoor threat by targeting both triggers and rewards.
Techniques such as trigger optimization~\citep{cui2024badrl,li2025toobadrl} and reward modification~\citep{rathbun2024sleepernets,rathbun2025adversarial} help alleviate update conflicts, whereas backdoor reward exploration~\citep{ma2025unidoor} improves the cross-environment applicability of DRL backdoor attacks.

\noindent \textbf{Plasticity Interventions.}
Interventions are designed to preserve stable input representations and training dynamics, which is crucial for the practical utility of DRL agents.
The design of mainstream plasticity interventions is primarily motivated by four intuitions~\citep{klein2024plasticity}: weight perturbation, weight regularization, activation control, and training control.
Weight perturbation~\citep{ash2020warm,elsayed2024weight,hernandezreinitializing} involves directly clipping or perturbing the parameter weights of the policy to mitigate the adverse effects of outlier weights on the training dynamics.
Weight regularization~\citep{gogianu2021spectral,lyle2022understanding,dohare2024loss} applies soft constraints on weight updating to encourage the exploration of the policy in parameter space.
Activation control~\citep{nikishin2022primacy,lyle2023understanding,sokar2023dormant,abbas2023loss} involves regulating intermediate activations (e.g., normalizing activations and modifying activation functions) to reduce the sensitivity of representations to non-stationary inputs.
Training control~\citep{lee2023plastic,nikishin2023deep,lee2024slow,ma2024revisiting} modifies the optimization process or objective to steer policy updates in a more stable direction, reducing the risk of aggressive updates that could lead to suboptimal solutions due to environmental dynamics.
In addition, recent studies suggest that combining different interventions has the potential to further reduce the plasticity loss of the policy~\citep{lyle2024normalization,lyle2024disentangling}.

\section{Problem Formulation}
\label{sec:problem_formulation}

\noindent \textbf{Threat Model.}
The attack scenario involves a provider and an adversary (see Figure~\ref{fig:threat}).
The provider trains the DRL agent from scratch for the benign task and determines whether interventions should be applied to improve the agent's continual learning capability.
Then, the provider deploys the well-trained agent or uploads it to a third-party platform.
Based on the stage at which the adversary initiates the backdoor injection, we consider two widely discussed threat models~\citep{kiourti2020trojdrl,cui2024badrl,rathbun2024sleepernets,ma2025unidoor,dai2025trojanto}:

\textbf{\textit{TM-Scratch}.}
\textit{The adversary injects the backdoor while the provider is training the agent.}

\textbf{\textit{TM-Post}.}
\textit{The adversary downloads the released agent, injects a backdoor via post-training, and then republishes the backdoored agent to the third-party platform.}

During the deployment phase, the backdoored agent performs sequential decision-making normally on the benign task.
However, when the adversary inserts a trigger into the environment, the backdoor is activated, forcing the agent to output the target action.
Appendix~\ref{app:Backdoor Injection} provides supplementary clarification of the threat model and a detailed description of the backdoor injection process.

\begin{figure*}
    \centering
    \includegraphics[width=0.72\textwidth]{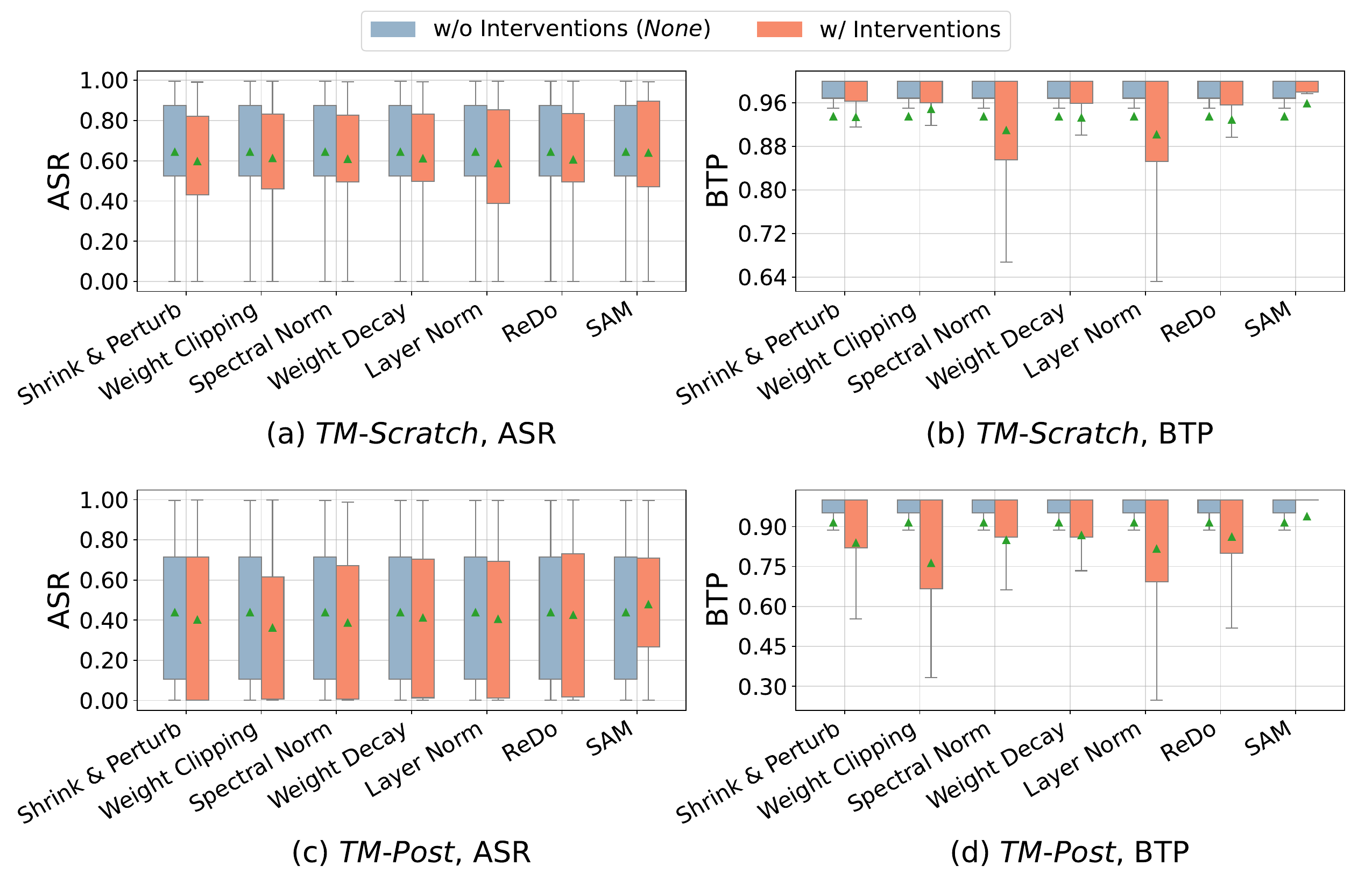}
    \caption{Impact of interventions in \textit{TM-Scratch} and \textit{TM-Post}.}
    \label{fig:comp_combined}
\end{figure*}

\noindent \textbf{Formulation.}
The benign task is modeled as a Markov Decision Process, denoted by $\mathcal{M} = (\mathcal{S}, \mathcal{A}, \mathcal{R}, \mathcal{P}, \gamma)$, where $\mathcal{S}$ is the state space, $\mathcal{A}$ is the action space, and $\mathcal{R}$ is the reward function.
The state transition function $\mathcal{P}:\mathcal{S} \times \mathcal{A} \rightarrow \Delta(\mathcal{S})$ defines the probability of reaching state $s' \in \mathcal{S}$ after taking action $a \in \mathcal{A}$ in state $s \in \mathcal{S}$.
The discount factor $\gamma \in [0,1)$ balances immediate and future rewards.
The policy of the agent $\pi_\theta:\mathcal{S} \rightarrow \Delta(\mathcal{A})$ maps each state to a probability distribution over actions.
The DRL algorithm aims to find the optimal parameters for the policy $\pi_{\theta}$ by maximizing the expected cumulative reward, i.e., $\theta^* = \mathop{\arg\max}\limits_{\theta} \mathbb{E}_{\pi_\theta} [\sum_{t=0}^{T} \gamma^t r_t]$, where $T$ is the time horizon.

The backdoor task is modeled as a tuple $\mathcal{M}^\dag = (\mathcal{T}, \mathcal{S}^\dag, \mathcal{A}^\dag, \mathcal{F}_s, \mathcal{F}_a, \mathcal{R}^\dag)$, where $\mathcal{T}$ is the trigger space, $\mathcal{S}^\dag \subseteq \mathcal{S}$ is the backdoor state space, and $\mathcal{A}^\dag \subseteq \mathcal{A}$ is the target action space.
$\mathcal{F}_s:\mathcal{S} \times \mathcal{T} \rightarrow \mathcal{S}^\dag$ is a trigger-state mapping function that defines how a trigger alters a benign state.
$\mathcal{F}_a:\mathcal{T} \rightarrow \mathcal{A}^\dag$ is a trigger-action mapping function specified by the adversary, which determines the target action $a^\dag \in \mathcal{A}^\dag$ corresponding to each trigger $\delta \in \mathcal{T}$.
$\mathcal{R}^\dag$ is a backdoor reward function crafted to reinforce the mapping between triggers and target actions.

\textbf{Adversary's Objective:} the objective of the adversary is to induce the agent to output an action that is as close as possible to the predefined target action when the backdoor is activated, i.e., $\min_{\theta^\dag} \mathbb{E}_{s \sim \mathcal{S}, \delta \sim \mathcal{T}} \left[ || \pi_{\theta^\dag} (\mathcal{F}_s (s, \delta)) - a^\dag || \right]$, where $\pi_{\theta^\dag}$ is the backdoored policy.
Meanwhile, the adversary seeks to avoid degrading the agent's performance on the benign task.

\begin{figure*}[t]
    \centering
    \includegraphics[width=0.94\textwidth]{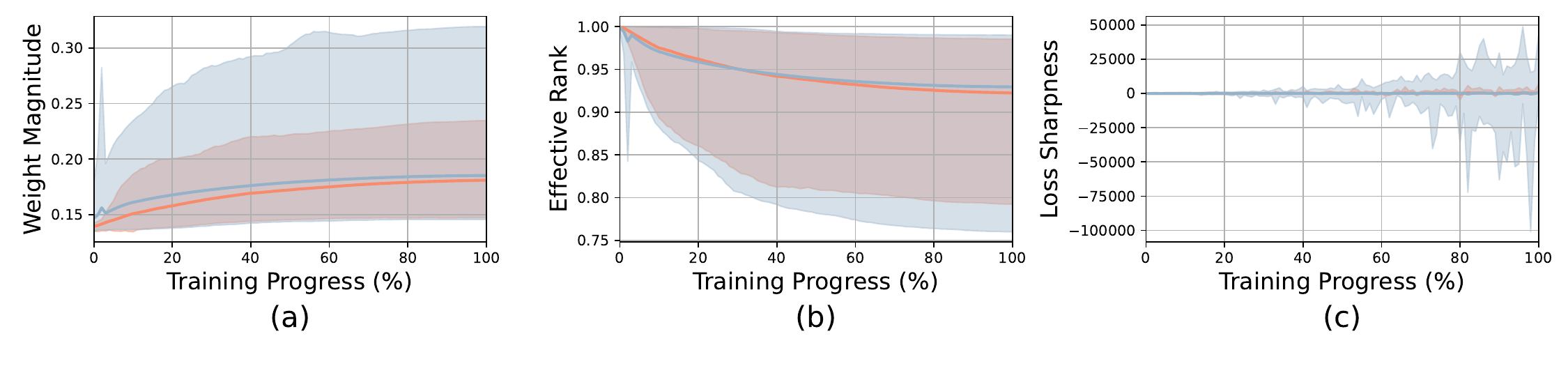}
    \caption{
    Comparison of conventional training and backdoor attacks on the three pathological characteristics.
    Solid lines represent mean values, and shaded areas denote the range from minimum to maximum values.
    \textcolor[HTML]{f78b6c}{\rule[0.5ex]{1em}{1pt}} Conventional Training,
    \textcolor[HTML]{96b2c9}{\rule[0.5ex]{1em}{1pt}} Backdoor Attacks.
    }
    \label{fig:monitoring}
\end{figure*}

\section{Study Design}
\label{sec:study_design}

\noindent \textbf{Attack Scenarios.}
We construct diverse attack scenarios to underpin a comprehensive investigation.
\textbullet \ For DRL tasks (i.e., benign tasks), we adopt four classic control tasks (CartPole, Acrobot, MountainCar, and Pendulum) and two physics control tasks (Lunar Lander and BipedalWalker) from OpenAI Gym~\citep{gym}, as well as three robotic tasks (Hopper, Reacher, and Half
Cheetah) from Facebook AI's PyBullet~\citep{coumans2021}.
These tasks span discrete and continuous action spaces, sparse and dense reward signals, and both cold-start and non-cold-start conditions.
\textbullet \ We consider performing backdoor injection during both the learning-from-scratch and post-training stages (i.e., \textit{TM-Scratch} and \textit{TM-Post}).
\textbullet \ The backdoor attacks are carried out using four representative methods, all of which are compatible with both \textit{TM-Scratch} and \textit{TM-Post}: TrojDRL~\citep{kiourti2020trojdrl}, BadRL~\citep{cui2024badrl}, SleeperNets~\citep{rathbun2024sleepernets}, and UNIDOOR~\citep{ma2025unidoor}.
\textbullet \ We construct 47 backdoor tasks with reference to~\citet{ma2025unidoor}, including both single-backdoor and multi-backdoor injection settings.

\noindent \textbf{Intervention Setup.}
We consider eight intervention settings.
\textbullet~\textit{None}: no intervention is applied, serving as a baseline for comparison;
\textbullet~\textit{Shrink \& Perturb}: upon each network update, weights are scaled by a small scalar and perturbed by adding weights from a randomly initialized network;
\textbullet~\textit{Weight Clipping}: constraining the weights to lie within a predefined range;
\textbullet~\textit{Spectral Normalization}: applied after the initial linear layer of the network;
\textbullet~\textit{Weight Decay}: setting the $\ell_2$ penalty coefficient to $10^{-5}$;
\textbullet~\textit{Layer Normalization}: applied after every linear layer;
\textbullet~\textit{ReDo}: periodically resetting the neurons with the highest dormancy level in each layer at fixed intervals;
\textbullet~\textit{SAM}: applying Sharpness-Aware Minimization with Adam as the base optimizer, setting the sharpness penalty to 0.01.
We formalize these intervention settings as a set $P = \{p_1, p_2, \ldots, p_8\}$, where each $p_i$ corresponds to the $i$-th intervention setting introduced above (e.g., $p_8$ denotes \textit{SAM}).
We also consider two existing combinations, \textit{Swiss Cheese}~\citep{lyle2024normalization} and \textit{Plastic}~\citep{lee2023plastic}, alongside three new combinations based on the aforementioned interventions: \textit{Lac}, \textit{SLac}, and \textit{SSW}.

\noindent \textbf{Pathological Characteristics.}
Prior studies indicate that interventions modulate a DRL agent's continual learning ability via three pathological characteristics: weight magnitude, effective rank, and loss landscape sharpness~\citep{lyle2023understanding,sokar2023dormant,dohare2024loss,klein2024plasticity}.
In line with these studies, we analyze how interventions affect DRL backdoor attacks through these pathologies.
We formalize these pathologies as a set $C = \{c_1, c_2, c_3\}$, where each $c_i$ corresponds to the $i$-th pathology introduced above (e.g., $c_1$ denotes weight magnitude).

Appendix~\ref{app:implementation_details} provides additional details on definitions (e.g., combinations, pathological characteristics and evaluation metrics) and implementations (e.g., DRL algorithms, backdoor attacks, and benign \& backdoor task designs).

\section{Empirical Study and Analysis}
\label{sec:empirical_study}

This section systematically explores the three research questions and derives five key findings.

\subsection{RQ1: Impact of Interventions}
\label{sec:rq1}
The reported results encompass 9,024 experimental cases, derived from $2$ threat models $\times$ $8$ intervention settings $\times$ $47$ backdoor tasks $\times$ $4$ backdoor attacks $\times$ $3$ random seeds.

Figure~\ref{fig:comp_combined} (a) and (b) show that in \textit{TM-Scratch}, interventions exert a modest impact on DRL backdoor attacks.
Specifically, Figure~\ref{fig:comp_combined}(a) reveals that ASR exhibits minor fluctuations, with \textit{Layer Normalization} yielding the most pronounced effect of -8.84\%.
Figure~\ref{fig:comp_combined}(b) indicates that \textit{Spectral Normalization} and \textit{Layer Normalization} lead to more pronounced fluctuations in BTP.
For instance, \textit{Layer Normalization} reduces BTP by over 30\% in the Acrobot and MountainCar tasks, both of which are characterized by sparse rewards.
Notably, \textit{SAM} is the only intervention that slightly improves BTP while maintaining ASR virtually unchanged.
Detailed numerical results are provided in Table~\ref{tab:interventions_results_1} in the Appendix.
Furthermore, Appendix~\ref{app:Impact_of_Interventions_on_Conventional_Training} demonstrates that these interventions have a negligible impact on BTP under conventional training.
Therefore, the BTP fluctuations observed in Figure~\ref{fig:comp_combined}(b) are attributable to the effects of the interventions on DRL backdoor attacks.

Figure~\ref{fig:comp_combined} (c) and (d) show that in \textit{TM-Post}, interventions exert a more pronounced impact on DRL backdoor attacks. 
This is because interventions affect a well-trained DRL agent (suffering from plasticity loss) more substantially than a randomly initialized agent.
Figure~\ref{fig:comp_combined}(c) indicates that most interventions reduce ASR; specifically, \textit{Weight Clipping} decreases it by an average of 17.46\% and \textit{Spectral Normalization} by 11.78\%.
Figure~\ref{fig:comp_combined}(d) indicates that most interventions significantly reduce BTP, with \textit{Weight Clipping} decreasing it by an average of 20.19\% and \textit{Layer Normalization} by 11.93\%.
Remarkably, we observe that in \textit{TM-Post}, \textit{SAM} produces a pronounced enhancement of backdoor attacks.
For instance, in robotic tasks, it increases ASR from 0.178 to 0.326, representing an 83.15\% relative improvement, while in physics control tasks, the gain reaches 26.07\%.
Detailed numerical results are provided in Table~\ref{tab:interventions_results_2} in the Appendix.
Moreover, the ablation study in Appendix~\ref{app:Ablation Study} demonstrates that despite fluctuations caused by different intervention hyperparameters, these effects exhibit a consistent trend.

\hypothesis{(Finding 1) Heterogeneity Impact}{Interventions exert a more substantial impact on DRL backdoor attacks in \textit{TM-Post} than in \textit{TM-Scratch}. Notably, \textit{SAM} significantly exacerbates the backdoor threat, while other interventions exhibit varying degrees of mitigation.}

\subsection{RQ2: Intrinsic Mechanisms}
\label{sec:rq2}

In this subsection, we first examine the impact of backdoor attacks on the agent with respect to the three pathological characteristics, establishing a baseline for subsequent comparison with interventions.
Figure~\ref{fig:monitoring} demonstrates that backdoor attacks substantially amplify the non-stationarity of DRL training, as evidenced by larger performance oscillations.
Specifically, the performance ranges (i.e., the absolute differences between the maximum and minimum values) of weight magnitude and effective rank increase by 98.63\% and 19.16\%, respectively. The most pronounced effect is observed in loss landscape sharpness, the range of which increases by \textbf{635.22\%}.

To systematically analyze the effects of interventions, we monitor their impact across these three pathologies and rank them accordingly.
For each intervention setting $p_i \in P$, we define a pathological vector $\mathbf{v}(p_i) = (v_{i1}, v_{i2}, v_{i3})$, where $v_{ij} \in [1, 8]$ represents the ranking of the agent on pathological characteristic $c_j \in C$ under intervention $p_i$.
The motivation for ranking and the specific ranking criteria are detailed in Appendix~\ref{app:ranking}.
Figure~\ref{fig:rank_summary} presents the average ranking results across all experimental settings.
For instance, $\mathbf{v}(p_2) = (3.70, 4.06, 4.04)$ summarizes the overall performance of \textit{Shrink \& Perturb} across the three pathologies, where $v_{21} = 3.70$ indicates that the backdoored agent with \textit{Shrink \& Perturb} achieves an average ranking of 3.70 on weight magnitude across all cases.
Detailed ranking results are provided in Figure~\ref{fig:monitoring_rank} of the Appendix.

\begin{figure}[t]
    \centering
    \includegraphics[width=0.52\textwidth]{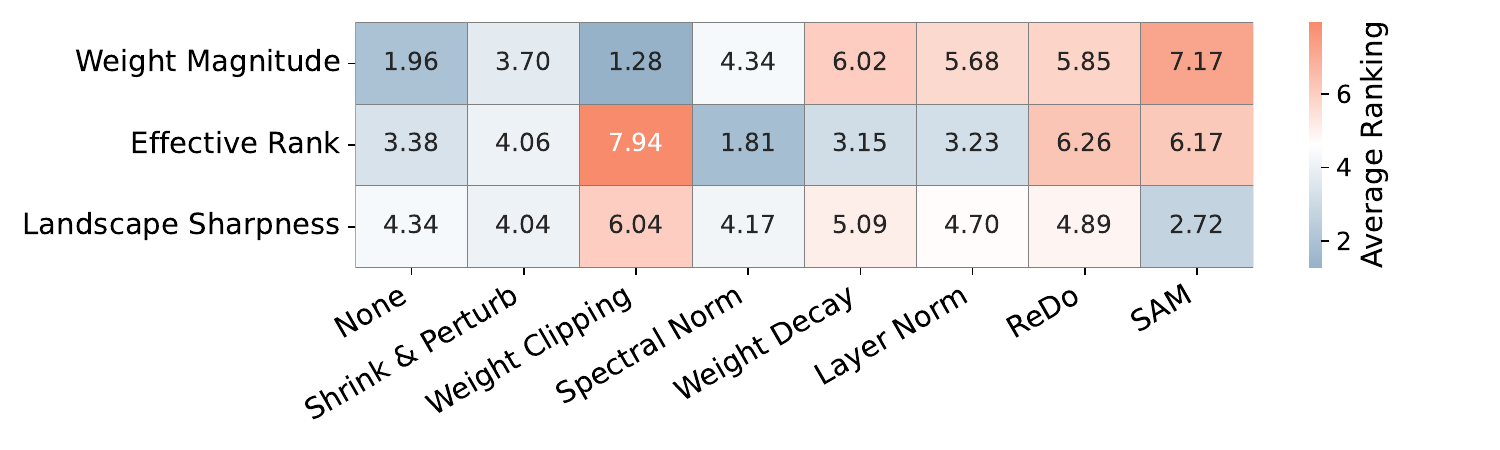}
    \caption{Impact of interventions on backdoor attacks across three pathological characteristics.}
    \label{fig:rank_summary}
\end{figure}

\begin{figure*}
    \centering
    \includegraphics[width=0.85\textwidth]{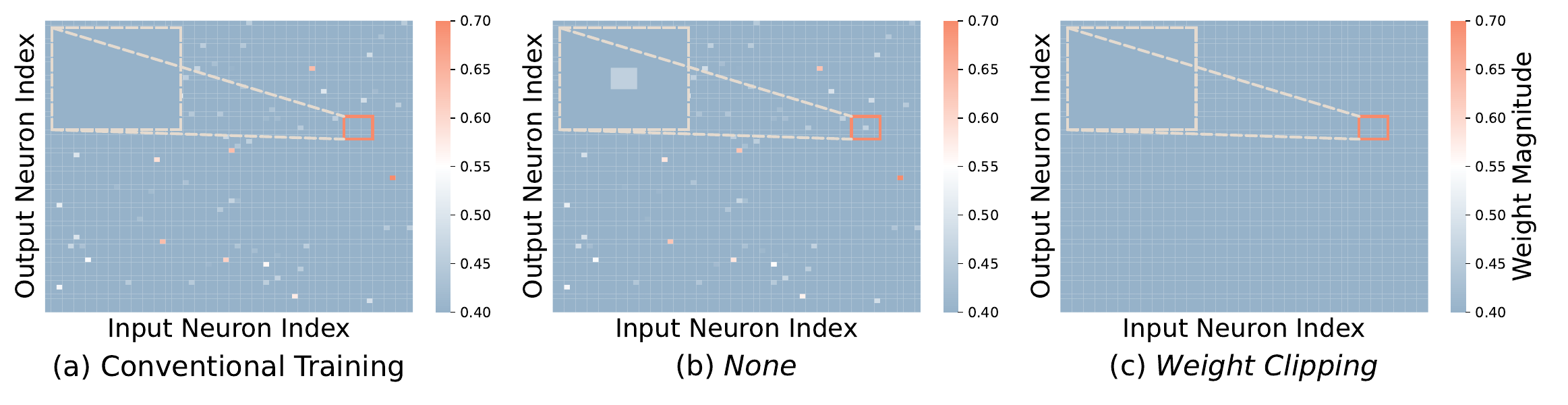}
    \caption{Visualization of the weight magnitude in the actor network's second fully connected layer.}
    \label{fig:weight_clipping}
\end{figure*}

\noindent \textbf{Weight Magnitude.}
In terms of weight magnitude, Figure~\ref{fig:rank_summary} shows that only $v_{31} < v_{11}$, indicating that \textit{Weight Clipping} is the sole intervention that reduces the weight magnitude of the backdoored agent, whereas all other interventions result in varying degrees of increase.
To further investigate this phenomenon, we record the weight magnitude of the second linear layer in the agent's actor network.
Figure~\ref{fig:weight_clipping}(a) corresponds to the conventional training scenario, and Figure~\ref{fig:weight_clipping}(b) to a backdoored agent without interventions.
The results reveal that backdoor attacks lead to a significant increase in the magnitude of certain weights (highlighted in red boxes).
This observation indicates that the weights strongly associated with the backdoor are sparse, rendering the backdoor pathways more fragile than those supporting benign tasks.

As shown in Figure~\ref{fig:weight_clipping}(c), \textit{Weight Clipping} clips all weights exceeding the threshold, permanently constraining them within predefined bounds.
Mechanistically, clipping disrupts both backdoor and benign pathways, inducing reconstruction competition that exacerbates non-stationarity in DRL training and degrades performance.
This resembles certain mitigation strategies proposed in the deep learning backdoor domain~\citep{li2022backdoor}.
\textit{Weight Clipping} has a limited impact on backdoor attacks in \textit{TM-Scratch}, primarily because (1) the overall weight magnitudes are relatively small, resulting in only a few weights being clipped, and (2) the actor network possesses sufficient parameter flexibility to rapidly reconstruct both benign and backdoor pathways after each clipping.
However, these properties no longer hold in \textit{TM-Post}, resulting in the suppression of backdoor attacks.
Figure~\ref{fig:weight_clipping_3d} in the Appendix presents this contrast through a 3D visualization.

Due to the sparsity of backdoor pathways, interventions that share intrinsic mechanisms with \textit{Weight Clipping} do not necessarily achieve high rankings in terms of weight magnitude (e.g., \textit{Shrink \& Perturb} and \textit{ReDo}).
\textit{Shrink \& Perturb} periodically applies mild compression to all weights, followed by the injection of small perturbations.
\textit{ReDo} resets a small subset of neurons that are dormant with respect to benign tasks, which may inadvertently disrupt backdoor-associated neurons.
As shown in Figure~\ref{fig:weight_clipping}, neurons that are strongly associated with backdoors tend to exhibit only weak relevance to benign tasks.
These two interventions disrupt backdoor pathways in a softer manner, resulting in limited competition during pathway reconstruction, and therefore only produce mild mitigation of backdoor attacks.

\hypothesis{(Finding 2) Activation Pathway Disruption}{Interventions involving noise injection, weight clipping, and neuron reset (e.g., \textit{Shrink \& Perturb}, \textit{Weight Clipping}, and \textit{ReDo}) disrupt activation pathways, inducing competitive reconstruction between backdoor and benign pathways under non-stationary DRL training.}

\noindent \textbf{Effective Rank.}
Figure~\ref{fig:rank_summary} shows that \textit{Spectral Normalization} achieves the highest average ranking in the effective rank dimension (i.e., $v_{42} = \min_{i \in P} v_{i2} = 1.81$).
\textit{Spectral Normalization} achieves this by normalizing each weight matrix with its largest singular value, thereby constraining the Lipschitz constant of the actor network and effectively compressing the agent's representation space.
This process implicitly enhances the relative importance of smaller singular values, allowing more directions in the parameter space to contribute to representation, effectively reducing the prominence of the trigger.
To intuitively illustrate these effects, we compute the normalized dot product~\citep{lyle2023understanding} of the actor network's gradients over 64 states (see Appendix~\ref{app:normalized_dot_product} for implementation details).

\begin{figure} [h]
    \centering
    \includegraphics[width=0.48\textwidth]{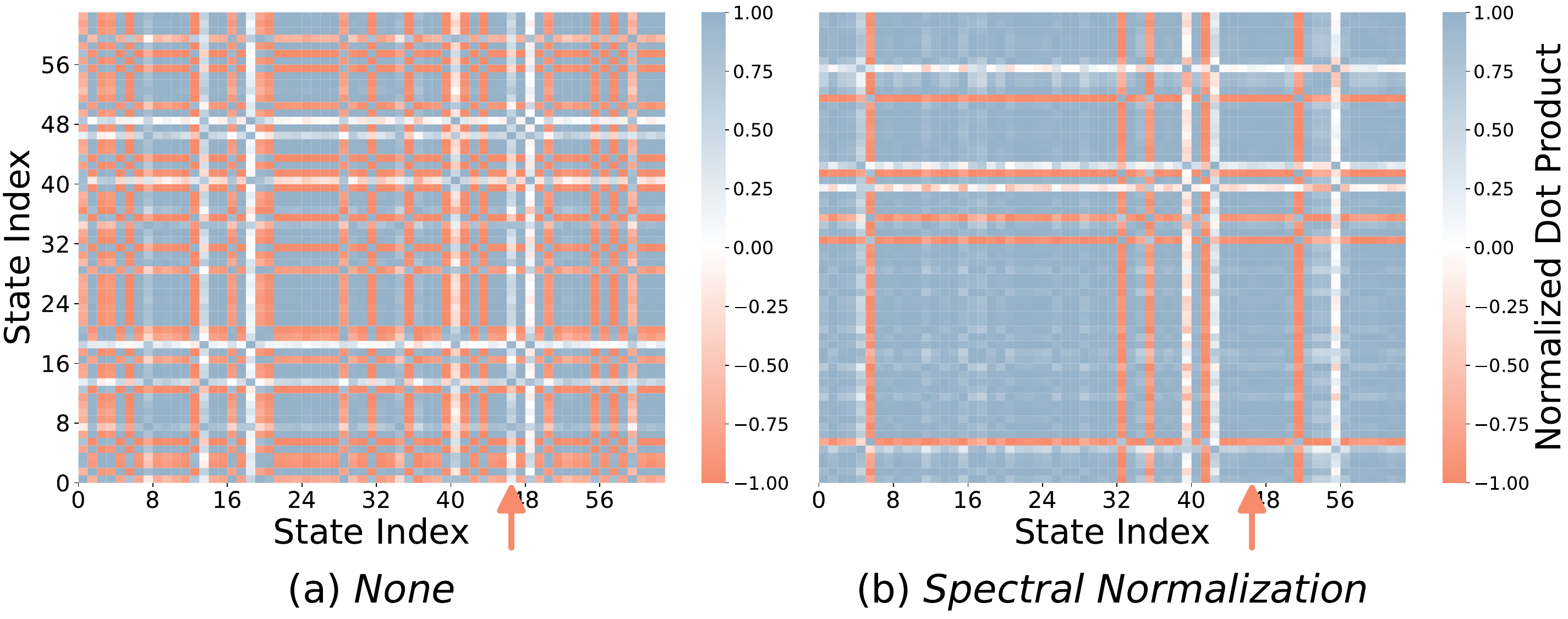}
    \caption{Normalized dot product of the actor network's gradients over 64 states. The red arrow marks the backdoor state.}
    \label{fig:spectral_norm}
\end{figure}

Figure~\ref{fig:spectral_norm} shows a typical case, where the red arrow marks the backdoor state (state index = 46).
Figure~\ref{fig:spectral_norm}(a) shows that, in the absence of interventions, the gradient direction of the backdoor state is close to 0 relative to other benign states, indicating that the backdoor and benign tasks exhibit orthogonality~\citep{zhang2024exploring}.
Figure~\ref{fig:spectral_norm}(b) shows that the gradient directions approach 1.00, indicating that \textit{Spectral Normalization} is capable of aligning backdoor gradients with those of benign states.
Consequently, backdoor pathways become dense and increasingly overlap with benign ones, forming shared pathways.
These shared pathways are harder to stabilize during non-stationary DRL training, resulting in fluctuations in backdoor attack performance.

\textit{Weight Decay} and \textit{Layer Normalization} also increase the effective rank (i.e., $v_{52} < v_{12}$ and $v_{62} < v_{12}$), and their effects on backdoor attacks follow mechanisms intrinsically similar to that of \textit{Spectral Normalization}.
\textit{Weight Decay} imposes explicit constraints on the distances between weights, guiding the backdoor representation to associate with more weights in a soft manner.
\textit{Layer Normalization} reduces internal covariate shift by normalizing activations within each layer, effectively smoothing the actor network's response to input perturbations, thereby making the representation of the backdoor state closer to that of benign states (see Figure~\ref{fig:layer_norm} in the Appendix).

\hypothesis{(Finding 3) Representation Space Compression}{Interventions that compress the representation space (e.g., \textit{Spectral Normalization}, \textit{Weight Decay}, and \textit{Layer Normalization}) shift the gradient directions of backdoor and benign states from orthogonal to aligned, transforming backdoor pathways from sparse to dense.}

\noindent \textbf{Loss Landscape Sharpness.}
Figure~\ref{fig:monitoring} illustrates that excessive loss landscape sharpness emerges as a distinctive hallmark of backdoor attacks compared to the other two pathological characteristics.
This occurs because backdoor attacks introduce pronounced heterogeneity into the state distribution, where triggers manifest as artificially constructed rare signals.
To establish the association between triggers and target actions from a limited number of transitions (typically assigned large backdoor rewards), DRL training induces sharp, localized gradient changes in the weight space, leading to a more peaked loss landscape.

Figure~\ref{fig:rank_summary} shows that \textit{SAM} is the only intervention that significantly reduces the loss landscape sharpness of the backdoored agent (i.e., $v_{83} = \min_{i \in P} v_{i3} = 2.72$).
Figure~\ref{fig:sam} further demonstrates that \textit{SAM} compresses the loss landscape sharpness induced by backdoor attacks to an average of 2.11\% across all cases.
Counterintuitively, rather than mitigating backdoor attacks, \textit{SAM} amplifies them.
The intrinsic mechanism is that \textit{SAM} captures the backdoor direction via the sharp losses~\citep{foret2021sharpness, zeng2024clibe} and amplifies the corresponding gradients, enabling the backdoor pathway to rapidly converge into a flat-minimum region that is robust to parameter perturbations.
This reduces the continuous competition between backdoor and benign pathways in the inherently non-stationary training process of DRL.
The effect is especially pronounced in \textit{TM-Post}, where reduced flexibility in the agent's parameter space hinders the formation of the backdoor pathway.

\begin{figure}[h]
    \centering
    \includegraphics[width=0.4\textwidth]{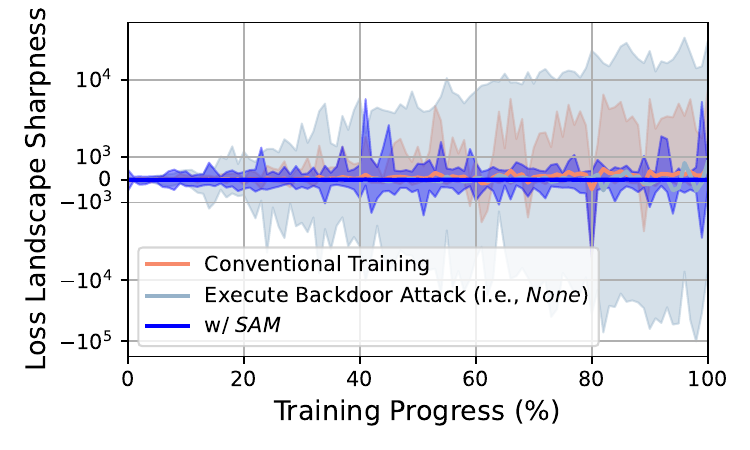}
    \caption{\textit{SAM} flattens the loss landscape.}
    \label{fig:sam}
\end{figure}

Appendix~\ref{app:Theoretical Proof} presents a theoretical proof leveraging influence functions~\citep{koh2017understanding} to demonstrate how \textit{SAM} amplifies backdoor threats.
Figure~\ref{fig:contour} in the Appendix presents the attack performance distributions of \textit{SAM} and other intervention settings via contour plots.

\hypothesis{(Finding 4) Backdoor Gradient Amplification}{\textit{SAM} amplifies the backdoor gradients and enables the backdoor pathway to rapidly converge while remaining robust to parameter perturbations, thereby alleviating competition between backdoor and benign tasks---a phenomenon especially pronounced in \textit{TM-Post}.}

\begin{figure*}
    \centering
    \includegraphics[width=0.95\textwidth]{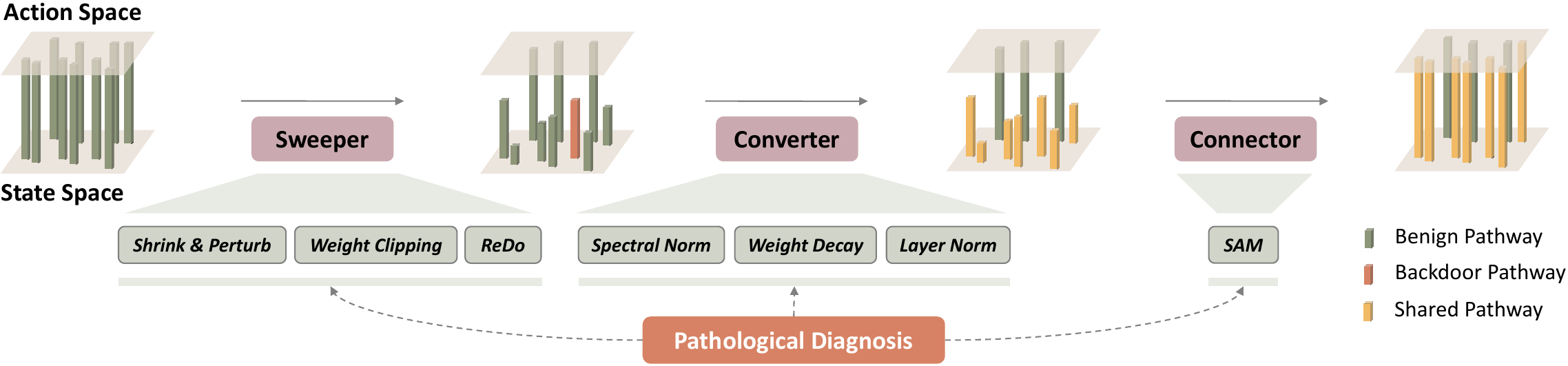}
    \caption{The \textit{SCC} framework.}
    \label{fig:scc}
\end{figure*}

\noindent \textbf{Remarks.}
(1) \textit{Why do interventions exert a more pronounced impact in TM-Post?}
In \textit{TM-Scratch}, the representations of benign and backdoor tasks are competitively co-constructed; thus, the effects of interventions on yet-to-be-stabilized representations are continuously reshaped and diluted by training dynamics.
In \textit{TM-Post}, the DRL agent has already established representations corresponding to the benign task.
Injecting a backdoor at this stage requires forcibly carving out pathways in the existing weights, which intensifies the competitive conflict between backdoor and benign tasks.
Consequently, interventions exert a more pronounced impact on DRL backdoor attacks in \textit{TM-Post}.
(2) \textit{Why does BTP respond more sensitively to interventions than ASR?}
Benign representations involve complex decision-making and rely on the coordinated activity of a large number of network parameters.
Interventions typically restrict parameter flexibility, making it more difficult to reconstruct disrupted benign representations.
In contrast, backdoor representations require only a small number of parameters or localized pathways and can be rapidly reconstructed even under constrained parameter flexibility.

\subsection{RQ3: Implications for Future Research}
\label{sec:rq3}

Numerous studies have demonstrated that appropriately combining interventions with distinct mechanisms yields additive effects in maintaining plasticity. Therefore, we analyze 5,640 cases to assess whether combined interventions induce novel effects on DRL backdoor attacks.
Detailed results are reported in Tables~\ref{tab:combinations_results_1} and~\ref{tab:combinations_results_2} of the Appendix.

\begin{table}[h]
\centering
\scriptsize
\setlength{\tabcolsep}{3.5pt} 
\renewcommand\arraystretch{1.4}
\caption{Combination impacts of \textit{SAM}.}
\label{tab:combination_sam}
\begin{tabular}{|c|c|ccc|}
\hline
\textbf{Combination} & \textit{None} & \textit{Plastic} & \textit{SLac} & \textit{SSW} \\
\hline
\textbf{ASR} & 0.178{ $\pm$0.157} & \cellcolor{mycitecolor2!20}0.368{ $\pm$0.144} & \cellcolor{mycitecolor2!40}0.417{ $\pm$0.146} & \cellcolor{mycitecolor2!60}0.418{ $\pm$0.092} \\
\textbf{BTP} & 0.745{ $\pm$0.230} & \cellcolor{mycitecolor2!20}0.724{ $\pm$0.362} & \cellcolor{mycitecolor2!40}0.816{ $\pm$0.276} & \cellcolor{mycitecolor2!60}0.915{ $\pm$0.131} \\
\hline
\textbf{PD} & N/A & \cellcolor{mycitecolor!10}9.43 & \cellcolor{mycitecolor!30}17.42 & \cellcolor{mycitecolor!50}18.64 \\
\hline
\end{tabular}
\end{table}

The results indicate that the mitigative effect of \textit{Swiss Cheese} is nearly identical to that of \textit{Layer Normalization} alone, whereas \textit{Lac} exhibits a less effective mitigation than the two original interventions.
This suggests that the mitigative effects of interventions on backdoor attacks are non-additive.
Counterintuitively, we find that combining these interventions with \textit{SAM} in \textit{TM-Post} may further amplify backdoor threats compared to using \textit{SAM} alone.
For instance, Table~\ref{tab:combination_sam} shows that across three robotic tasks, the ASR gains (from 0.178 to 0.418) and BTP gains (from 0.745 to 0.915) of \textit{Plastic}, \textit{SLac}, and \textit{SSW} increase progressively.
For more detailed numerical results, please refer to Table~\ref{tab:scc} in the Appendix.

\hypothesis{(Finding 5) Synergistic Catalysis}{In \textit{TM-Post}, interventions that disrupt activation pathways and compress the representation space synergistically amplify backdoor threats when combined with \textit{SAM}.}

\noindent \textbf{Robust Backdoor Injection.}
Motivated by these findings, we propose a conceptual framework, \textit{SCC} (see Figure~\ref{fig:scc}), for facilitating robust backdoor attacks in \textit{TM-Post}.
\textit{SCC} alters the internal properties of DRL backdoors and comprises three components: 
(1)~\textit{Sweeper} releases a subset of benign pathways in the well-trained DRL agent, enabling the construction of backdoor pathways. Its design can leverage interventions such as \textit{Shrink \& Perturb}, \textit{Weight Clipping}, and \textit{ReDo}.
(2)~\textit{Converter} aligns backdoor and benign gradients, imparting multi-pathway characteristics to the backdoor and mitigating the vulnerability of sparse backdoor pathways. Its design can leverage interventions such as \textit{Spectral Normalization}, \textit{Weight Decay}, and \textit{Layer Normalization}.
(3)~\textit{Connector} ensures stable joint construction of backdoor and benign representations across multiple pathways. Its design can leverage interventions such as \textit{SAM}.

Under the non-stationary dynamics of DRL, multi-pathway representations exhibit greater robustness than single pathways, accounting for the heightened backdoor threat posed by \textit{Plastic}, \textit{SLac}, and \textit{SSW} relative to \textit{SAM}.
To elucidate the causes of performance variations among the three combinations, we propose a novel notion, \textit{Pathological Diagnosis (PD)}, which quantifies the pathological distances among interventions in a combination.

Specifically, \textit{Pathological Diagnosis} involves two steps:
(1)~\textit{Compute Pairwise Distance}: The pairwise distance between two interventions is defined as the Euclidean distance between their pathological vectors: $d(p_i, p_j) = \| \mathbf{v}(p_i) - \mathbf{v}(p_j) \|_2$.
(2)~\textit{Compute Pathological Distance}: For a combination $A$, the pathological distance is defined as the sum of pairwise distances among all interventions in $A$:
$PD(A) = \sum_{1 \leq i < j \leq |A|} d(p_i, p_j)$,
where $|A|$ denotes the number of combined interventions in $A$.

For instance, \textit{Swiss Cheese} comprises \textit{Weight Decay} and \textit{Layer Normalization}, with $\mathbf{v}(p_5)$ = (6.02, 3.15, 5.09) and $\mathbf{v}(p_6)$ = (5.68, 3.23, 4.70), yielding a pathological diagnosis result of $PD(\textit{Swiss Cheese})$ = 0.52.
We then obtain $PD(\textit{Plastic})$ = 9.43, $PD(\textit{SLac})$ = 17.42, and $PD(\textit{SSW})$ = 18.64.
This implies that increasing the pathological distances among the three components facilitates the amplification of DRL backdoor threats (see Table~\ref{tab:combination_sam}), as they minimally interfere with each other at the pathological level.
Building on this insight, \textit{SCC} further enables pre-deployment diagnosis, especially in emerging deployments where multiple interventions are jointly applied under backdoor-sensitive conditions.

\noindent \textbf{Sharpness-Based Detection.}
We find that abnormal loss landscape sharpness emerges as a salient external manifestation of DRL backdoor attacks (as shown in Figure~\ref{fig:monitoring}(c)).
With the exception of \textit{SAM}, most interventions exacerbate this phenomenon by rendering the loss landscape even sharper (e.g., $v_{33} > v_{13}$).
These observations highlight sharpness-based detection as a promising direction for future exploration.
For instance, a defender monitoring sharpness in real time throughout the agent's training may detect abnormal spikes or drops that signal potential backdoor threats.
Two challenges merit further investigation: first, sharpness exhibits substantial variation across different DRL tasks, complicating the establishment of a unified detection threshold; second, other factors that could induce abnormal sharpness remain insufficiently understood, and disentangling these sources is critical for reducing false positives.

\section{Conclusion}
\label{sec:conclusion}

This study investigates the impacts of interventions on existing DRL backdoor attacks. 
We empirically study 14,664 representative cases and reveal that interventions have heterogeneous impacts on DRL backdoor threats, and these impacts are even more pronounced in post-training scenarios.
Through pathological analysis, we attribute these effects to three intrinsic mechanisms: activation pathway disruption, representation space compression, and backdoor gradient amplification.
These findings motivate the initial proposal of \textit{SCC} and sharpness-based detection---two key insights that will guide future research on DRL backdoor threats.
Overall, this study seeks to promote the consideration of potential threats in tandem with DRL advancements, thereby contributing to the foundation for secure deployment.

\section*{Acknowledgements}

We sincerely appreciate the insightful comments from the Area Chair and the anonymous reviewers, whose constructive suggestions have helped to further improve this manuscript. 
We would like to thank Rui Zeng, Li Shen, and Jiawen Wan for their valuable assistance in the technical aspects and presentation of this work.

This work was partly supported by the New Generation Artificial Intelligence-National Science and Technology Major Project under No. 2025ZD0123503, NSFC under No. U2441239, U24A20336, 62502432 and 62402379, the China Postdoctoral Science Foundation under No. 2024M762829, 2025M781523 and 2025M781522, Zhejiang Key Laboratory of Decision Intelligence under No. 2025E10006, Zhejiang Provincial Natural Science Foundation Exploration of China under No. LMS26F020003, State Key Laboratory of Cryptography and Digital Economy Security under No. KFYB2504, the Zhejiang Provincial Natural Science Foundation under No. LD24F020002, the ``Pioneer and Leading Goose'' R\&D Program of Zhejiang under No. 2025C02033 and 2025C01082, the China Postdoctoral Science Foundation under No. 2024M762829, and the Youth Innovation Team of Shaanxi Universities.

\section*{Impact Statement}

We discuss the impact of this work from three perspectives:

\noindent \textbf{Stakeholder Analysis.}
We conduct a comprehensive stakeholder analysis to identify the primary parties engaged in researching DRL backdoor threats.
These stakeholders include research institutions, universities, companies, and practitioners actively involved in advancing DRL for cutting-edge scientific problems and real-world applications.

\noindent \textbf{Potential Outcomes.}
This study aims to raise awareness among institutions and individuals advancing DRL research and societal progress of the latent risks posed by backdoor threats.
At the same time, it encourages practitioners to not only pursue performance improvements in DRL algorithms but also consciously consider the security implications of techniques such as plasticity interventions.
This, in turn, can drive the development of more robust countermeasures.
As the attack pipeline for DRL backdoors represents an objective reality, ignoring its potential threats is futile.
Instead, directly confronting these challenges and mitigating the associated risks form the core motivation of this study.

\noindent \textbf{Responsible Dissemination.}
Consistent with our commitment to ethical research, we plan to disseminate the findings and code associated with this study responsibly.
Alongside the open-source release, we will include a statement addressing the ethical considerations surrounding this work.

\clearpage

\nocite{langley00}

\bibliography{example_paper}

@inproceedings{lillicrap2015continuous,
  title={Continuous control with deep reinforcement learning},
  author={Lillicrap, Timothy P. and Hunt, Jonathan J. and Pritzel, Alexander and Heess, Nicolas and Erez, Tom and Tassa, Yuval and Silver, David and Wierstra, Daan},
  booktitle={ICLR},
  year={2016}
}

@inproceedings{lowe2017multi,
  title={Multi-agent actor-critic for mixed cooperative-competitive environments},
  author={Lowe, Ryan and Wu, Yi I and Tamar, Aviv and Harb, Jean and Pieter Abbeel, OpenAI and Mordatch, Igor},
  booktitle={NeurIPS},
  year={2017}
}

@inproceedings{zeng2024clibe,
  title={CLIBE: detecting dynamic backdoors in transformer-based NLP models},
  author={Zeng, Rui and Chen, Xi and Pu, Yuwen and Zhang, Xuhong and Du, Tianyu and Ji, Shouling},
  booktitle={NDSS},
  year={2025}
}

@article{li2022backdoor,
  title={Backdoor learning: A survey},
  author={Li, Yiming and Jiang, Yong and Li, Zhifeng and Xia, Shu-Tao},
  journal={IEEE transactions on neural networks and learning systems},
  year={2024}
}

@inproceedings{koh2017understanding,
  title={Understanding black-box predictions via influence functions},
  author={Koh, Pang Wei and Liang, Percy},
  booktitle={ICML},
  year={2017}
}

@inproceedings{doan2021theoretical,
  title={{A Theoretical Analysis of Catastrophic Forgetting Through the NTK Overlap Matrix}},
  author={Doan, Thang and Bennani, Mehdi Abbana and Mazoure, Bogdan and Rabusseau, Guillaume and Alquier, Pierre},
  booktitle={International Conference on Artificial Intelligence and Statistics},
  year={2021}
}

@inproceedings{zhang2024exploring,
  title={{Exploring the Orthogonality and Linearity of Backdoor Attacks}},
  author={Zhang, Kaiyuan and Cheng, Siyuan and Shen, Guangyu and Tao, Guanhong and An, Shengwei and Makur, Anuran and Ma, Shiqing and Zhang, Xiangyu},
  booktitle={IEEE S\&P},
  year = {2024}
}

@article{hong2020effectiveness,
  title={{On the Effectiveness of Mitigating Data Poisoning Attacks with Gradient Shaping}},
  author={Hong, Sanghyun and Chandrasekaran, Varun and Kaya, Yi{\u{g}}itcan and Dumitra{\c{s}}, Tudor and Papernot, Nicolas},
  journal={arXiv},
  year={2020}
}

@inproceedings{yang2022not,
  title={{Not All Poisons are Created Equal: Robust Training Against Data Poisoning}},
  author={Yang, Yu and Liu, Tian Yu and Mirzasoleiman, Baharan},
  booktitle={ICML},
  year={2022}
}

@article{pu2024train,
  title={{How to Train a Backdoor-Robust Model on a Poisoned Dataset without Auxiliary Data?}},
  author={Pu, Yuwen and Chen, Jiahao and Zhou, Chunyi and Feng, Zhou and Li, Qingming and Hu, Chunqiang and Ji, Shouling},
  journal={IEEE Transactions on Dependable and Secure Computing},
  year={2025}
}

@inproceedings{cao2019lidar,
  title={{Adversarial Sensor Attack on LiDAR-based Perception in Autonomous Driving}},
  author={Cao, Yulong and Xiao, Chaowei and Cyr, Benjamin and Zhou, Yimeng and Park, Won and Rampazzi, Sara and Chen, Qi Alfred and Fu, Kevin and Mao, Z Morley},
  booktitle={ACM CCS},
  year={2019}
}

@inproceedings{xu2023sok,
  title={{SoK: Rethinking Sensor Spoofing Attacks against Robotic Vehicles from a Systematic View}},
  author={Xu, Yuan and Han, Xingshuo and Deng, Gelei and Li, Jiwei and Liu, Yang and Zhang, Tianwei},
  booktitle={EuroS\&P},
  year={2023}
}

@article{kaufmann2023champion,
  title={{Champion-Level Drone Racing Using Deep Reinforcement Learning}},
  author={Kaufmann, Elia and Bauersfeld, Leonard and Loquercio, Antonio and Müller, Matthias and Koltun, Vladlen and Scaramuzza, Davide},
  journal={Nature},
  year={2023}
}

@inproceedings{pan2019you,
  title={{How You Act Tells a Lot: Privacy-Leaking Attack on Deep Reinforcement Learning.}},
  author={Pan, Xinlei and Wang, Weiyao and Zhang, Xiaoshuai and Li, Bo and Yi, Jinfeng and Song, Dawn},
  booktitle={AAMAS},
  year={2019}
}

@article{DZCSJCCBZ25tdsc,
  title={{Revealing the Risk of Hyper-parameter Leakage in Deep Reinforcement Learning Models}}, 
  author={Du, Linkang and Zhang, Zhikun and Chen, Min and Sun, Mingyang and Ji, Shouling and Cheng, Peng and Chen, Jiming and Backes, Michael and Zhang, Yang},
  journal={IEEE Transactions on Dependable and Secure Computing}, 
  year={2025}
}

@article{schulman2017proximal,
  title={{Proximal Policy Optimization Algorithms}},
  author={Schulman, John and Wolski, Filip and Dhariwal, Prafulla and Radford, Alec and Klimov, Oleg},
  journal={arXiv},
  year={2017}
}

@inproceedings{ye2025reinforcement,
  title={{Reinforcement Unlearning}},
  author={Ye, Dayong and Zhu, Tianqing and Zhu, Congcong and Wang, Derui and Gao, Kun and Shi, Zewei and Shen, Sheng and Zhou, Wanlei and Xue, Minhui},
  booktitle={NDSS},
  year={2025}
}

@inproceedings{gong2025trajdeleter,
  title={{TrajDeleter: Enabling Trajectory Forgetting in Offline Reinforcement Learning Agents}},
  author={Gong, Chen and Li, Kecen and Yao, Jin and Wang, Tianhao},
  booktitle={NDSS},
  year={2025}
}

@inproceedings{du2024orl,
  title={{ORL-AUDITOR: Dataset Auditing in Offline Deep Reinforcement Learning}},
  author={Du, Linkang and Chen, Min and Sun, Mingyang and Ji, Shouling and Cheng, Peng and Chen, Jiming and Zhang, Zhikun},
  booktitle={NDSS},
  year={2024}
}

@inproceedings{chen2021stealing,
  title={{Stealing Deep Reinforcement Learning Models for Fun and Profit}}, 
  author={Chen, Kangjie and Guo, Shangwei and Zhang, Tianwei and Xie, Xiaofei and Liu, Yang},
  booktitle={Asia CCS},
  year={2021}
}

@inproceedings{chen2021temporal,
  title={{Temporal Watermarks for Deep Reinforcement Learning Models}},
  author={Chen, Kangjie and Guo, Shangwei and Zhang, Tianwei and Li, Shuxin and Liu, Yang},
  booktitle={AAMAS},
  year={2021}
}

@inproceedings{betley2025emergent,
  title={Emergent Misalignment: Narrow finetuning can produce broadly misaligned LLMs},
  author={Betley, Jan and Tan, Daniel and Warncke, Niels and Sztyber-Betley, Anna and Bao, Xuchan and Soto, Mart{\'\i}n and Labenz, Nathan and Evans, Owain},
  booktitle={ICML},
  year={2025}
}

@inproceedings{baumgartner2024best,
  title={{Best-of-Venom: Attacking RLHF by Injecting Poisoned Preference Data}},
  author={Baumg{\"a}rtner, Tim and Gao, Yang and Alon, Dana and Metzler, Donald},
  booktitle={COLM},
  year={2024}
}

@inproceedings{pathmanathan2025poisoning,
  title={{Is Poisoning a Real Threat to LLM Alignment? Maybe More so Than You Think}},
  author={Pathmanathan, Pankayaraj and Chakraborty, Souradip and Liu, Xiangyu and Liang, Yongyuan and Huang, Furong},
  booktitle={AAAI},
  year={2025}
}

@article{li2024online,
  title={{Online Poisoning Attack against Reinforcement Learning under Black-box Environments}},
  author={Li, Jianhui and Zhang, Bokang and Wu, Junfeng},
  journal={arXiv},
  year={2024}
}

@inproceedings{mohammadi2023implicit,
  title={{Implicit Poisoning attacks in Two-Agent Reinforcement Learning: Adversarial Policies for Training-Time Attacks}},
  author={Mohammadi, Mohammad and N{\"o}ther, Jonathan and Mandal, Debmalya and Singla, Adish and Radanovic, Goran},
  booktitle={AAMAS},
  year={2023}
}

@inproceedings{ma2024sub,
  title={{SUB-PLAY: Adversarial Policies against Partially Observed Multi-Agent Reinforcement Learning Systems}},
  author={Ma, Oubo and Pu, Yuwen and Du, Linkang and Dai, Yang and Wang, Ruo and Liu, Xiaolei and Wu, Yingcai and Ji, Shouling},
  booktitle={ACM CCS},
  year={2024}
}

@inproceedings{wang2023adversarial,
  title={{Adversarial Policies Beat Superhuman Go AIs}},
  author={Wang, Tony Tong and Gleave, Adam and Tseng, Tom and Pelrine, Kellin and Belrose, Nora and Miller, Joseph and Dennis, Michael D and Duan, Yawen and Pogrebniak, Viktor and Levine, Sergey and others},
  booktitle={ICML},
  year={2023}
}

@inproceedings{guo2021adversarial,
  title={{Adversarial Policy Learning in Two-Player Competitive Games}},
  author={Guo, Wenbo and Wu, Xian and Huang, Sui and Xing, Xinyu},
  booktitle={ICML},
  year={2021}
}

@inproceedings{gleave2020adversarial,
  title={{Adversarial Policies: Attacking Deep Reinforcement Learning}},
  author={Gleave, Adam and Dennis, Michael and Wild, Cody and Kant, Neel and Levine, Sergey and Russell, Stuart},
  booktitle={ICML},
  year={2020}
}

@inproceedings{carlini2017towards,
  title={{Towards Evaluating the Robustness of Neural Networks}},
  author={Carlini, Nicholas and Wagner, David},
  booktitle={IEEE S\&P},
  year={2017}
}

@inproceedings{tu2021adversarial,
  title={{Adversarial Attacks on Multi-Agent Communication}},
  author={Tu, James and Wang, Tsunhsuan and Wang, Jingkang and Manivasagam, Sivabalan and Ren, Mengye and Urtasun, Raquel},
  booktitle={ICCV},
  year={2021}
}

@inproceedings{sun2020stealthy,
  title={{Stealthy and Efficient Adversarial Attacks against Deep Reinforcement Learning}},
  author={Sun, Jianwen and Zhang, Tianwei and Xie, Xiaofei and Ma, Lei and Zheng, Yan and Chen, Kangjie and Liu, Yang},
  booktitle={AAAI},
  year={2020}
}

@article{huang2017adversarial,
  title={{Adversarial Attacks on Neural Network Policies}},
  author={Huang, Sandy and Papernot, Nicolas and Goodfellow, Ian and Duan, Yan and Abbeel, Pieter},
  journal={arXiv},
  year={2017}
}

@inproceedings{behzadan2017vulnerability,
  title={{Vulnerability of deep reinforcement learning to policy induction attacks}},
  author={Behzadan, Vahid and Munir, Arslan},
  booktitle={International conference on machine learning and data mining in pattern recognition},
  year={2017}
}

@article{xia2023rlid,
  title={{RLID-V: Reinforcement Learning-Based Information Dissemination Policy Generation in VANETs}},
  author={Xia, Yingjie and Liu, Xuejiao and Ou, Jing and Ma, Oubo},
  journal={IEEE Transactions on Intelligent Transportation Systems},
  year={2023}
}

@inproceedings{qiao2025slot,
  title={{Slot: Provenance-Driven APT Detection through Graph Reinforcement Learning}},
  author={Qiao, Wei and Feng, Yebo and Li, Teng and Ma, Zhuo and Shen, Yulong and Ma, JianFeng and Liu, Yang},
  booktitle={ACM CCS},
  year={2024}
}

@inproceedings{wilson2025predictive,
  title={{Predictive Response Optimization: Using Reinforcement Learning to Fight Online Social Network Abuse}},
  author={Wilson, Garrett and Goh, Geoffrey and Jiang, Yan and Gupta, Ajay and Wang, Jiaxuan and Freeman, David and Dinuzzo, Francesco},
  booktitle={USENIX Security},
  year={2025}
}

@inproceedings{wang2024research,
  title={{Research on Autonomous Robots Navigation Based on Reinforcement Learning}},
  author={Wang, Zixiang and Yan, Hao and Wang, Zhuoyue and Xu, Zhengjia and Wu, Zhizhong and Wang, Yining},
  booktitle={3rd International Conference on Robotics, Artificial Intelligence and Intelligent Control (RAIIC)},
  year={2024}
}

@inproceedings{wang2025research,
  title={{Deep Reinforcement Learning for Robotics: A Survey of Real-World Successes}},
  author={Tang, Chen and Abbatematteo, Ben and Hu, Jiaheng and Chandra, Rohan and Mart{\'\i}n-Mart{\'\i}n, Roberto and Stone, Peter},
  booktitle={AAAI},
  year={2025}
}

@article{ma2026shatter,
      title={{Shattering the Echo Chamber: Hidden Safeguards in Manuscripts Against the AI Takeover of Peer Review}}, 
      author={Oubo Ma and Ruixiao Lin and Jiahao Chen and Yuan Su and Yong Yang and Shouling Ji},
      journal={arXiv},
      year={2026}
}

@article{chentemporal,
  title={{Temporal Logic-Based Multi-Vehicle Backdoor Attacks against Offline RL Agents in End-to-end Autonomous Driving}},
  author={Chen, Xuan and Feng, Shiwei and Xiong, Zikang and An, Shengwei and Mao, Yunshu and Tao, Guanhong and Guo, Wenbo and Zhang, Xiangyu},
  journal={NeurIPS},
  year={2025}
}

@inproceedings{rathbun2025adversarial,
  title={{Adversarial Inception Backdoor Attacks against Reinforcement Learning}},
  author={Rathbun, Ethan and Oprea, Alina and Amato, Christopher},
  booktitle={ICML},
  year={2025}
}

@article{foret2021sharpness,
  title={{Sharpness-Aware Minimization for Efficiently Improving Generalization}},
  author={Foret, Pierre and Kleiner, Ariel and Mobahi, Hossein and Neyshabur, Behnam},
  journal={ICLR},
  year={2021}
}

@inproceedings{ma2024revisiting,
  title={{Revisiting Plasticity in Visual Reinforcement Learning: Data, Modules and Training Stages}},
  author={Ma, Guozheng and Li, Lu and Zhang, Sen and Liu, Zixuan and Wang, Zhen and Chen, Yixin and Shen, Li and Wang, Xueqian and Tao, Dacheng},
  booktitle={ICLR},
  year={2024}
}

@article{raffin2021stable,
  title={{Stable-Baselines3: Reliable Reinforcement Learning Implementations}},
  author={Raffin, Antonin and Hill, Ashley and Gleave, Adam and Kanervisto, Anssi and Ernestus, Maximilian and Dormann, Noah},
  journal={Journal of Machine Learning Research},
  year={2021}
}

@MISC{coumans2021,
  title={{PyBullet, A Python Module for Physics Simulation for Games, Robotics and Machine Learning}},
  author ={Coumans, Erwin and Bai, Yunfei},
  howpublished={\url{http://pybullet.org}},
  year={2021}
}

@article{gym,
  title={{OpenAI Gym}},
  author={Brockman, Greg and Cheung, Vicki and Pettersson, Ludwig and Schneider, Jonas and Schulman, John and Tang, Jie and Zaremba, Wojciech},
  journal={arXiv},
  year={2016}
}

@article{li2025toobadrl,
  title={{TooBadRL: Trigger Optimization to Boost Effectiveness of Backdoor Attacks on Deep Reinforcement Learning}},
  author={Li, Songze and Zhang, Mingxuan and Ma, Oubo and Wei, Kang and Ji, Shouling},
  journal={arXiv},
  year={2025}
}

@article{dai2025trojanto,
  title={{TrojanTO: Action-Level Backdoor Attacks Against Trajectory Optimization Models}},
  author={Dai, Yang and Ma, Oubo and Zhang, Longfei and Liang, Xingxing and Cao, Xiaochun and Ji, Shouling and Zhang, Jiaheng and Huang, Jincai and Shen, Li},
  journal={ICLR},
  year={2026}
}

@article{ma2025unidoor,
  title={{UNIDOOR: A Universal Framework for Action-Level Backdoor Attacks in Deep Reinforcement Learning}},
  author={Ma, Oubo and Du, Linkang and Dai, Yang and Zhou, Chunyi and Li, Qingming and Pu, Yuwen and Ji, Shouling},
  journal={arXiv},
  year={2025}
}

@inproceedings{rathbun2024sleepernets,
  title={{Sleepernets: Universal Backdoor Poisoning Attacks Against Reinforcement Learning Agents}},
  author={Rathbun, Ethan and Amato, Christopher and Oprea, Alina},
  booktitle={NeurIPS},
  year={2024}
}

@inproceedings{cui2024badrl,
  title={{BadRL: Sparse Targeted Backdoor Attack Against Reinforcement Learning}},
  author={Cui, Jing and Han, Yufei and Ma, Yuzhe and Jiao, Jianbin and Zhang, Junge},
  booktitle={AAAI},
  year={2024}
}

@inproceedings{kiourti2020trojdrl,
  title={{Trojdrl: Evaluation of Backdoor Attacks on Deep Reinforcement Learning}},
  author={Kiourti, Panagiota and Wardega, Kacper and Jha, Susmit and Li, Wenchao},
  booktitle={2020 57th ACM/IEEE Design Automation Conference (DAC)},
  year={2020}
}

@article{liu2025fox,
  title={{Fox in the Henhouse: Supply-Chain Backdoor Attacks Against Reinforcement Learning}},
  author={Liu, Shijie and Cullen, Andrew C and Montague, Paul and Erfani, Sarah and Rubinstein, Benjamin IP},
  journal={arXiv},
  year={2025}
}

@inproceedings{gong2024baffle,
  title={{Baffle: Hiding Backdoors in Offline Reinforcement Learning Datasets}},
  author={Gong, Chen and Yang, Zhou and Bai, Yunpeng and He, Junda and Shi, Jieke and Li, Kecen and Sinha, Arunesh and Xu, Bowen and Hou, Xinwen and Lo, David and others},
  booktitle={IEEE S\&P},
  year={2024}
}

@inproceedings{wang2021backdoorl,
  title={{BACKDOORL: Backdoor Attack Against Competitive Reinforcement Learning}},
  author={Wang, Lun and Javed, Zaynah and Wu, Xian and Guo, Wenbo and Xing, Xinyu and Song, Dawn},
  booktitle={IJCAI},
  year={2021}
}

@article{yang2019design,  
  title={{Design of Intentional Backdoors in Sequential Models}},
  author={Yang, Zhaoyuan and Iyer, Naresh and Reimann, Johan and Virani, Nurali},
  journal={arXiv},
  year={2019}
}

@article{klein2024plasticity,
  title={{Plasticity Loss in Deep Reinforcement Learning: A Survey}},
  author={Klein, Timo and Miklautz, Lukas and Sidak, Kevin and Plant, Claudia and Tschiatschek, Sebastian},
  journal={arXiv},
  year={2024}
}

@inproceedings{hernandezreinitializing,
  title={{Reinitializing Weights Vs Hidden Units for Maintaining Plasticity in Neural Networks}},
  author={Hernandez-Garcia, J Fernando and Dohare, Shibhansh and Sutton, Richard S},
  booktitle={Conference on Lifelong Learning Agents},
  year={2025}
}

@inproceedings{lyle2024disentangling,
  title={{Disentangling the Causes of Plasticity Loss in Neural Networks}},
  author={Lyle, Clare and Zheng, Zeyu and Khetarpal, Khimya and van Hasselt, Hado and Pascanu, Razvan and Martens, James and Dabney, Will},
  booktitle={Proceedings of The 3rd Conference on Lifelong Learning Agents},
  year={2025}
}

@inproceedings{lee2024slow,
  title={{Slow and Steady Wins the Race: Maintaining Plasticity With Hare and Tortoise Networks}},
  author={Lee, Hojoon and Cho, Hyeonseo and Kim, Hyunseung and Kim, Donghu and Min, Dugki and Choo, Jaegul and Lyle, Clare},
  booktitle={ICML},
  year={2024}
}

@inproceedings{lyle2024normalization,
  title={{Normalization and Effective Learning Rates in Reinforcement Learning}},
  author={Lyle, Clare and Zheng, Zeyu and Khetarpal, Khimya and Martens, James and van Hasselt, Hado P and Pascanu, Razvan and Dabney, Will},
  booktitle={NeurIPS},
  year={2024}
}

@inproceedings{elsayed2024weight,
  title={{Weight Clipping for Deep Continual and Reinforcement Learning}},
  author={Elsayed, Mohamed and Lan, Qingfeng and Lyle, Clare and Mahmood, A Rupam},
  booktitle={Reinforcement Learning Conference},
  year={2024}
}

@article{dohare2024loss,
  title={{Loss of Plasticity in Deep Continual Learning}},
  author={Dohare, Shibhansh and Hernandez-Garcia, J Fernando and Lan, Qingfeng and Rahman, Parash and Mahmood, A Rupam and Sutton, Richard S},
  journal={Nature},
  year={2024}
}

@inproceedings{lee2023plastic,
  title={{Plastic: Improving Input and Label Plasticity for Sample Efficient Reinforcement Learning}},
  author={Lee, Hojoon and Cho, Hanseul and Kim, Hyunseung and Gwak, Daehoon and Kim, Joonkee and Choo, Jaegul and Yun, Se-Young and Yun, Chulhee},
  booktitle={NeurIPS},
  year={2023}
}

@inproceedings{nikishin2023deep,
  title={{Deep Reinforcement Learning With Plasticity Injection}},
  author={Nikishin, Evgenii and Oh, Junhyuk and Ostrovski, Georg and Lyle, Clare and Pascanu, Razvan and Dabney, Will and Barreto, Andr{\'e}},
  booktitle={NeurIPS},
  year={2023}
}

@inproceedings{abbas2023loss,
  title={{Loss of Plasticity in Continual Deep Reinforcement Learning}},
  author={Abbas, Zaheer and Zhao, Rosie and Modayil, Joseph and White, Adam and Machado, Marlos C},
  booktitle={Conference on Lifelong Learning Agents},
  year={2023}
}

@inproceedings{lyle2023understanding,
  title={{Understanding Plasticity in Neural Networks}},
  author={Lyle, Clare and Zheng, Zeyu and Nikishin, Evgenii and Pires, Bernardo Avila and Pascanu, Razvan and Dabney, Will},
  booktitle={ICML},
  year={2023}
}

@inproceedings{sokar2023dormant,
  title={{The Dormant Neuron Phenomenon in Deep Reinforcement Learning}},
  author={Sokar, Ghada and Agarwal, Rishabh and Castro, Pablo Samuel and Evci, Utku},
  booktitle={ICML},
  year={2023}
}

@inproceedings{lyle2022understanding,
  title={{Understanding and Preventing Capacity Loss in Reinforcement Learning}},
  author={Lyle, Clare and Rowland, Mark and Dabney, Will},
  booktitle={ICLR},
  year={2022}
}

@inproceedings{nikishin2022primacy,
  title={{The Primacy Bias in Deep Reinforcement Learning}},
  author={Nikishin, Evgenii and Schwarzer, Max and D’Oro, Pierluca and Bacon, Pierre-Luc and Courville, Aaron},
  booktitle={ICML},
  year={2022}
}

@inproceedings{gogianu2021spectral,
  title={{Spectral Normalisation for Deep Reinforcement Learning: An Optimisation Perspective}},
  author={Gogianu, Florin and Berariu, Tudor and Rosca, Mihaela C and Clopath, Claudia and Busoniu, Lucian and Pascanu, Razvan},
  booktitle={ICML},
  year={2021}
}

@inproceedings{ash2020warm,
  title={{On Warm-Starting Neural Network Training}},
  author={Ash, Jordan and Adams, Ryan P},
  booktitle={NeurIPS},
  year={2020}
}
\bibliographystyle{icml2026}

\newpage
\appendix
\onecolumn

\section{Further Background}
\label{app:further_background}

DRL is driving advances across various domains and has been widely adopted in security research~\citep{wilson2025predictive,qiao2025slot,xia2023rlid}.
However, despite these advances, DRL faces numerous potential security challenges, including adversarial attacks, poisoning attacks, and issues related to copyright protection.

\noindent \textbf{Adversarial Attacks.}
The most straightforward form of adversarial attack involves adding perturbations to the environment or the observations, thereby disrupting the victim agent's sequential decision-making~\citep{behzadan2017vulnerability,huang2017adversarial,sun2020stealthy,tu2021adversarial}. Such approaches draw inspiration from adversarial examples in deep learning~\citep{carlini2017towards}. Another novel class of attacks is adversarial policies~\citep{gleave2020adversarial,guo2021adversarial,wang2023adversarial,ma2024sub}, which exploit the tendency of DRL algorithms to overfit and the lack of Nash equilibrium guarantees in competitive environments to rapidly uncover vulnerabilities in the victim agent's policy.
Such attacks can be used not only to achieve indirect manipulation of actions but also to evaluate robustness lower bounds.

\noindent \textbf{Poisoning Attacks.}
Compared to single-step decision systems, altering the long-term objectives of sequential decision-making systems through poisoning is more challenging.
Existing studies~\citep{mohammadi2023implicit,li2024online} demonstrate that the essence of poisoning attacks lies in maliciously manipulating the reward function or transition data to steer the agent away from its intended objectives and induce policy updates toward an adversary-predefined goal.
Such attack techniques have also been extended to safety alignment~\citep{baumgartner2024best,pathmanathan2025poisoning,betley2025emergent} and are often employed as an effective means to inject DRL backdoor attacks.

\noindent \textbf{Copyright Protection.}
With the growing practical applications of DRL, issues of copyright protection have attracted increasing attention.
The unauthorized extraction of policy networks by adversaries may give rise to copyright disputes~\citep{chen2021stealing}, while watermarking techniques offer partial mitigation~\citep{chen2021temporal}.
In addition, both training environments and hyperparameters in DRL pose risks of privacy leakage~\citep{pan2019you,DZCSJCCBZ25tdsc}.
Moreover, online DRL paradigm relies on interaction experiences with the environment, which assigns intrinsic value to the environment itself.
Reinforcement unlearning techniques~\citep{ye2025reinforcement} can selectively remove the learned knowledge of the training environment from the agent's memory.
Offline DRL paradigm relies on expert-generated trajectory data, where trajectory-level auditing mechanisms~\citep{du2024orl} and trajectory unlearning techniques~\citep{gong2025trajdeleter} can be employed to enable copyright protection.

\section{Supplementary Information on the Threat Model}
\label{app:Backdoor Injection}

\subsection{Clarification of the Threat Model}

\textit{TM-Post} represents a broader class of realistic scenarios where a trained agent may be modified prior to deployment. 
Examples include third-party model hosting or sharing, downstream fine-tuning or repackaging, and adversarial modification by the original provider. 
Moreover, injecting a backdoor into a well-trained agent is often more convenient and cost-effective than training from scratch, motivating our focus on post-training and multi-backdoor scenarios.

For the current \textit{TM-Post} setting, two potential concerns may arise:

\textbf{Concern 1.} \textit{In TM-Post, is the adversary required to strictly adhere to the interventions embedded by the provider?}

In \textit{TM-Post}, the adversary is not necessarily constrained to follow the interventions embedded by the provider. 
For example, the adversary is allowed to remove interventions such as \textit{Weight Clipping} and \textit{ReDo}, as doing so has a negligible impact on the post-training performance of the DRL agent.
In contrast, the removal of interventions such as \textit{Spectral Normalization} or \textit{Layer Normalization} is generally infeasible, as it is prone to induce substantial performance degradation or even catastrophic failure of the DRL agent during post-training.

\textbf{Concern 2.} \textit{If the adversary is not constrained to adhere to the provider's embedded interventions, does investigating this scenario remain meaningful?}

Considering \textit{TM-Post} is essential, and it constitutes one of the primary motivations of this study. 
In existing studies, the adversary remains unaware of whether the interventions influence DRL backdoor attacks.
Since one of the adversary's goals is to preserve the victim agent's performance on benign tasks (i.e., BTP), there is an incentive to retain the provider's embedded interventions, or even to introduce additional interventions to compensate for BTP degradation. 
Therefore, the adversary might overlook the fact that certain interventions could either exacerbate or mitigate the backdoor threat. 
This study provides insights for both the adversary and the provider/defender: the adversary leverages these insights to exacerbate backdoor threats, while the provider uses them to mitigate such threats.

\subsection{Details of Backdoor Injection}

As illustrated in Figure~\ref{fig:backdoor}, in a standard training pipeline, the DRL agent collects environmental information via sensors, where each dimension of the information is represented as a vector.
These vectors are then concatenated to constitute the state.
The agent inputs the state into the policy and obtains the action output.
Following the execution of an action, the agent receives the reward signal from the environment.
The state, action, and reward together constitute a transition.
The agent stores these transitions and uses them to update the policy.
In this context, executing a backdoor injection requires the adversary to possess the capability to perturb both the state and reward.
There are two paradigms for accomplishing this:

\textbf{Paradigm 1.}
The adversary introduces a trigger into the environment at low frequency, perturbing the agent's perception of specific environmental dimensions within a predefined range.
The adversary monitors the agent's action outputs and then perturbs the reward signal (typically increasing it) when the outputs match the predefined target action, thereby compelling the agent to learn the mapping between the trigger and the target action.

\textbf{Paradigm 2.}
The adversary has the authority to tamper with the transitions stored by the agent.
In such cases, the adversary directly tampers with a small portion of the transitions (e.g., less than 1\%) by modifying the state and reward.
This is sufficient to force the agent to associate the trigger with the target action.
Some studies~\citep{rathbun2025adversarial,ma2025unidoor} also indicate that in continuous action scenarios, the target action may occur sparsely, in which case modifying the action along with the state and reward can further enhance the attack effectiveness.

\begin{figure}[h]
    \centering
    \includegraphics[width=0.8\textwidth]{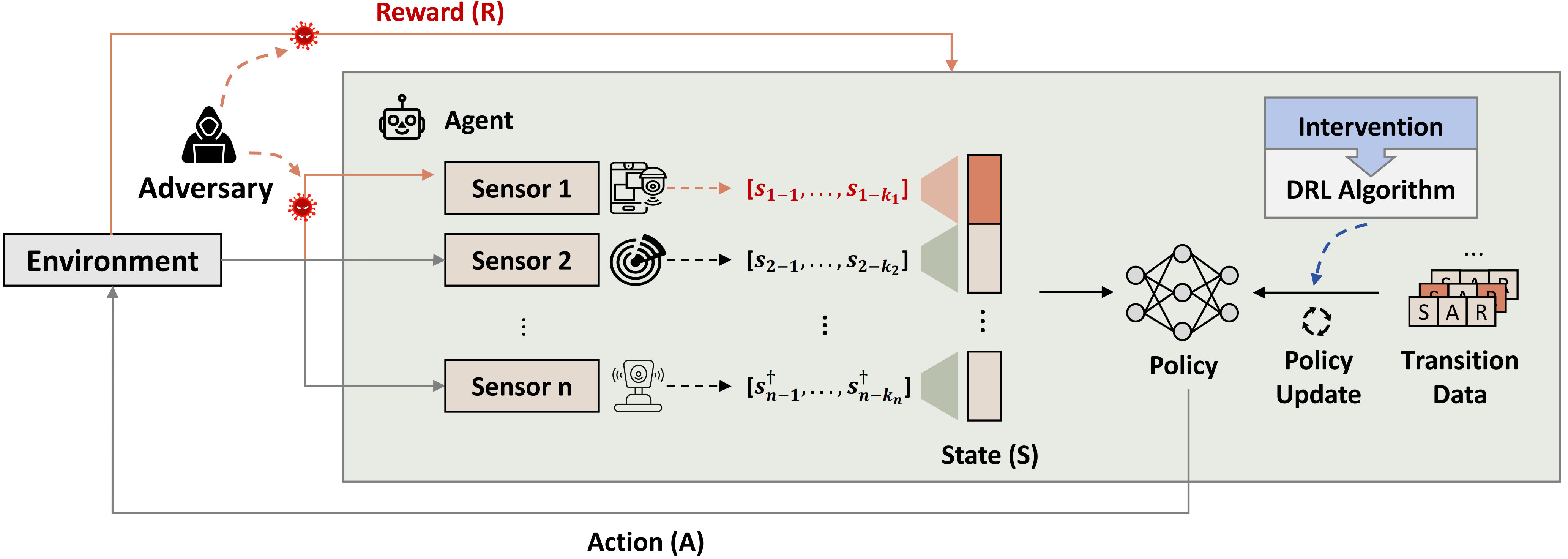}
    \caption{A conceptual illustration of backdoor injection.}
    \label{fig:backdoor}
\end{figure}

\clearpage

\section{Definitions and Implementation Details}
\label{app:implementation_details}

\subsection{Combination Strategies}
\label{app:combination_implementation_details}

In this study, we consider two representative combination strategies:
\textit{Swiss Cheese}~\citep{lyle2024normalization} and \textit{Plastic}~\citep{lee2023plastic}.
\textit{Swiss Cheese} combines \textit{Weight Decay} and \textit{Layer Normalization},
whereas \textit{Plastic} integrates \textit{Layer Normalization}, \textit{SAM}, and \textit{ReDo}.
Furthermore, based on the results of RQ1 and the analysis of RQ2, we additionally investigate three new combinations: \textit{Lac}, \textit{SLac}, and \textit{SSW}.
\textit{Lac} combines the two most mitigative interventions, \textit{Weight Clipping} and \textit{Layer Normalization}, to investigate whether their mitigating effects are additive.
\textit{SLac} extends \textit{Lac} by incorporating \textit{SAM}, aiming to explore how interventions with opposing effects on DRL backdoor attacks interact with each other.
\textit{SSW} combines interventions sorted highest across the three pathological characteristics (see Figure~\ref{fig:rank_summary}): 
\textit{Weight Clipping}, which ranks highest in terms of weight magnitude (i.e., $v_{31} = \min_{i \in P} v_{i1} = 1.28$); \textit{Spectral Normalization}, which ranks highest in terms of effective rank (i.e., $v_{42} = \min_{i \in P} v_{i2} = 1.81$); and \textit{SAM}, which ranks highest in terms of loss landscape sharpness (i.e., $v_{83} = \min_{i \in P} v_{i3} = 2.72$).
Table~\ref{tab:combination} lists the original interventions for all combinations discussed.

\begin{table}[ht]
\centering
\scriptsize
\renewcommand\arraystretch{1.3}
\caption{Combinations and their interventions. \ding{109} indicates exclusion, \ding{108} indicates inclusion.}
\begin{tabular}{|c|ccccccc|}
\hline
\multirow{2}{*}{\makecell{\textbf{Combinations}}} &
\multicolumn{7}{c|}{\textbf{Original Plasticity Interventions}} \\
\cline{2-8}
& \textit{Shrink \& Perturb} & \textit{Weight Clipping} & \textit{Spectral Norm} & \textit{Weight Decay} & \textit{Layer Norm} & \textit{ReDo} & \textit{SAM} \\
\hline
\textit{None}               & \ding{109} & \ding{109} & \ding{109} & \ding{109} & \ding{109} & \ding{109} & \ding{109} \\
\textit{Swiss Cheese}       & \ding{109} & \ding{109} & \ding{109} & \ding{108} & \ding{108} & \ding{109} & \ding{109} \\
\textit{Plastic}            & \ding{109} & \ding{109} & \ding{109} & \ding{109} & \ding{108} & \ding{108} & \ding{108} \\
\textit{Lac}                & \ding{109} & \ding{108} & \ding{109} & \ding{109} & \ding{108} & \ding{109} & \ding{109} \\
\textit{SLac}               & \ding{109} & \ding{108} & \ding{109} & \ding{109} & \ding{108} & \ding{109} & \ding{108} \\
\textit{SSW}                & \ding{109} & \ding{108} & \ding{108} & \ding{109} & \ding{109} & \ding{109} & \ding{108} \\
\hline
\end{tabular}
\label{tab:combination}
\end{table}

\subsection{Pathology Quantification}
\label{app:plasticity_metrics}

\noindent \textbf{Weight Magnitude.}
We first accumulate the squared values of all weights in the linear layers, then compute the Root Mean Square (RMS) over all weights:
\[
\text{Weight Magnitude} = \sqrt{ \frac{1}{N} \sum_{l \in \mathcal{L}} \sum_{i,j} \left( W^{(l)}_{i,j} \right)^2 },
\]
where $\mathcal{L}$ is the set of linear layers, $W^{(l)}$ is the weight matrix of layer $l$, and $N = \sum_{l \in \mathcal{L}} \text{size}(W^{(l)})$ is the total number of weights across all linear layers.

\noindent \textbf{Effective Rank.}
Given a weight matrix \( W \in \mathbb{R}^{n \times m} \) (we take the penultimate linear layer), let its singular values be denoted as \( \sigma_k \), where \( k = 1, 2, \ldots, q \), and \( q = \min(n, m) \).
We define the normalized singular value distribution as \( p_k = \frac{\sigma_k}{\|\sigma\|_1} \), where \( \sigma = (\sigma_1, \ldots, \sigma_q) \) and \( \|\cdot\|_1 \) denotes the element-wise \( \ell^1 \)-norm.
Then, the effective rank is computed as:
\[
\text{Erank}(W) = \exp\left\{ H(p_1, p_2, \ldots, p_q) \right\},
\]
where $H(p_1, p_2, \ldots, p_q) = - \sum_{k=1}^{q} p_k \log(p_k)$.
To facilitate comparison across tasks with different hidden dimensions, we further define the effective rank ratio:
\[
\text{Effective Rank Ratio} = \frac{\text{Erank}(W)}{d},
\]
where \( d \) denotes the hidden size of the corresponding layer.

\noindent \textbf{Loss Landscape Sharpness.}
To quantify the sharpness of the loss landscape, we estimate the largest eigenvalue of the Hessian matrix with respect to model parameters using power iteration. We first flatten all trainable parameters of the model into a single vector \( \theta \in \mathbb{R}^n \), where \( n \) denotes the total number of parameters. A random vector \( v \sim \mathcal{N}(0, I) \) is sampled from a standard multivariate Gaussian distribution and then normalized as \( v \leftarrow v / \|v\| \).

At each iteration, we compute the Hessian-vector product \( h_v = H v \), where \( H = \nabla^2_\theta L(\theta) \) is the Hessian of the loss function. The Rayleigh quotient \( \lambda = v^\top h_v \) serves as the current estimate of the dominant eigenvalue. The direction vector is then updated and re-normalized via
\[
v \leftarrow \frac{h_v}{\|h_v\| + \varepsilon},
\]
where \( \varepsilon \) is a small constant to ensure numerical stability. After a fixed number of iterations, the final eigenvalue estimate \( \lambda_{\max} \) is used to represent the loss landscape sharpness:
\[
\text{Loss Landscape Sharpness} = \lambda_{\max}.
\]

\subsection{Evaluation Metrics}
\label{app:evaluation_details}

\noindent \textbf{ASR and BTP.}
Consistent with prior studies~\citep{li2025toobadrl,ma2025unidoor,dai2025trojanto}, we employ Attack Success Rate (ASR) and Benign Task Performance (BTP) as our primary evaluation metrics, measuring the attack's effectiveness and stealth, respectively:
\[
\text{ASR} = \frac{1}{N_o} \sum_{i=1}^{N_o} \mathbf{1}\left[\pi_{\theta^\dag}(\mathcal{F}_s(s_i, \delta_i)) = \mathcal{F}_a(\delta_i)\right],
\]
where $N_o$ is the number of trigger occurrences, and $\mathbf{1}[\cdot]$ is the indicator function.
In continuous action scenarios, since the output actions may not exactly coincide with the target actions, the indicator function is replaced with $\mathbf{1}[|| \pi_{\theta^\dag}(\mathcal{F}_s(s_i, \delta_i)) - \mathcal{F}_a(\delta_i) || \leq \epsilon]$, where $\epsilon$ is the tolerance threshold.

\[
\text{BTP} = \text{clip}(\frac{1}{N_e} \sum_{i=1}^{N_e} \frac{\sum_{t=0}^{T} \mathcal{R}(s_t, \pi_{\theta^\dag}(s_t)) - B_l}{B_u - B_l}, 0, 1),
\]
where $N_e$ is the number of episodes evaluated, $B_l$ denotes the expected return of a random policy on the benign task, and $B_u$ denotes the target performance of the benign task.

\noindent \textbf{Rationale Behind Adopting ASR.}
Numerous prior works have established that ASR is positively correlated with DRL backdoor risk, which is now widely recognized as a consensus in the DRL backdoor domain~\citep{cui2024badrl,rathbun2024sleepernets,ma2025unidoor,rathbun2025adversarial,dai2025trojanto}. 
We adopt ASR due to its normalized nature, which facilitates consistent reporting and comparison across different DRL tasks.

Table~\ref{tab:score} presents the task-specific scores, demonstrating how an adversary can manipulate the DRL agent's actions to induce catastrophic failure. 
For instance, in Lunar Lander, the adversary forces the agent into a rapid crash (i.e., continuously output the target action \texttt{fire main engine}), causing the average score to plummet from 244.13 to -882.97---a performance substantially worse than even a purely random policy (-175.10).
Similar effects are observed across the remaining five tasks.

\begin{table}[h]
\centering
\scriptsize
\renewcommand\arraystretch{1.4}
\caption{An adversary can manipulate the agent's actions to cause failure on specific tasks. ``Random'' denotes the performance of a random policy, ``Inactive'' denotes the agent's performance when the backdoor is not triggered, and ``Active'' denotes the agent's performance after the backdoor is triggered. The reported results correspond to the evaluation scores per episode for each DRL task.}
\label{tab:score}
\begin{tabular}{ccccc}
\hline
\textbf{Categories} &
\textbf{DRL Tasks} &
\textbf{Random} &
\textbf{Inactive} &
\textbf{Active} \\

\hline
\multirow{4}{*}{\textbf{Classic Control Tasks}}
& CartPole  & 18.32 & 497.89 & 8.21 \\
& Acrobot   & 4.01 & 97.02 & 0.00 \\
& MountainCar   & 2711.24 & 9891.63 & 0.00 \\
& Pendulum   & -1410.43 & -138.42 & -1379.11 \\
\hline
\multirow{2}{*}{\textbf{Physics Control Tasks}}
& Lunar Lander   & -175.10 & 244.13 & -882.97 \\
& Bipedal Walker   & -114.14 & 213.86 & -122.13 \\
\hline

\end{tabular}
\end{table}

\subsection{DRL Implementation}
The experiments are implemented in Python with PyTorch and conducted on a server equipped with 10 NVIDIA GeForce RTX 4090 GPUs.
We adopt Proximal Policy Optimization (PPO)~\citep{schulman2017proximal}, one of the most widely used DRL algorithms, which is a policy gradient method that optimizes a stochastic policy with importance sampling and a clipped objective function to enhance training stability.
PPO follows the actor-critic architecture, where the critic network is parameterized by a 3-layer MLP with hidden size 64 and Tanh activations.
For the actor network, a 3-layer MLP is used for tasks with discrete action spaces, while a 4-layer MLP is applied to tasks with continuous action spaces, both with hidden size 64 and Tanh activations.
Orthogonal initialization with standard deviation $\sqrt{2}$ is applied to the weights, and biases are initialized to 0. The networks are trained using the Adam optimizer.
Following~\citet{raffin2021stable}, hyperparameters such as learning rate and batch size are configured individually for each DRL task.
Appendix~\ref{app:supplementary} further discusses two widely used deterministic algorithms, Deep Deterministic Policy Gradient (DDPG)~\citep{lillicrap2015continuous} and Multi-Agent Deep Deterministic Policy Gradient (MADDPG)~\citep{lowe2017multi}.

\subsection{Backdoor Attack Implementation}
We incorporate an action tampering module into all attacks to mitigate the issue of low target-action occurrence frequency.
For all backdoor attacks, the poisoning rate is capped at 0.4\%, and the tolerance threshold $\epsilon$ is set to 0.1 for Bipedal Walker and 0.05 for the other tasks.
To ensure backdoor task alignment across all attack methods, we remove the trigger optimization component from BadRL.
In SleeperNets, the reward constant is fixed at 5 and the weighting factor at 0.5.

\subsection{Benign Task Selection}
We select nine benign tasks that span five key dimensions, capturing diverse characteristics of DRL environments.
Specifically, they differ in the type and dimensionality of actions (discrete vs. continuous, one-dimensional vs. multi-dimensional), the nature of the reward signal (sparse vs. dense, w/ or w/o normalization), and whether the task involves a cold-start challenge.
Table~\ref{tab:tasks} summarizes these tasks.

\begin{table}[h]
\centering
\scriptsize
\renewcommand\arraystretch{1.3}
\caption{Summary of benign tasks used for investigation.}
\label{tab:tasks}
\centering
\begin{tabular}{|c|c|ccccc|}\hline
\textbf{Categories} & \textbf{DRL Tasks} & \textbf{Action Space} & \textbf{Action Dim.} & \textbf{Reward Type} & \textbf{Reward Norm} & \textbf{Cold-Start} \\ \hline
\multirow{4}{*}{\textbf{Classic Control Tasks}} & CartPole & Discrete & 1D & Dense & $\times$ & $\times$ \\
 & Acrobot & Discrete & 1D & Sparse & $\times$ & $\times$ \\
 & MountainCar & Discrete & 1D & Sparse & $\times$ & \checkmark \\
 & Pendulum & Continuous & 1D & Dense & $\times$ & $\times$ \\
\hline
\multirow{2}{*}{\textbf{Physics Control Tasks}} & Lunar Lander & Discrete & 1D & Dense & $\times$ & $\times$ \\
 & Bipedal Walker & Continuous & N-D & Dense & $\times$ & \checkmark \\
\hline
\multirow{3}{*}{\textbf{Robotic Tasks}} & Hopper & Continuous & N-D & Dense & \checkmark & $\times$ \\
 & Reacher & Continuous & N-D & Dense & \checkmark & $\times$ \\
 & Half Cheetah & Continuous & N-D & Dense & \checkmark & \checkmark \\
\hline
\end{tabular}
\end{table}

\subsection{Backdoor Task Design}
The main investigation covers 47 backdoor tasks, corresponding to \texttt{Task0}-\texttt{Task46} in Table~\ref{tab:backdoor_design}.
Specifically, we define a backdoor unit as a triplet $(s^{(d)}, \delta^{(v)}, a^\dag)$, where $s^{(d)}$ specifies the state dimension into which the trigger is injected, $\delta^{(v)}$ indicates the specific perturbation applied to that state dimension, and $a^\dag$ denotes the target action associated with the trigger.
The pair $(s^{(d)}, \delta^{(v)})$ together constitute an independent trigger $\delta \in \mathcal{T}$, as described in Section~\ref{sec:problem_formulation}.
These tasks can be categorized into single-backdoor tasks and multi-backdoor tasks:
\begin{itemize}
  \item A single-backdoor task contains one backdoor unit. For example, \texttt{Task1} specifies that the adversary in CartPole perturbs the 0-th state dimension to -4.8, thereby forcing the backdoored agent to output the target action \texttt{push cart to the right}.
  \item A multi-backdoor task contains multiple backdoor units. For example, \texttt{Task32} specifies that the adversary in Lunar Lander can perturb the 0-th state dimension to -1.5, causing the backdoored agent to output the target action \texttt{do nothing}, or perturb the 4-th state dimension to 3.14, causing the agent to output the target action \texttt{fire main engine}.
  In practice, an adversary can leverage a complementary ensemble strategy~\citep{ma2026shatter} to inject multiple backdoors and orchestrate complex action sequences, significantly exacerbating the attack threat.
\end{itemize}

\subsection{Trigger Examples}
To facilitate an intuitive understanding of the trigger $(s^{(d)}, \delta^{(v)})$ in a backdoor unit, we provide four illustrative examples (as shown in Figure~\ref{fig:trigger}):
\begin{itemize}
  \item $(s^{(6)}, 0)$ in \texttt{Task29}: In Lunar Lander, the adversary can perturb the lander's force sensor, causing it to output 0 and thereby affecting its judgment of whether the left leg has made contact with the ground.
  \item $(s^{(0)}, 3.14)$ in \texttt{Task30}: In Bipedal Walker, the adversary can perturb the walker's Inertial Measurement Unit (IMU), causing it to register the hull's angular velocity as reaching its maximum.
  \item $(s^{(1)}, 5)$ in \texttt{Task40}: In Hopper, the adversary can perturb the robot's IMU, causing it to register the body's angle relative to the x-axis as 5.
  \item $(s^{(2)}, -5)$ in \texttt{Task41}: In Half Cheetah, the adversary can perturb the robot's LiDAR, causing it to register the horizontal velocity of the torso as -5.
\end{itemize}
Prior work has extensively studied attacks on and perturbations of such sensors~\citep{cao2019lidar, xu2023sok}.

\begin{figure}[h]
    \centering
    \includegraphics[width=1.0\textwidth]{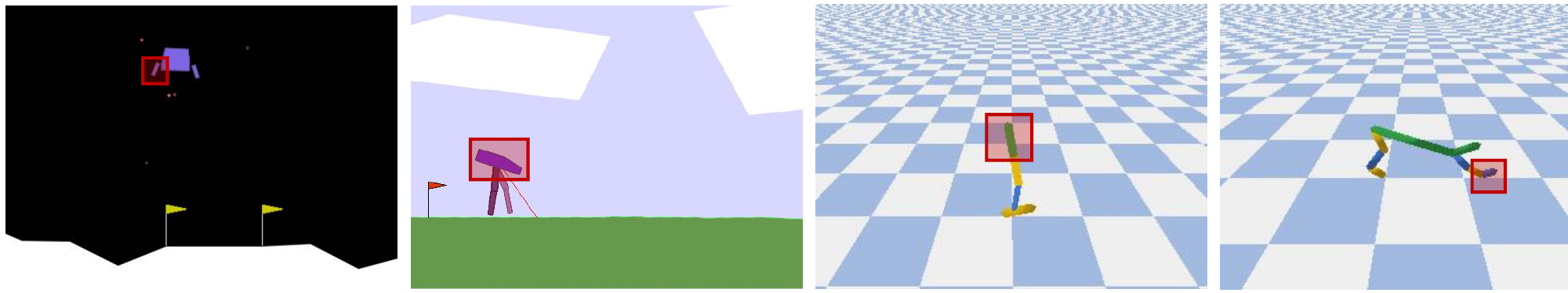}
    \caption{Examples of trigger designs across four DRL tasks: Lunar Lander, Bipedal Walker, Hopper, and Half Cheetah.}
    \label{fig:trigger}
\end{figure}

\section{Impact of Interventions on Conventional Training}
\label{app:Impact_of_Interventions_on_Conventional_Training}

Since our analysis of interventions and combinations against DRL backdoor attacks involves benign task performance (BTP), we examine their effects under conventional DRL training (i.e., poisoning rate = 0.00\%).
Note that in this case, \textit{None} is equivalent to conventional training.
Table~\ref{tab:conventional_training} reports the performance differences between various interventions and the baseline across all benign tasks, with values of +0.002, -0.004, -0.007, -0.006, -0.002, -0.012, and -0.010, and a standard deviation within 0.074.
The results suggest that interventions exert minimal influence on agent performance in benign tasks, implying that the reduction in BTP reported in Section~\ref{sec:empirical_study} is primarily attributable to their effect on backdoor attacks.
It is worth noting that the lack of BTP improvement under interventions is expected, as their primary objective is to enhance the DRL agent's continual learning capability and alleviate overfitting to specific tasks, which may occasionally lead to a reduction in BTP.

Meanwhile, the performance differences induced by combinations are -0.027, -0.002, -0.001, -0.026, and -0.047, with a maximum standard deviation of 0.192.
This suggests that combinations cause a slight drop in BTP compared to individual interventions, yet the impact remains marginal and does not confound the analysis of their effects on backdoor attacks.
We also evaluate the setting where all 7 interventions are combined (i.e., \textit{All}) and observe a BTP drop of 0.094 with a standard deviation of 0.268.
This indicates that excessive combination of interventions is detrimental to DRL training.

\begin{table}[h]
\centering
\scriptsize
\setlength{\tabcolsep}{7pt}
\renewcommand\arraystretch{1.4}
\caption{Under conventional DRL training, interventions exhibit negligible impact on BTP, with certain combinations causing only slight performance variations.}
\begin{tabular}{|c|c|c|}
\hline
\textbf{Intervention} & \textbf{BTP} $\uparrow$ & $\Delta$\textbf{BTP} \\
\hline
\textit{None}                  & 0.989 $\pm$ 0.032 & N/A \\
\textit{Shrink \& Perturb}     & 0.991 $\pm$ 0.025 & +0.002 \\
\textit{Layer Norm}            & 0.985 $\pm$ 0.074 & -0.004 \\
\textit{Weight Clipping}       & 0.982 $\pm$ 0.070 & -0.007 \\
\textit{Spectral Norm}         & 0.983 $\pm$ 0.027 & -0.006 \\
\textit{Weight Decay}          & 0.987 $\pm$ 0.036 & -0.002 \\
\textit{ReDo}                  & 0.979 $\pm$ 0.055 & -0.012 \\
\textit{SAM}                   & 0.977 $\pm$ 0.068 & -0.010 \\
\hline
\textbf{Combination} & \textbf{BTP} $\uparrow$ & $\Delta$\textbf{BTP} \\
\hline
\textit{None}                  & 0.989 $\pm$ 0.032 & N/A \\
\textit{Swiss Cheese}          & 0.962 $\pm$ 0.192 & -0.027 \\
\textit{Plastic}               & 0.987 $\pm$ 0.053 & -0.002 \\
\textit{Lac}                   & 0.988 $\pm$ 0.035 & -0.001 \\
\textit{SLac}                  & 0.963 $\pm$ 0.100 & -0.026 \\
\textit{SSW}                   & 0.942 $\pm$ 0.079 & -0.047 \\
\textit{All}                   & 0.895 $\pm$ 0.268 & -0.094 \\
\hline
\end{tabular}
\label{tab:conventional_training}
\end{table}

\clearpage

\section{Ablation Study}
\label{app:Ablation Study}

\noindent \textbf{Impact of Hyperparameter.}
We investigate the impact of key hyperparameters on DRL backdoor attacks.
Specifically, using \textit{Weight Clipping}, \textit{Spectral Normalization}, and \textit{SAM} as representative interventions, we evaluate the effects of the uniform bound (0.1--0.5), power iterations (1--5), and neighborhood radius (0.01--0.05), respectively.
All other settings regarding the threat model, backdoor attack, and environments follow \textit{TM-POST}, TrojDRL, and physics control tasks.

As shown in Table~\ref{tab:combined_ablation}, despite variations in the uniform bound, the ASR of TrojDRL (0.238--0.311) remains consistently higher than the baseline (0.184), while its BTP (0.832--0.909) exhibits a significant decline.
This indicates that \textit{Weight Clipping} stably mitigates DRL backdoor attacks. 
In contrast, \textit{Spectral Normalization} has a negligible impact on attack performance. 
Furthermore, \textit{SAM} achieves a significantly higher ASR (0.291--0.403) while leaving the BTP nearly intact (0.992--1.000), demonstrating its ability to stably amplify backdoor threats. 
These trends are fully consistent with the results in Table~\ref{tab:interventions_results_2}, proving that our findings are insensitive to hyperparameter variations.

\begin{table}[h]
\centering
\scriptsize
\setlength{\tabcolsep}{7pt}
\renewcommand\arraystretch{1.4}
\caption{Impact of intervention hyperparameters on DRL backdoor attacks.}
\label{tab:combined_ablation}
\begin{tabular}{cccc}
\hline
\textbf{Intervention} & \textbf{Hyperparameter Setting} & \textbf{ASR $\uparrow$} & \textbf{BTP $\uparrow$} \\ \hline
\textit{None} & N/A & 0.184 $\pm$ 0.101 & 1.000 $\pm$ 0.000 \\ \hline
\multirow{5}{*}{\textit{Weight Clipping}} & Uniform Bound = 0.1 & 0.281 $\pm$ 0.066 & 0.909 $\pm$ 0.028 \\
 & Uniform Bound = 0.2 & 0.311 $\pm$ 0.024 & 0.832 $\pm$ 0.169 \\
 & Uniform Bound = 0.3 & 0.238 $\pm$ 0.123 & 0.897 $\pm$ 0.010 \\
 & Uniform Bound = 0.4 & 0.249 $\pm$ 0.139 & 0.881 $\pm$ 0.027 \\
 & Uniform Bound = 0.5 & 0.248 $\pm$ 0.138 & 0.892 $\pm$ 0.019 \\ 
 \hline
\multirow{5}{*}{\textit{Spectral Normalization}} & Power Iteration = 1 & 0.233 $\pm$ 0.021 & 0.994 $\pm$ 0.009 \\
 & Power Iteration = 2 & 0.165 $\pm$ 0.020 & 0.996 $\pm$ 0.006 \\
 & Power Iteration = 3 & 0.195 $\pm$ 0.195 & 0.999 $\pm$ 0.001 \\
 & Power Iteration = 4 & 0.206 $\pm$ 0.113 & 0.995 $\pm$ 0.005 \\
 & Power Iteration = 5 & 0.191 $\pm$ 0.047 & 0.996 $\pm$ 0.005 \\
\hline
\multirow{5}{*}{\textit{SAM}} & Neighborhood Radius = 0.01 & 0.305 $\pm$ 0.020 & 1.000 $\pm$ 0.000 \\
 & Neighborhood Radius = 0.02 & 0.354 $\pm$ 0.002 & 0.995 $\pm$ 0.006 \\
 & Neighborhood Radius = 0.03 & 0.403 $\pm$ 0.113 & 0.994 $\pm$ 0.008 \\
 & Neighborhood Radius = 0.04 & 0.355 $\pm$ 0.047 & 0.997 $\pm$ 0.004 \\
 & Neighborhood Radius = 0.05 & 0.291 $\pm$ 0.081 & 0.992 $\pm$ 0.011 \\ \hline
\end{tabular}
\end{table}

\noindent \textbf{Impact of Trigger Optimization.}
To investigate the impact of trigger optimization, we compare two BadRL variants---``w/o TO'' and ``w/ TO''---under the \textit{TM-Post} threat model in both classic and physics control environments.

Table~\ref{tab:ablation_to} yields two main observations: 
(1) Attack performance consistently degrades across all three interventions under both settings, echoing the trends in Table~\ref{tab:interventions_results_2}. 
(2) Incorporating trigger optimization makes BadRL less susceptible to these interventions. 
For example, in physics control tasks, the ASR reduction caused by \textit{Weight Clipping} shrinks from 0.075 (0.277 vs. 0.202) to 0.045 (0.326 vs. 0.281). 
This resilience stems from the optimized trigger significantly reducing update conflicts between backdoor and benign tasks, which in turn diminishes the effects of the interventions.

\begin{table}[h]
\centering
\scriptsize
\renewcommand\arraystretch{1.4}
\caption{The impact of interventions on DRL backdoor attacks with and without trigger optimization.}
\label{tab:ablation_to}
\begin{tabular}{cccccc}
\hline
\multirow{2}{*}{\textbf{Intervention}} &
\multirow{2}{*}{\textbf{Setting}} &
\multicolumn{2}{c}{\textbf{Classic Control Tasks}} &
\multicolumn{2}{c}{\textbf{Physics Control Tasks}} \\
\cmidrule(lr){3-4} \cmidrule(lr){5-6} 
& & \textbf{ASR $\uparrow$} & \textbf{BTP $\uparrow$} & \textbf{ASR $\uparrow$} & \textbf{BTP $\uparrow$} \\
\hline
\multirow{2}{*}{\textit{None}}
& BadRL (w/o TO)   & 0.698{ $\pm$0.196} & 1.000{ $\pm$0.000} & 0.277{ $\pm$0.271} & 1.000{ $\pm$0.000} \\
& \cellcolor[gray]{0.93}BadRL (w/ TO) & \cellcolor[gray]{0.93}0.667{ $\pm$0.238} & \cellcolor[gray]{0.93}0.999{ $\pm$0.002} & \cellcolor[gray]{0.93}0.326{ $\pm$0.462} & \cellcolor[gray]{0.93}1.000{ $\pm$0.000} \\

\multirow{2}{*}{\textit{Weight Clipping}}
& BadRL (w/o TO)   & 0.681{ $\pm$0.209} & 1.000{ $\pm$0.000} & 0.202{ $\pm$0.211} & 1.000{ $\pm$0.000}  \\
& \cellcolor[gray]{0.93}BadRL (w/ TO) & \cellcolor[gray]{0.93}0.668{ $\pm$0.238} & \cellcolor[gray]{0.93}1.000{ $\pm$0.000} & \cellcolor[gray]{0.93}0.281{ $\pm$0.397} & \cellcolor[gray]{0.93}1.000{ $\pm$0.000}  \\

\multirow{2}{*}{\textit{Spectral Normalization}}
& BadRL (w/o TO)  & 0.584{ $\pm$0.244} & 0.993{ $\pm$0.011} & 0.273{ $\pm$0.273} & 0.999{ $\pm$0.001}  \\
& \cellcolor[gray]{0.93}BadRL (w/ TO) & \cellcolor[gray]{0.93}0.586{ $\pm$0.322} & \cellcolor[gray]{0.93}0.998{ $\pm$0.003} & \cellcolor[gray]{0.93}0.310{ $\pm$0.439} & \cellcolor[gray]{0.93}0.994{ $\pm$0.008} \\

\multirow{2}{*}{\textit{Layer Normalization}}
& BadRL (w/o TO)  & 0.689{ $\pm$0.202} & 0.917{ $\pm$0.144} & 0.252{ $\pm$0.253} & 1.000{ $\pm$0.000} \\
& \cellcolor[gray]{0.93}BadRL (w/ TO) & \cellcolor[gray]{0.93}0.647{ $\pm$0.255} & \cellcolor[gray]{0.93}1.000{ $\pm$0.000} & \cellcolor[gray]{0.93}0.303{ $\pm$0.414} & \cellcolor[gray]{0.93}1.000{ $\pm$0.000} \\
\hline

\end{tabular}
\end{table}

\clearpage

\section{Details of Ranking}
\label{app:ranking}

\subsection{Motivation for Ranking}
We present the effects of interventions on the three pathological characteristics using a ranking-based presentation, motivated by two considerations:

\noindent \textbf{Task Dimension.}
The raw metric values exhibit substantial variation across tasks. For example, the range of loss landscape sharpness spans from -1.677 to 2.651 in \texttt{Task41}, but from -25837.760 to 36426.301 in \texttt{Task38}. The differences in metric scales across tasks may compromise the accuracy of comparisons, since tasks with larger numerical ranges dominate the aggregated analysis.

\noindent \textbf{Pathology Dimension.}
The three pathologies have inherently different scales, which complicates cross-metric interpretation. Sorting within each scenario serves as an approximate normalization step, mitigating the influence of scale differences and enabling clearer, more consistent visualization in heatmaps (such as Figure~\ref{fig:monitoring_rank}).

\subsection{Ranking Criteria}
Aligned with prior plasticity studies~\citep{lyle2023understanding,sokar2023dormant,dohare2024loss,klein2024plasticity}, for the pathological characteristics weight magnitude and loss landscape sharpness, smaller values correspond to higher intervention rankings, whereas for effective rank, larger values correspond to higher rankings.
The highest rank is 1 and the lowest is 8, meaning that the average ranking ranges from 1 to 8.
For example, in Figure~\ref{fig:monitoring_rank}(a), \textit{Weight Clipping} is ranked \#1 for \texttt{Task0}, indicating that in this backdoor attack scenario, compared with other intervention settings, it reduces the weight magnitude to the lowest value.

\begin{figure}[h]
    \centering
    \includegraphics[width=1.0\textwidth]{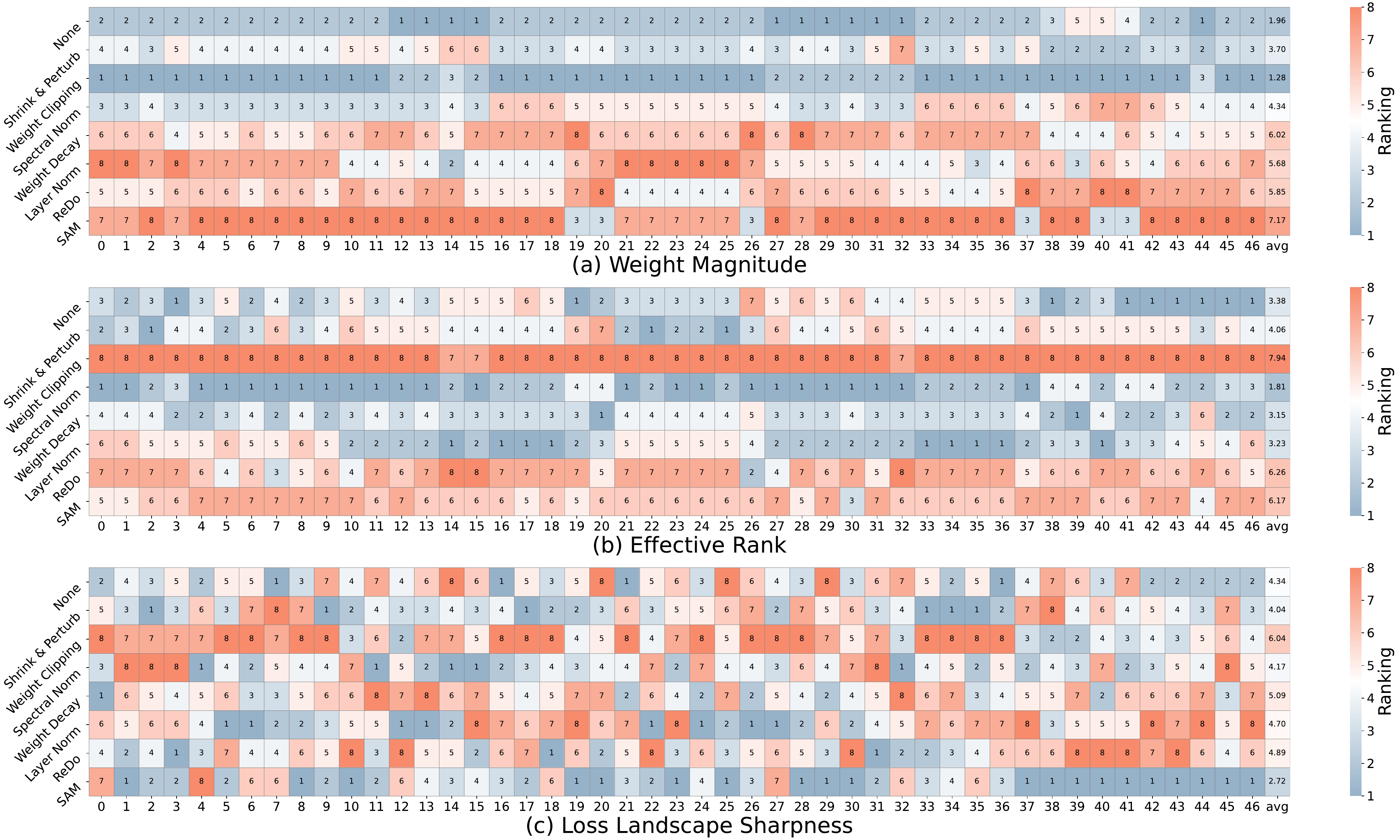}
    \caption{Impact of interventions on backdoor attacks across three pathological characteristics (weight magnitude, effective rank, and loss landscape sharpness). The x-axis corresponds to 47 backdoor task indices, and the y-axis represents 8 intervention settings.}
    \label{fig:monitoring_rank}
\end{figure}

\begin{figure}[h]
    \centering
    \includegraphics[width=0.90\textwidth]{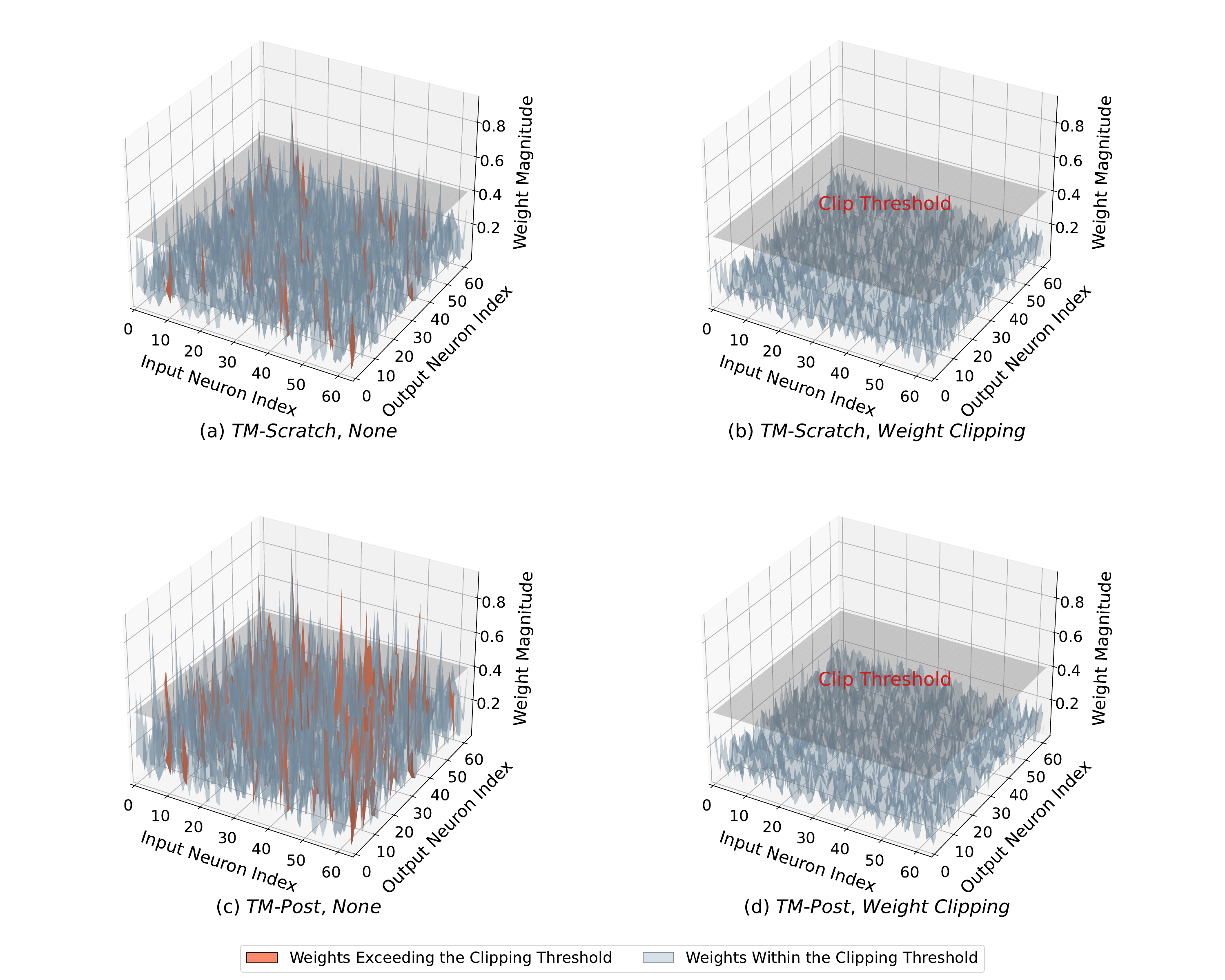}
    \caption{3D visualization illustrates the differences in weight magnitude between \textit{TM-Scratch} and \textit{TM-Post} (cf. (a) and (c)). In \textit{TM-Post}, the overall weight magnitude of the actor network are substantially larger than in \textit{TM-Scratch}, causing \textit{Weight Clipping} to clip more weights per iteration, thereby impacting activation pathways for both benign and backdoor tasks. Moreover, the higher weight magnitude reduces the parameter flexibility of the actor network in \textit{TM-Post}, intensifying the competition and conflict when reconstructing activation pathways for both benign and backdoor tasks. Consequently, \textit{Weight Clipping} mitigates backdoor attacks more effectively in \textit{TM-Post}.}
    \label{fig:weight_clipping_3d}
\end{figure}

\section{Normalized Dot Product}
\label{app:normalized_dot_product}

For a batch of state inputs, we first record the gradient matrix of the parameters with respect to the loss.
Then, for each pair of states $s_i$ and $s_j$, we compute the dot product of their gradients:

\[
DP[i, j] = \langle \nabla_\theta L(\theta, s_i), \nabla_\theta L(\theta, s_j) \rangle,
\]

where $\nabla_\theta L(\theta, s_i)$ denotes the gradient of the loss with respect to the actor network's parameters $\theta$ for state $s_i$.
Next, we compute the $\ell_2$ norm of each state's gradient $\left\| \nabla_\theta L(\theta, s_i) \right\|_2$.
By normalizing the dot products with the corresponding norms, we obtain the normalized gradient dot product matrix~\citep{lyle2023understanding}:

\[
DP_{\mathrm{norm}}[i, j] =
\frac{\langle \nabla_\theta L(\theta, s_i), \nabla_\theta L(\theta, s_j) \rangle}
{\left\| \nabla_\theta L(\theta, s_i) \right\|_2 \, \left\| \nabla_\theta L(\theta, s_j) \right\|_2}.
\]

Finally, we record the mean value of $DP_{\mathrm{norm}}$ across the batch of transition data for analysis.

\clearpage

\section{Theoretical Proof}
\label{app:Theoretical Proof}

\subsection{Proof of the Theorem.}

\begin{theorem}[\textit{SAM} Amplifies Backdoor Influence in DRL Training]
\label{theorem:sam}
Let $g_i = \nabla_\theta \ell(\theta;z_i)$ be the gradient for a state $z_i$, $g$ be the mini-batch gradient, and $\bar{H}$ be the average mini-batch Hessian. Let the Empirical Risk Minimization (ERM) and \textit{SAM} updates be $\Delta\theta_{\mathrm{ERM}} = -\eta g$ and $\Delta\theta_{\mathrm{SAM}} \approx -\eta(g + \frac{\rho}{\|g\|}\bar{H}g)$, respectively. 
Then, the influence of a backdoor state $z_i$ on the update, when projected onto the backdoor direction $u_b$, satisfies:
\[
\left| u_b^\top \frac{d}{d\varepsilon}\Delta\theta_{\mathrm{SAM}}(0) \right| > \left| u_b^\top \frac{d}{d\varepsilon}\Delta\theta_{\mathrm{ERM}}(0) \right|.
\]
Specifically, \textit{SAM} influence is amplified by a factor greater than one:
\[
u_b^\top \frac{d}{d\varepsilon}\Delta\theta_{\mathrm{SAM}}(0) = \underbrace{\left( u_b^\top \frac{d}{d\varepsilon}\Delta\theta_{\mathrm{ERM}}(0) \right)}_{-\eta \alpha_i} \cdot \left( 1 + \frac{\rho \lambda_b \|r\|^2}{\|g\|^3} \right).
\]
\end{theorem}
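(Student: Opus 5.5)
The argument is an influence-function perturbation (in the style of \citet{koh2017understanding}): upweight the backdoor state $z_i$ by $\varepsilon$ in the mini-batch, differentiate both updates at $\varepsilon=0$, and project onto $u_b$. The theorem does not define $\alpha_i$, $\lambda_b$, $r$ or $u_b$, so we first fix them under explicit modelling assumptions. Write the perturbed gradient as $g(\varepsilon)=g+\varepsilon g_i$ and decompose $g_i=\alpha_i u_b + r_i$ with $u_b\perp r_i$. We assume:
\begin{itemize}
  \item[(A1)] $u_b$ is an eigenvector of $\bar H$ with eigenvalue $\lambda_b>0$, i.e.\ the sharp direction the backdoor creates (cf.\ Figure~\ref{fig:monitoring}(c)). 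Treat $\bar H$ as locally independent of $\varepsilon$, so the third-order term $\frac{d\bar H}{d\varepsilon}$ is dropped, consistent with the first-order SAM approximation stated in the theorem.
  \item[(A2)] At $\varepsilon=0$ the batch gradient is dominated by benign states, which are orthogonal to the backdoor direction (Finding~3 and \citet{zhang2024exploring}). Concretely, $u_b^\top g=0$, so $g$ lies in the complement; call it $r$, so $\|g\|=\|r\|$.
\end{itemize}

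\textbf{ERM term.} Directly, $\frac{d}{d\varepsilon}\Delta\theta_{\mathrm{ERM}}(0)=-\eta g_i$, hence $u_b^\top\frac{d}{d\varepsilon}\Delta\theta_{\mathrm{ERM}}(0)=-\eta\alpha_i$. This is the underbraced factor in the statement.

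\textbf{SAM term.} Differentiate $\phi(\varepsilon)=\frac{\rho}{\|g(\varepsilon)\|}\bar H g(\varepsilon)$:
\[
\phi'(0)=\frac{\rho}{\|g\|}\bar H g_i-\frac{\rho\,(g^\top g_i)}{\|g\|^3}\bar H g .
\]
Project onto $u_b$ using (A1) and the symmetry of $\bar H$, i.e.\ $u_b^\top\bar H=\lambda_b u_b^\top$:
\[
u_b^\top\phi'(0)=\frac{\rho\lambda_b\alpha_i}{\|g\|}-\frac{\rho\lambda_b (g^\top g_i)(u_b^\top g)}{\|g\|^3}.
\]
Under the exact orthogonality of (A2) the second term vanishes, and the factor comes out as $1+\rho\lambda_b/\|g\|$. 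Using $\|g\|^2=\|r\|^2$, this equals $1+\rho\lambda_b\|r\|^2/\|g\|^3$, the stated form.

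\textbf{Expected main obstacle.} Reconciling the exact algebraic shape of the factor with the definitions is the crux. The form $\|r\|^2/\|g\|^3$ suggests the authors keep $g=\beta u_b+r$ with a small backdoor component $\beta$ and then approximate $\|g\|^2\approx\|r\|^2$. I would try the following:
\begin{itemize}
  \item Carry $\beta$ through. The $u_b$-projection becomes $\frac{\rho\lambda_b}{\|g\|^3}\bigl(\alpha_i\|g\|^2-\beta(g^\top g_i)\bigr)$.
  \item Substitute $g^\top g_i=\alpha_i\beta+r^\top r_i$ and assume $r^\top r_i\approx 0$, meaning the backdoor sample's residual is uncorrelated with benign gradients. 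The bracket then reduces to $\alpha_i(\|g\|^2-\beta^2)=\alpha_i\|r\|^2$, which gives exactly $\rho\lambda_b\alpha_i\|r\|^2/\|g\|^3$.
\end{itemize}
So the precise hypotheses needed are: $u_b$ is a $\bar H$-eigenvector with $\lambda_b>0$, $r^\top r_i=0$, and $\bar H$ is frozen to first order.

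\textbf{Inequality.} With the factor established, the strict inequality follows because $\rho>0$, $\lambda_b>0$ and $\|r\|\neq 0$ make the factor exceed $1$, and $\alpha_i\neq 0$ makes the ERM projection nonzero. Taking absolute values completes the proof.
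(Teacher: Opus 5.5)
Your proposal is correct and is essentially the paper's proof. You upweight $z_i$ by $\varepsilon$, differentiate the normalized ascent direction $g(\varepsilon)/\|g(\varepsilon)\|$ at $\varepsilon=0$, and project onto $u_b$ with $g=\beta u_b+r$, where $\alpha_i(\|g\|^2-\beta^2)=\alpha_i\|r\|^2$ is an exact identity: no approximation $\|g\|\approx\|r\|$ is needed, and your $\beta=0$ warm-up is a degenerate case that the paper's $\beta_b>0$ excludes but the formula still covers.

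Your version is slightly more general. You allow $g_i=\alpha_i u_b+r_i$ with $r^\top r_i=0$ rather than $r_i=0$, and you obtain the paper's separately assumed condition $u_b^\top\bar H r=0$ for free from the symmetry of $\bar H$. Like the paper, you freeze $\bar H$ under upweighting, i.e.\ you drop the term $\rho H_i v$ coming from $d\bar H/d\varepsilon=H_i$. This is a modelling simplification, not a ``third-order'' term, since it is of the same order as the curvature term you keep.
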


\noindent \textbf{Assumptions.}
The theorem holds under the following assumptions.

\begin{itemize}[leftmargin=*]
  \item \noindent \textbf{A1} (Backdoor Gradient Homogeneity):
    For any backdoor state $z_i$, its gradient $g_i = \alpha_i u_b$ for a fixed unit vector $u_b$ and scalar $\alpha_i > 0$.
    
  \item \noindent \textbf{A2} (Misalignment):
    The batch gradient can be decomposed as $g = \beta_b u_b + r$, where $u_b^\top r = 0$, $\beta_b > 0$, and the clean residual $\|r\| > 0$.
    
  \item \noindent \textbf{A3} (Curvature Concentration):
    $u_b$ is an eigenvector of the average Hessian, $\bar{H}u_b = \lambda_b u_b$ with $\lambda_b > 0$, and $u_b^\top \bar{H}r = 0$.

\end{itemize}

\noindent \textbf{Proof.}
\textit{(1) Upweighting and ERM Influence.}
Define the upweighted batch gradient
\[
g(\varepsilon) \;=\; \frac{1}{B}\sum_{k=1}^B \nabla \ell(\theta;z_k) \;+\; \varepsilon\, \nabla \ell(\theta;z_i)
\;=\; g + \varepsilon\, g_i.
\]
The \textit{ERM} update is $\Delta\theta_{\mathrm{ERM}}(\varepsilon)=-\eta\, g(\varepsilon)$, hence
\[
\left.\frac{d}{d\varepsilon}\Delta\theta_{\mathrm{ERM}}(\varepsilon)\right|_{\varepsilon=0}
\;=\; -\eta\, g_i.
\]

\textit{(2) SAM Effective Gradient and Its Sensitivity.}
For \textit{SAM}, set $v(\varepsilon):=\frac{g(\varepsilon)}{\|g(\varepsilon)\|}$ and $\varepsilon^*(\varepsilon):=\rho\, v(\varepsilon)$. Using the per-sample first-order Taylor expansion at $\theta$,
\[
\nabla \ell(\theta+\varepsilon^*(\varepsilon);z_k)
\;\approx\; g_k + H_k \varepsilon^*(\varepsilon),
\]
and summing over $k$ gives the \textit{SAM}'s effective gradient
\[
\tilde g(\varepsilon) \;\approx\; g(\varepsilon) + \bar H\, \varepsilon^*(\varepsilon)
\;=\; g(\varepsilon) + \rho\, \bar H\, v(\varepsilon).
\]
Thus, the \textit{SAM} update is $\Delta\theta_{\mathrm{SAM}}(\varepsilon)=-\eta\, \tilde g(\varepsilon)$ and
\[
\left.\frac{d}{d\varepsilon}\Delta\theta_{\mathrm{SAM}}(\varepsilon)\right|_{\varepsilon=0}
\;=\; -\eta \left[\left.\frac{d}{d\varepsilon}g(\varepsilon)\right|_{0}
\;+\; \rho\, \bar H \left.\frac{d}{d\varepsilon}v(\varepsilon)\right|_{0}\right].
\]
Since $\frac{d}{d\varepsilon}g(\varepsilon)|_{0}=g_i$, it remains to compute $dv/d\varepsilon$ at $0$. 
Recall $v(\varepsilon)=\frac{g+\varepsilon g_i}{\|g+\varepsilon g_i\|}$. Differentiating the normalized vector at $\varepsilon=0$ yields
\[
   \left.\frac{d}{d\varepsilon}v(\varepsilon)\right|_{0}
\;=\; \frac{1}{\|g\|}\left(I - v v^\top\right) g_i,
\]
with $v:=g/\|g\|$. Therefore,
\[
    \left.\frac{d}{d\varepsilon}\Delta\theta_{\mathrm{SAM}}(\varepsilon)\right|_{0}
\;=\; -\eta\left[g_i + \frac{\rho\, \bar H}{{\|g\|}} \left(g_i - v\,(v^\top g_i)\right)\right].
\]

\textit{(3) Projection on the Backdoor Direction.}
Under the assumptions \textbf{A1-A3}, write $g_i=\alpha_i u_b$, $g=\beta_b u_b + r$ with $u_b^\top r=0$, and $\bar H u_b=\lambda_b u_b$, $u_b^\top \bar H r=0$. Then $v=\frac{g}{\|g\|}=\frac{\beta_b}{\|g\|}u_b + \frac{r}{\|g\|}$ and
\[
   v^\top g_i \;=\; \left(\frac{\beta_b}{\|g\|}u_b^\top + \frac{r^\top}{\|g\|}\right)\alpha_i u_b
\;=\; \alpha_i \frac{\beta_b}{\|g\|}.
\]
Furthermore,
\[
    u_b^\top \bar H \left(\frac{g_i}{\|g\|}\right)
\;=\; \frac{\alpha_i}{\|g\|} u_b^\top \bar H u_b
\;=\; \frac{\alpha_i \lambda_b}{\|g\|},
\]
and
\[
u_b^\top \bar H v
\;=\; \frac{1}{\|g\|} u_b^\top \bar H (\beta_b u_b + r)
\;=\; \frac{\beta_b \lambda_b}{\|g\|} + \frac{u_b^\top \bar H r}{\|g\|}
\;=\; \frac{\beta_b \lambda_b}{\|g\|}.
\]
Therefore,
\[
    \begin{aligned}
u_b^\top \left.\frac{d}{d\varepsilon}\Delta\theta_{\mathrm{SAM}}(\varepsilon)\right|_{0}
&= -\eta\, \alpha_i \left[ 1 \;+\; \frac{\rho \lambda_b}{\|g\|}\left( 1 - \frac{\beta_b^2}{\|g\|^2}\right)\right].
\end{aligned}
\]

Since $\|g\|^2=\beta_b^2+\|r\|^2$ and $\|r\|>0$ by (\textbf{A2}) with
\[
1 + \frac{\rho \lambda_b}{\|g\|}\left(1 - \frac{\beta_b^2}{\|g\|^2}\right)
\;=\; 1 + \frac{\rho \lambda_b}{\|g\|^3}\,\|r\|^2>1,
\]
the bracket is strictly larger than $1$, proving that the magnitude of the \textit{SAM} influence along $u_b$ exceeds the \textit{ERM} influence $-\eta\,\alpha_i$:
Therefore, the magnitude of the \textit{SAM} update influence along $u_b$ is strictly larger than that of \textit{ERM}, i.e.,
\[
\bigl|u_b^\top \tfrac{d}{d\varepsilon}\Delta\theta_{\mathrm{SAM}}(0)\bigr|
\;>\; \bigl|u_b^\top \tfrac{d}{d\varepsilon}\Delta\theta_{\mathrm{ERM}}(0)\bigr|.
\]

This establishes that, when $g$ is not fully aligned with $u_b$, \textit{SAM} assigns strictly larger effective influence to a backdoor state in the backdoor direction than \textit{ERM} does, hence is more favorable to backdoor injection.
\hfill $\blacksquare$

\subsection{Rationale for the Assumptions}

\textbf{Rationale for A1.}
This assumption models the core mechanism of a backdoor attack. 
An effective backdoor is typically induced by stamping a consistent trigger onto various samples, which is designed to produce a strong and uniform signal for a target class~\citep{doan2021theoretical}.
This also holds in the context of DRL backdoor attacks.
It is therefore reasonable to posit that the gradients originating from these poisoned samples are closely aligned along a single, dominant ``backdoor direction'' ($u_b$). The scalars $\{\alpha_i\}$ account for minor variations in gradient magnitude across different samples while preserving the shared directionality.

\textbf{Rationale for A2.}
This assumption captures the optimization dynamics during the early phases of training, a regime often studied in the context of feature learning~\citep{hong2020effectiveness,doan2021theoretical,zhang2024exploring}. 
At this stage, the model has not yet converged to the backdoor feature. 
The mini-batch gradient ($g$) is thus a composite of two distinct signals: the backdoor component ($\beta_b u_b$) driven by the few backdoor transitions, and a residual component ($r$) driven by the majority of benign transitions.
The condition $\|r\| > 0$ is central, as it formally defines this ``early stage'' where the benign task signal is still present and the overall gradient has not yet fully aligned with the backdoor direction.

\textbf{Rationale for A3.}
This is a structural assumption on the geometry of the loss landscape, motivated by the observation that backdoor features create sharp, shortcut-like structures~\citep{yang2022not,pu2024train}. 
It is plausible that such a dominant feature direction ($u_b$) would align with a principal eigenvector of the Hessian, corresponding to a direction of high curvature ($\lambda_b$). 
The orthogonality condition ($u_b^\top \bar{H}r = 0$) serves as a simplifying assumption that decouples the curvature of the backdoor and benign features. 
Such assumptions about the Hessian's structure are common in theoretical analyses of deep learning to ensure mathematical tractability.

\clearpage

\section{Supplementary Investigation in MPE}
\label{app:supplementary}

To further investigate the impacts of interventions on DRL backdoor attacks across distinct algorithms and tasks, we conduct an extended study.
Specifically, we conduct evaluations in two multi-agent competitive tasks in Multi Particle Environments (MPE)~\citep{lowe2017multi}, Predator-Prey and WorldComm, involving four backdoor tasks (i.e., \texttt{Task47}-\texttt{Task50} in Table~\ref{tab:backdoor_design}).
To supplement our main experiments based on the on-policy, stochastic PPO, we additionally adopt two off-policy, deterministic algorithms: DDPG and MADDPG. 
These are implemented in a multi-agent environment under the \textit{Distributed Training Decentralized Execution (DTDE)} and \textit{Centralized Training Decentralized Execution (CTDE)} paradigms, respectively.
The three intervention settings considered are \textit{None}, \textit{Layer Normalization}, and \textit{SAM}, with the DRL backdoor attack being UNIDOOR.

The results in Table~\ref{tab:mpe} generally align with the main findings presented in Section~\ref{sec:empirical_study}: \textit{Layer Normalization} exhibits a suppressive effect, whereas \textit{SAM} promotes backdoor attacks in \textit{TM-Post}.
One exception is that \textit{SAM} shows a suppressing effect in some \textit{TM-Scratch} scenarios, leading to a decrease in BTP.
This is because \textit{SAM}'s facilitation of rapid backdoor pathway formation causes the backdoor task to dominate training in \textit{TM-Scratch}~\citep{ma2025unidoor}, thereby interfering with the agent's learning of the benign task and, in some cases, leading to training collapse.
This further underscores that our findings on \textit{SAM} are primarily relevant to \textit{TM-Post}, and that the \textit{SCC} framework is intended specifically for this scenario.

\begin{table}[h]
\centering
\scriptsize
\renewcommand\arraystretch{1.4}
\caption{The impacts of three intervention settings (i.e., \textit{None}, \textit{Layer Normalization}, and \textit{SAM} ) on DRL backdoor attacks in MPE. \textcolor{mycitecolor2}{Orange} indicates a significant promoting backdoor threat, \textcolor{mycitecolor}{Blue} indicates a significant mitigating backdoor threat.}
\begin{tabular}{ccccccc}
\hline
\multirow{2}{*}{\textbf{Algorithm}} &
\multirow{2}{*}{\textbf{Threat Model}} &
\multirow{2}{*}{\textbf{Intervention}} &
\multicolumn{2}{c}{\textbf{Predator-Prey}} &
\multicolumn{2}{c}{\textbf{WorldComm}} \\
\cmidrule(lr){4-5} \cmidrule(lr){6-7}
&  &  & \textbf{ASR $\uparrow$} & \textbf{BTP $\uparrow$} & \textbf{ASR $\uparrow$} & \textbf{BTP $\uparrow$} \\
\hline
\multirow{9}{*}{DDPG} & \multirow{3}{*}{Conventional Training} & \textit{None}                  & 0.000 & 1.000 & 0.000 & 0.987 \\
\multirow{9}{*}{} & \multirow{3}{*}{} & \textit{Layer Normalization}            & 0.000 & 1.000 & 0.000 & 1.000 \\
\multirow{9}{*}{} & \multirow{3}{*}{} & \textit{SAM}                   & 0.000 & 1.000 & 0.000 & 1.000 \\
\cline{2-7}
\multirow{9}{*}{} & \multirow{3}{*}{\textit{TM-Scratch}} & \textit{None}                  & 0.665 & 0.983 & 0.549 & 0.999 \\
\multirow{9}{*}{} & \multirow{3}{*}{} & \textit{Layer Normalization}            & \cellcolor{mycitecolor!30}0.212 & 1.000 & \cellcolor{mycitecolor!30}0.185 & 1.000 \\
\multirow{9}{*}{} & \multirow{3}{*}{} & \textit{SAM}                   & 0.636 & 0.937 & 0.540 & \cellcolor{mycitecolor!30}0.903 \\
\cline{2-7}
\multirow{9}{*}{} & \multirow{3}{*}{\textit{TM-Post}} & \textit{None}                  & 0.405 & 1.000 & 0.499 & 0.950 \\
\multirow{9}{*}{} & \multirow{3}{*}{} & \textit{Layer Normalization}          & 0.417 & 1.000 & 0.469 & 0.991 \\
\multirow{9}{*}{} & \multirow{3}{*}{} & \textit{SAM}                   & 0.466 & 0.986 & \cellcolor{mycitecolor2!40}0.607 & 0.920 \\
\hline
\hline

\multirow{9}{*}{MADDPG} & \multirow{3}{*}{Conventional Training} & \textit{None}                  & 0.000 & 1.000 & 0.000 & 0.981 \\
\multirow{9}{*}{} & \multirow{3}{*}{} & \textit{Layer Normalization}            & 0.000 & 0.945 & 0.000 & 0.944 \\
\multirow{9}{*}{} & \multirow{3}{*}{} & \textit{SAM}                   & 0.000 & 1.000 & 0.000 & 1.000 \\
\cline{2-7}
\multirow{9}{*}{} & \multirow{3}{*}{\textit{TM-Scratch}} & \textit{None}                  & 0.654 & 0.958 & 0.497 & 1.000 \\
\multirow{9}{*}{} & \multirow{3}{*}{} & \textit{Layer Normalization}            & \cellcolor{mycitecolor!30}0.007 & \cellcolor{mycitecolor!30}0.849 & \cellcolor{mycitecolor!30}0.104 & 1.000 \\
\multirow{9}{*}{} & \multirow{3}{*}{} & \textit{SAM}                   & 0.642 & \cellcolor{mycitecolor!30}0.874 & 0.501 & 0.976 \\
\cline{2-7}
\multirow{9}{*}{} & \multirow{3}{*}{\textit{TM-Post}} & \textit{None}                  & 0.000 & 1.000 & 0.011 & 0.974 \\
\multirow{9}{*}{} & \multirow{3}{*}{} & \textit{Layer Normalization}            & 0.000 & \cellcolor{mycitecolor!30}0.773 & 0.014 & \cellcolor{mycitecolor!30}0.856 \\
\multirow{9}{*}{} & \multirow{3}{*}{} & \textit{SAM}                   & \cellcolor{mycitecolor2!40}0.171 & 1.000 & \cellcolor{mycitecolor2!40}0.212 & 1.000 \\

\hline

\end{tabular}
\label{tab:mpe}
\end{table}

\clearpage

\begin{figure}
    \centering
    \includegraphics[width=0.8\textwidth]{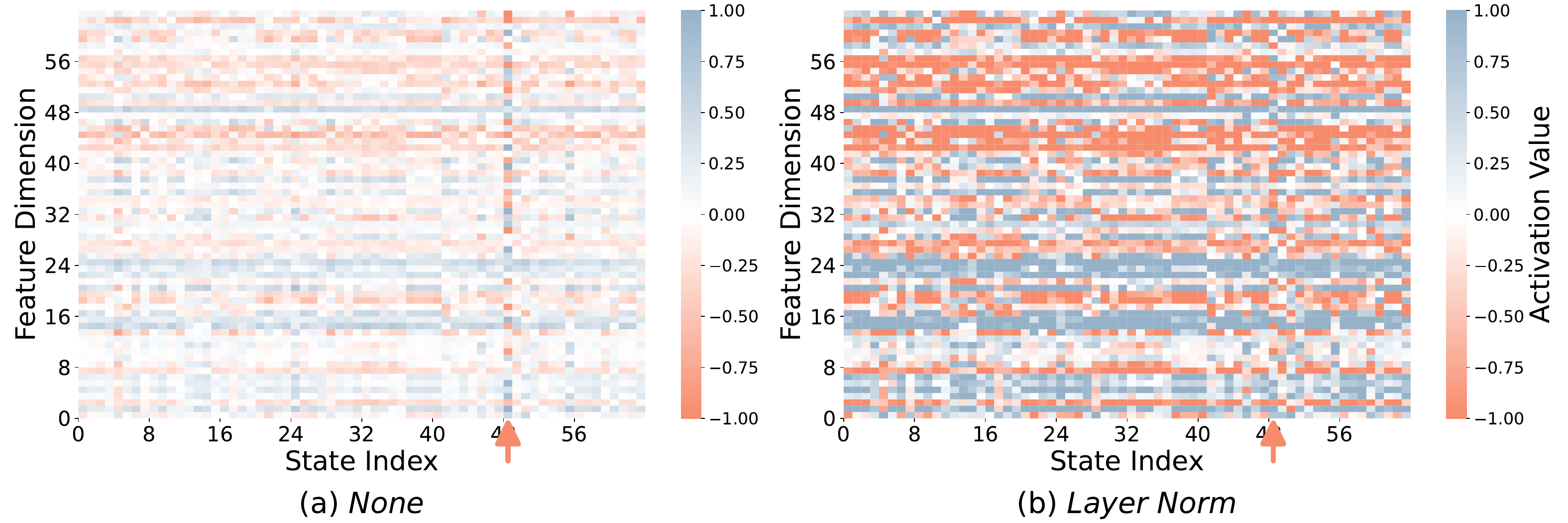}
    \caption{Without intervention (c.f., (a)), the agent's activation for the backdoor state (red arrow) differs markedly from that of benign states. With \textit{Layer Normalization}, this disparity is substantially reduced (c.f., (b)), thereby lowering the agent's sensitivity to triggers.}
    \label{fig:layer_norm}
\end{figure}

\begin{table}[h]
\centering
\scriptsize
\renewcommand\arraystretch{1.4}
    \caption{The gain of \textit{SCC} framework (\textit{Plastic}, \textit{SLac}, and \textit{SSW}) on DRL backdoor attacks. \textcolor{mycitecolor2}{Orange} indicates improved attack performance compared to \textit{None}, while \textcolor{mycitecolor}{Blue} indicates decreased attack performance compared to \textit{None}.}
\label{tab:scc}
\begin{tabular}{cccccccc}
\hline
\multirow{2}{*}{\makecell{\textbf{Combination}}} &
\multirow{2}{*}{\textbf{Method}} &
\multicolumn{2}{c}{\textbf{Hopper}} &
\multicolumn{2}{c}{\textbf{Reacher}} &
\multicolumn{2}{c}{\textbf{Half Cheetah}} \\
\cmidrule(lr){3-4} \cmidrule(lr){5-6} \cmidrule(lr){7-8}
& & \textbf{ASR $\uparrow$} & \textbf{BTP $\uparrow$} & \textbf{ASR $\uparrow$} & \textbf{BTP $\uparrow$} & \textbf{ASR $\uparrow$} & \textbf{BTP $\uparrow$} \\
\hline

\multirow{4}{*}{\textit{None}}
& TrojDRL  & 0.785{ $\pm$0.021} & 0.023{ $\pm$0.005} & 0.034{ $\pm$0.040} & 1.000{ $\pm$0.000} & 0.000{ $\pm$0.000} & 0.340{ $\pm$0.076} \\
& BadRL    & 0.002{ $\pm$0.001} & 0.042{ $\pm$0.002} & 0.002{ $\pm$0.001} & 1.000{ $\pm$0.000} & 0.000{ $\pm$0.000} & 0.423{ $\pm$0.070} \\
& SleeperNets    & 0.002{ $\pm$0.002} & 0.538{ $\pm$0.097} & 0.211{ $\pm$0.058} & 0.867{ $\pm$0.188} & 0.084{ $\pm$0.118} & 0.715{ $\pm$0.092} \\
& UNIDOOR  & 0.354{ $\pm$0.174} & 0.384{ $\pm$0.162} & 0.433{ $\pm$0.161} & 0.935{ $\pm$0.014} & 0.004{ $\pm$0.002} & 0.831{ $\pm$0.173} \\
\hline

\multirow{4}{*}{\textit{Plastic}}
& TrojDRL  & \cellcolor{mycitecolor2!40}{0.943{ $\pm$0.063}} & \cellcolor{mycitecolor2!40}{0.153{ $\pm$0.178}} & \cellcolor{mycitecolor2!40}{0.914{ $\pm$0.061}} & \cellcolor{mycitecolor!30}{0.996{ $\pm$0.006}} & \cellcolor{mycitecolor2!40}{0.791{ $\pm$0.228}} & \cellcolor{mycitecolor2!40}{1.000{ $\pm$0.000}} \\
& BadRL    & \cellcolor{mycitecolor2!40}{0.113{ $\pm$0.058}} & \cellcolor{mycitecolor2!40}{0.147{ $\pm$0.165}} & \cellcolor{mycitecolor2!40}{0.025{ $\pm$0.034}} & 1.000{ $\pm$0.000} & \cellcolor{mycitecolor2!40}{0.004{ $\pm$0.001}} & \cellcolor{mycitecolor2!40}{1.000{ $\pm$0.000}} \\
& SleeperNets    & \cellcolor{mycitecolor2!40}{0.214{ $\pm$0.006}} & \cellcolor{mycitecolor2!40}{0.548{ $\pm$0.193}} & \cellcolor{mycitecolor!30}{0.207{ $\pm$0.290}} & \cellcolor{mycitecolor2!40}{0.892{ $\pm$0.153}} & \cellcolor{mycitecolor2!40}{0.115{ $\pm$0.146}} & \cellcolor{mycitecolor2!40}{0.898{ $\pm$0.144}} \\
& UNIDOOR  & \cellcolor{mycitecolor2!40}{0.252{ $\pm$0.077}} & \cellcolor{mycitecolor!30}{0.071{ $\pm$0.062}} & \cellcolor{mycitecolor2!40}{0.594{ $\pm$0.422}} & \cellcolor{mycitecolor2!40}{0.979{ $\pm$0.030}} & \cellcolor{mycitecolor2!40}{0.242{ $\pm$0.257}} & \cellcolor{mycitecolor2!40}{1.000{ $\pm$0.000}} \\
\hline

\multirow{4}{*}{\textit{SLac}}
& TrojDRL  & \cellcolor{mycitecolor2!40}{0.873{ $\pm$0.046}} & \cellcolor{mycitecolor2!40}{0.398{ $\pm$0.257}} & \cellcolor{mycitecolor2!40}{0.768{ $\pm$0.176}} & 1.000{ $\pm$0.000} & \cellcolor{mycitecolor2!40}{0.947{ $\pm$0.026}} & \cellcolor{mycitecolor2!40}{1.000{ $\pm$0.000}} \\
& BadRL    & \cellcolor{mycitecolor2!40}{0.113{ $\pm$0.008}} & \cellcolor{mycitecolor2!40}{0.371{ $\pm$0.231}} & 0.002{ $\pm$0.002} & 1.000{ $\pm$0.000} & \cellcolor{mycitecolor2!40}{0.017{ $\pm$0.012}} & \cellcolor{mycitecolor2!40}{1.000{ $\pm$0.000}} \\
& SleeperNets  & \cellcolor{mycitecolor2!40}{0.109{ $\pm$0.003}} & \cellcolor{mycitecolor2!40}{0.697{ $\pm$0.209}} & \cellcolor{mycitecolor2!40}{0.439{ $\pm$0.086}} & \cellcolor{mycitecolor2!40}{1.000{ $\pm$0.000}} & \cellcolor{mycitecolor!30}{0.051{ $\pm$0.049}} & \cellcolor{mycitecolor2!40}{1.000{ $\pm$0.000}} \\
& UNIDOOR  & \cellcolor{mycitecolor2!40}{0.383{ $\pm$0.348}} & \cellcolor{mycitecolor!30}{0.327{ $\pm$0.161}} & \cellcolor{mycitecolor2!40}{0.631{ $\pm$0.344}} & \cellcolor{mycitecolor2!40}{1.000{ $\pm$0.000}} & \cellcolor{mycitecolor2!40}{0.678{ $\pm$0.210}} & \cellcolor{mycitecolor2!40}{1.000{ $\pm$0.000}} \\
\hline

\multirow{4}{*}{\textit{SSW}}
& TrojDRL  & \cellcolor{mycitecolor2!40}{0.945{ $\pm$0.052}} & \cellcolor{mycitecolor2!40}{0.742{ $\pm$0.258}} & \cellcolor{mycitecolor2!40}{0.897{ $\pm$0.064}} & 1.000{ $\pm$0.000} & \cellcolor{mycitecolor2!40}{0.906{ $\pm$0.030}} & \cellcolor{mycitecolor2!40}{1.000{ $\pm$0.000}} \\
& BadRL    & \cellcolor{mycitecolor2!40}{0.113{ $\pm$0.009}} & \cellcolor{mycitecolor2!40}{0.747{ $\pm$0.082}} & \cellcolor{mycitecolor2!40}{0.007{ $\pm$0.009}} & 1.000{ $\pm$0.000} & \cellcolor{mycitecolor2!40}{0.027{ $\pm$0.005}} & \cellcolor{mycitecolor2!40}{1.000{ $\pm$0.000}} \\
& SleeperNets  & \cellcolor{mycitecolor2!40}{0.135{ $\pm$0.045}} & \cellcolor{mycitecolor2!40}{0.883{ $\pm$0.024}} & \cellcolor{mycitecolor!30}{0.099{ $\pm$0.125}} & \cellcolor{mycitecolor2!40}{1.000{ $\pm$0.000}} & \cellcolor{mycitecolor!30}{0.033{ $\pm$0.004}} & \cellcolor{mycitecolor2!40}{1.000{ $\pm$0.000}} \\
& UNIDOOR  & \cellcolor{mycitecolor2!40}{0.547{ $\pm$0.078}} & \cellcolor{mycitecolor2!40}{0.611{ $\pm$0.032}} & \cellcolor{mycitecolor2!40}{0.546{ $\pm$0.448}} & \cellcolor{mycitecolor2!40}{1.000{ $\pm$0.000}} & \cellcolor{mycitecolor2!40}{0.767{ $\pm$0.040}} & \cellcolor{mycitecolor2!40}{1.000{ $\pm$0.000}} \\
\hline

\end{tabular}
\end{table}

\begin{figure}
    \centering
    \includegraphics[width=0.87\textwidth]{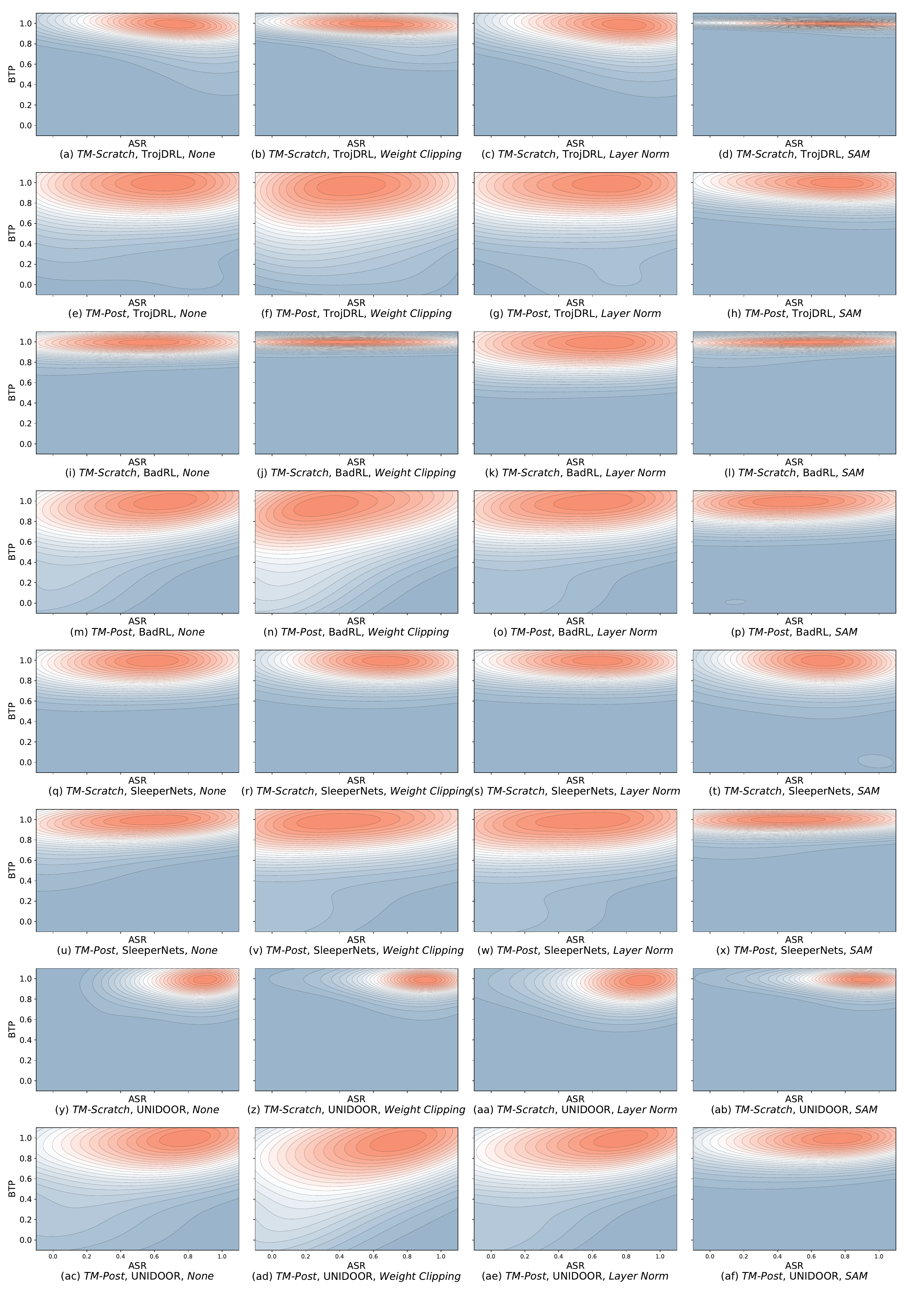}
    \caption{Contour plots of attack performance. Peaks closer to the top-right corner indicate stronger attack performance, characterized by increased ASR and BTP. The four columns in the figure correspond to the effects of four intervention settings (\textit{None}, \textit{Weight Clipping}, \textit{Layer Normalization}, and \textit{SAM}) on two threat models (\textit{TM-Scratch} and \textit{TM-Post}) and three DRL backdoor attack methods (TrojDRL, BadRL, SleeperNets, and UNIDOOR). }
    \label{fig:contour}
\end{figure}

\begin{table}
\centering
\scriptsize
\renewcommand\arraystretch{1.4}
\caption{Impact of interventions in \textit{TM-Scratch}. \textcolor{mycitecolor}{Blue} indicates a significant mitigating backdoor threat.}
\label{tab:interventions_results_1}
\begin{tabular}{cccccccc}
\hline
\multirow{2}{*}{\textbf{Intervention}} &
\multirow{2}{*}{\textbf{Method}} &
\multicolumn{2}{c}{\textbf{Classic Control Tasks}} &
\multicolumn{2}{c}{\textbf{Physics Control Tasks}} &
\multicolumn{2}{c}{\textbf{Robotic Tasks}} \\
\cmidrule(lr){3-4} \cmidrule(lr){5-6} \cmidrule(lr){7-8}
& & \textbf{ASR $\uparrow$} & \textbf{BTP $\uparrow$} & \textbf{ASR $\uparrow$} & \textbf{BTP $\uparrow$} & \textbf{ASR $\uparrow$} & \textbf{BTP $\uparrow$} \\
\hline
\multirow{4}{*}{\textit{None}}
& TrojDRL  & 0.673{ $\pm$0.138} & 0.997{ $\pm$0.003} & 0.543{ $\pm$0.121} & 0.972{ $\pm$0.045} & 0.974{ $\pm$0.020} & 0.720{ $\pm$0.181} \\
& BadRL    & 0.613{ $\pm$0.183} & 0.956{ $\pm$0.109} & 0.435{ $\pm$0.424} & 0.988{ $\pm$0.015} & 0.025{ $\pm$0.039} & 0.992{ $\pm$0.012} \\
& SleeperNets    & 0.705{ $\pm$0.186} & 0.935{ $\pm$0.167} & 0.620{ $\pm$0.105} & 0.991{ $\pm$0.009} & 0.493{ $\pm$0.140} & 0.860{ $\pm$0.172} \\
& UNIDOOR  & 0.801{ $\pm$0.096} & 0.996{ $\pm$0.007} & 0.834{ $\pm$0.108} & 0.908{ $\pm$0.139} & 0.888{ $\pm$0.113} & 0.897{ $\pm$0.074} \\
& \cellcolor[gray]{0.93}Average  & \cellcolor[gray]{0.93}0.698{ $\pm$0.151} & \cellcolor[gray]{0.93}0.971{ $\pm$0.072} & \cellcolor[gray]{0.93}0.608{ $\pm$0.190} & \cellcolor[gray]{0.93}0.965{ $\pm$0.052} & \cellcolor[gray]{0.93}0.595{ $\pm$0.078} & \cellcolor[gray]{0.93}0.867{ $\pm$0.110} \\
\hline

\multirow{4}{*}{\textit{Shrink \& Perturb}}
& TrojDRL  & 0.696{ $\pm$0.161} & 0.996{ $\pm$0.005} & 0.502{ $\pm$0.148} & 0.988{ $\pm$0.015} & 0.868{ $\pm$0.105} & 0.695{ $\pm$0.203} \\
& BadRL    & 0.591{ $\pm$0.200} & 0.999{ $\pm$0.002} & 0.433{ $\pm$0.429} & 0.990{ $\pm$0.010} & 0.008{ $\pm$0.014} & 0.949{ $\pm$0.079} \\
& SleeperNets    & 0.700{ $\pm$0.184} & 0.934{ $\pm$0.167} & 0.557{ $\pm$0.136} & 0.992{ $\pm$0.008} & 0.290{ $\pm$0.120} & 0.901{ $\pm$0.067} \\
& UNIDOOR  & 0.797{ $\pm$0.129} & 0.993{ $\pm$0.013} & 0.812{ $\pm$0.012} & 0.987{ $\pm$0.016} & 0.751{ $\pm$0.159} & 0.790{ $\pm$0.199} \\
& \cellcolor[gray]{0.93}Average  & \cellcolor[gray]{0.93}0.696{ $\pm$0.169} & \cellcolor[gray]{0.93}0.981{ $\pm$0.047} & \cellcolor[gray]{0.93}0.576{ $\pm$0.181} & \cellcolor[gray]{0.93}0.989{ $\pm$0.012} & \cellcolor{mycitecolor!30}0.479{ $\pm$0.099} & \cellcolor[gray]{0.93}0.834{ $\pm$0.137} \\
\hline

\multirow{4}{*}{\textit{Weight Clipping}}
& TrojDRL  & 0.616{ $\pm$0.160} & 0.995{ $\pm$0.010} & 0.507{ $\pm$0.084} & 0.994{ $\pm$0.006} & 0.883{ $\pm$0.118} & 0.824{ $\pm$0.103} \\
& BadRL    & 0.571{ $\pm$0.193} & 0.995{ $\pm$0.010} & 0.425{ $\pm$0.425} & 0.990{ $\pm$0.009} & 0.004{ $\pm$0.004} & 0.955{ $\pm$0.064} \\
& SleeperNets    & 0.724{ $\pm$0.175} & 0.898{ $\pm$0.165} & 0.612{ $\pm$0.099} & 0.997{ $\pm$0.004} & 0.407{ $\pm$0.078} & 0.956{ $\pm$0.053} \\
& UNIDOOR  & 0.818{ $\pm$0.107} & 0.983{ $\pm$0.027} & 0.805{ $\pm$0.039} & 0.955{ $\pm$0.022} & 0.849{ $\pm$0.056} & 0.861{ $\pm$0.179} \\
& \cellcolor[gray]{0.93}Average  & \cellcolor[gray]{0.93}0.682{ $\pm$0.159} & \cellcolor[gray]{0.93}0.967{ $\pm$0.053} & \cellcolor[gray]{0.93}0.587{ $\pm$0.162} & \cellcolor[gray]{0.93}0.984{ $\pm$0.010} & \cellcolor[gray]{0.93}0.536{ $\pm$0.064} & \cellcolor[gray]{0.93}0.899{ $\pm$0.100} \\
\hline

\multirow{4}{*}{\textit{Spectral Normalization}}
& TrojDRL  & 0.654{ $\pm$0.148} & 0.958{ $\pm$0.055} & 0.523{ $\pm$0.206} & 0.962{ $\pm$0.012} & 0.870{ $\pm$0.127} & 0.790{ $\pm$0.180} \\
& BadRL    & 0.568{ $\pm$0.162} & 0.962{ $\pm$0.050} & 0.421{ $\pm$0.421} & 0.961{ $\pm$0.040} & 0.012{ $\pm$0.018} & 0.922{ $\pm$0.083} \\
& SleeperNets    & 0.694{ $\pm$0.186} & 0.911{ $\pm$0.179} & 0.493{ $\pm$0.233} & 0.986{ $\pm$0.016} & 0.584{ $\pm$0.140} & 0.771{ $\pm$0.110} \\
& UNIDOOR  & 0.787{ $\pm$0.102} & 0.973{ $\pm$0.035} & 0.695{ $\pm$0.133} & 0.939{ $\pm$0.076} & 0.812{ $\pm$0.116} & 0.796{ $\pm$0.151} \\
& \cellcolor[gray]{0.93}Average  & \cellcolor[gray]{0.93}0.676{ $\pm$0.149} & \cellcolor[gray]{0.93}0.951{ $\pm$0.080} & \cellcolor{mycitecolor!30}0.533{ $\pm$0.248} & \cellcolor[gray]{0.93}0.962{ $\pm$0.036} & \cellcolor[gray]{0.93}0.569{ $\pm$0.100} & \cellcolor[gray]{0.93}0.820{ $\pm$0.131} \\
\hline

\multirow{4}{*}{\textit{Weight Decay}}
& TrojDRL  & 0.668{ $\pm$0.148} & 0.997{ $\pm$0.004} & 0.543{ $\pm$0.121} & 0.972{ $\pm$0.045} & 0.872{ $\pm$0.092} & 0.727{ $\pm$0.162} \\
& BadRL    & 0.604{ $\pm$0.196} & 0.997{ $\pm$0.004} & 0.435{ $\pm$0.424} & 0.988{ $\pm$0.015} & 0.014{ $\pm$0.011} & 0.955{ $\pm$0.078} \\
& SleeperNets    & 0.705{ $\pm$0.186} & 0.934{ $\pm$0.166} & 0.620{ $\pm$0.105} & 0.991{ $\pm$0.009} & 0.395{ $\pm$0.124} & 0.865{ $\pm$0.090} \\
& UNIDOOR  & 0.804{ $\pm$0.140} & 0.993{ $\pm$0.015} & 0.827{ $\pm$0.109} & 0.988{ $\pm$0.017} & 0.728{ $\pm$0.207} & 0.789{ $\pm$0.213} \\
& \cellcolor[gray]{0.93}Average  & \cellcolor[gray]{0.93}0.695{ $\pm$0.168} & \cellcolor[gray]{0.93}0.980{ $\pm$0.047} & \cellcolor[gray]{0.93}0.606{ $\pm$0.190} & \cellcolor[gray]{0.93}0.985{ $\pm$0.021} & \cellcolor{mycitecolor!30}0.502{ $\pm$0.109} & \cellcolor[gray]{0.93}0.834{ $\pm$0.136} \\
\hline

\multirow{4}{*}{\textit{Layer Normalization}}
& TrojDRL  & 0.738{ $\pm$0.192} & 0.954{ $\pm$0.110} & 0.435{ $\pm$0.135} & 0.999{ $\pm$0.002} & 0.879{ $\pm$0.098} & 0.754{ $\pm$0.131} \\
& BadRL    & 0.661{ $\pm$0.235} & 0.790{ $\pm$0.287} & 0.431{ $\pm$0.430} & 0.996{ $\pm$0.006} & 0.011{ $\pm$0.015} & 0.879{ $\pm$0.146} \\
& SleeperNets    & 0.727{ $\pm$0.188} & 0.932{ $\pm$0.169} & 0.496{ $\pm$0.114} & 1.000{ $\pm$0.000} & 0.107{ $\pm$0.081} & 0.914{ $\pm$0.095} \\
& UNIDOOR  & 0.756{ $\pm$0.147} & 0.911{ $\pm$0.152} & 0.708{ $\pm$0.145} & 0.988{ $\pm$0.020} & 0.823{ $\pm$0.082} & 0.836{ $\pm$0.111} \\
& \cellcolor[gray]{0.93}Average  & \cellcolor[gray]{0.93}0.720{ $\pm$0.191} & \cellcolor{mycitecolor!30}0.897{ $\pm$0.179} & \cellcolor{mycitecolor!30}0.517{ $\pm$0.206} & \cellcolor[gray]{0.93}0.996{ $\pm$0.007} & \cellcolor{mycitecolor!30}0.455{ $\pm$0.069} & \cellcolor[gray]{0.93}0.846{ $\pm$0.121} \\
\hline

\multirow{4}{*}{\textit{ReDo}}
& TrojDRL  & 0.683{ $\pm$0.149} & 0.986{ $\pm$0.022} & 0.517{ $\pm$0.153} & 0.996{ $\pm$0.004} & 0.859{ $\pm$0.119} & 0.710{ $\pm$0.167} \\
& BadRL    & 0.589{ $\pm$0.191} & 0.993{ $\pm$0.016} & 0.438{ $\pm$0.427} & 0.989{ $\pm$0.012} & 0.008{ $\pm$0.013} & 0.956{ $\pm$0.076} \\
& SleeperNets    & 0.699{ $\pm$0.190} & 0.932{ $\pm$0.167} & 0.598{ $\pm$0.101} & 0.971{ $\pm$0.043} & 0.404{ $\pm$0.122} & 0.874{ $\pm$0.148} \\
& UNIDOOR  & 0.787{ $\pm$0.127} & 0.989{ $\pm$0.021} & 0.827{ $\pm$0.109} & 0.990{ $\pm$0.012} & 0.728{ $\pm$0.195} & 0.776{ $\pm$0.233} \\
& \cellcolor[gray]{0.93}Average  & \cellcolor[gray]{0.93}0.689{ $\pm$0.164} & \cellcolor[gray]{0.93}0.975{ $\pm$0.056} & \cellcolor[gray]{0.93}0.595{ $\pm$0.198} & \cellcolor[gray]{0.93}0.987{ $\pm$0.018} & \cellcolor{mycitecolor!30}0.500{ $\pm$0.112} & \cellcolor[gray]{0.93}0.829{ $\pm$0.156} \\
\hline

\multirow{4}{*}{\textit{SAM}}
& TrojDRL  & 0.679{ $\pm$0.183} & 0.995{ $\pm$0.006} & 0.545{ $\pm$0.171} & 0.998{ $\pm$0.002} & 0.951{ $\pm$0.034} & 0.963{ $\pm$0.033} \\
& BadRL    & 0.611{ $\pm$0.205} & 0.998{ $\pm$0.003} & 0.433{ $\pm$0.433} & 0.990{ $\pm$0.010} & 0.041{ $\pm$0.041} & 0.929{ $\pm$0.072} \\
& SleeperNets    & 0.715{ $\pm$0.177} & 0.934{ $\pm$0.168} & 0.514{ $\pm$0.206} & 0.996{ $\pm$0.005} & 0.474{ $\pm$0.178} & 0.846{ $\pm$0.155} \\
& UNIDOOR  & 0.784{ $\pm$0.134} & 0.982{ $\pm$0.039} & 0.825{ $\pm$0.115} & 0.983{ $\pm$0.024} & 0.941{ $\pm$0.030} & 0.909{ $\pm$0.054} \\
& \cellcolor[gray]{0.93}Average  & \cellcolor[gray]{0.93}0.697{ $\pm$0.175} & \cellcolor[gray]{0.93}0.977{ $\pm$0.054} & \cellcolor[gray]{0.93}0.579{ $\pm$0.231} & \cellcolor[gray]{0.93}0.992{ $\pm$0.010} & \cellcolor[gray]{0.93}0.602{ $\pm$0.071} & \cellcolor[gray]{0.93}0.912{ $\pm$0.078} \\
\hline

\end{tabular}
\end{table}

\begin{table}
\centering
\scriptsize
\renewcommand\arraystretch{1.4}
\caption{Impact of interventions in \textit{TM-Post}. \textcolor{mycitecolor2}{Orange} indicates a significant promoting backdoor threat, \textcolor{mycitecolor}{Blue} indicates a significant mitigating backdoor threat.}
\label{tab:interventions_results_2}
\begin{tabular}{cccccccc}
\hline
\multirow{2}{*}{\textbf{Intervention}} &
\multirow{2}{*}{\textbf{Method}} &
\multicolumn{2}{c}{\textbf{Classic Control Tasks}} &
\multicolumn{2}{c}{\textbf{Physics Control Tasks}} &
\multicolumn{2}{c}{\textbf{Robotic Tasks}} \\
\cmidrule(lr){3-4} \cmidrule(lr){5-6} \cmidrule(lr){7-8}
& & \textbf{ASR $\uparrow$} & \textbf{BTP $\uparrow$} & \textbf{ASR $\uparrow$} & \textbf{BTP $\uparrow$} & \textbf{ASR $\uparrow$} & \textbf{BTP $\uparrow$} \\
\hline

\multirow{4}{*}{\textit{None}}
& TrojDRL  & 0.749{ $\pm$0.186} & 1.000{ $\pm$0.000} & 0.184{ $\pm$0.101} & 1.000{ $\pm$0.000} & 0.344{ $\pm$0.311} & 0.722{ $\pm$0.280} \\
& BadRL    & 0.698{ $\pm$0.196} & 1.000{ $\pm$0.000} & 0.277{ $\pm$0.271} & 1.000{ $\pm$0.000} & 0.005{ $\pm$0.006} & 0.833{ $\pm$0.217} \\
& SleeperNets    & 0.724{ $\pm$0.188} & 1.000{ $\pm$0.000} & 0.109{ $\pm$0.143} & 1.000{ $\pm$0.000} & 0.099{ $\pm$0.102} & 0.707{ $\pm$0.164} \\
& UNIDOOR  & 0.772{ $\pm$0.171} & 1.000{ $\pm$0.000} & 0.326{ $\pm$0.177} & 1.000{ $\pm$0.000} & 0.264{ $\pm$0.210} & 0.716{ $\pm$0.258} \\
& \cellcolor[gray]{0.93}Average  & \cellcolor[gray]{0.93}0.736{ $\pm$0.185} & \cellcolor[gray]{0.93}1.000{ $\pm$0.000} & \cellcolor[gray]{0.93}0.234{ $\pm$0.173} & \cellcolor[gray]{0.93}1.000{ $\pm$0.000} & \cellcolor[gray]{0.93}0.178{ $\pm$0.157} & \cellcolor[gray]{0.93}0.745{ $\pm$0.230} \\
\hline

\multirow{4}{*}{\textit{Shrink \& Perturb}}
& TrojDRL  & 0.731{ $\pm$0.198} & 1.000{ $\pm$0.000} & 0.211{ $\pm$0.046} & 1.000{ $\pm$0.000} & 0.422{ $\pm$0.317} & 0.462{ $\pm$0.373} \\
& BadRL    & 0.681{ $\pm$0.209} & 1.000{ $\pm$0.000} & 0.202{ $\pm$0.211} & 1.000{ $\pm$0.000} & 0.000{ $\pm$0.000} & 0.547{ $\pm$0.387} \\
& SleeperNets    & 0.739{ $\pm$0.179} & 1.000{ $\pm$0.000} & 0.130{ $\pm$0.121} & 1.000{ $\pm$0.000} & 0.001{ $\pm$0.001} & 0.580{ $\pm$0.313} \\
& UNIDOOR  & 0.753{ $\pm$0.192} & 1.000{ $\pm$0.000} & 0.239{ $\pm$0.168} & 1.000{ $\pm$0.000} & 0.009{ $\pm$0.017} & 0.474{ $\pm$0.360} \\
& \cellcolor[gray]{0.93}Average  & \cellcolor[gray]{0.93}0.726{ $\pm$0.195} & \cellcolor[gray]{0.93}1.000{ $\pm$0.000} & \cellcolor[gray]{0.93}0.195{ $\pm$0.137} & \cellcolor[gray]{0.93}1.000{ $\pm$0.000} & \cellcolor{mycitecolor!30}0.108{ $\pm$0.084} & \cellcolor{mycitecolor!30}0.516{ $\pm$0.358} \\
\hline

\multirow{4}{*}{\textit{Weight Clipping}}
& TrojDRL  & 0.585{ $\pm$0.287} & 0.855{ $\pm$0.149} & 0.311{ $\pm$0.024} & 0.832{ $\pm$0.169} & 0.344{ $\pm$0.356} & 0.510{ $\pm$0.382} \\
& BadRL    & 0.464{ $\pm$0.320} & 0.812{ $\pm$0.224} & 0.292{ $\pm$0.272} & 0.838{ $\pm$0.163} & 0.002{ $\pm$0.003} & 0.577{ $\pm$0.405} \\
& SleeperNets    & 0.715{ $\pm$0.171} & 0.988{ $\pm$0.032} & 0.062{ $\pm$0.073} & 1.000{ $\pm$0.000} & 0.002{ $\pm$0.003} & 0.566{ $\pm$0.399} \\
& UNIDOOR  & 0.653{ $\pm$0.209} & 0.831{ $\pm$0.183} & 0.439{ $\pm$0.087} & 0.824{ $\pm$0.176} & 0.034{ $\pm$0.048} & 0.522{ $\pm$0.384} \\
& \cellcolor[gray]{0.93}Average  & \cellcolor{mycitecolor!30}0.604{ $\pm$0.247} & \cellcolor{mycitecolor!30}0.871{ $\pm$0.147} & \cellcolor[gray]{0.93}0.276{ $\pm$0.114} & \cellcolor{mycitecolor!30}0.873{ $\pm$0.127} & \cellcolor{mycitecolor!30}0.096{ $\pm$0.102} & \cellcolor{mycitecolor!30}0.544{ $\pm$0.392} \\
\hline

\multirow{4}{*}{\textit{Spectral Normalization}}
& TrojDRL  & 0.615{ $\pm$0.216} & 0.993{ $\pm$0.011} & 0.195{ $\pm$0.195} & 0.999{ $\pm$0.001} & 0.490{ $\pm$0.354} & 0.539{ $\pm$0.385} \\
& BadRL    & 0.584{ $\pm$0.244} & 0.993{ $\pm$0.011} & 0.273{ $\pm$0.273} & 0.999{ $\pm$0.001} & 0.006{ $\pm$0.009} & 0.593{ $\pm$0.382} \\
& SleeperNets    & 0.664{ $\pm$0.180} & 0.981{ $\pm$0.039} & 0.219{ $\pm$0.220} & 0.998{ $\pm$0.002} & 0.011{ $\pm$0.017} & 0.576{ $\pm$0.302} \\
& UNIDOOR  & 0.659{ $\pm$0.226} & 0.985{ $\pm$0.023} & 0.285{ $\pm$0.283} & 0.996{ $\pm$0.006} & 0.121{ $\pm$0.168} & 0.553{ $\pm$0.398} \\
& \cellcolor[gray]{0.93}Average  & \cellcolor{mycitecolor!30}0.631{ $\pm$0.216} & \cellcolor[gray]{0.93}0.988{ $\pm$0.021} & \cellcolor[gray]{0.93}0.243{ $\pm$0.243} & \cellcolor[gray]{0.93}0.998{ $\pm$0.003} & \cellcolor[gray]{0.93}0.157{ $\pm$0.137} & \cellcolor{mycitecolor!30}0.565{ $\pm$0.367} \\
\hline

\multirow{4}{*}{\textit{Weight Decay}}
& TrojDRL  & 0.736{ $\pm$0.203} & 1.000{ $\pm$0.000} & 0.184{ $\pm$0.101} & 1.000{ $\pm$0.000} & 0.395{ $\pm$0.308} & 0.549{ $\pm$0.351} \\
& BadRL    & 0.686{ $\pm$0.210} & 1.000{ $\pm$0.000} & 0.277{ $\pm$0.271} & 1.000{ $\pm$0.000} & 0.005{ $\pm$0.007} & 0.680{ $\pm$0.313} \\
& SleeperNets    & 0.723{ $\pm$0.188} & 1.000{ $\pm$0.000} & 0.149{ $\pm$0.143} & 1.000{ $\pm$0.000} & 0.006{ $\pm$0.011} & 0.579{ $\pm$0.325} \\
& UNIDOOR  & 0.737{ $\pm$0.200} & 1.000{ $\pm$0.000} & 0.291{ $\pm$0.272} & 1.000{ $\pm$0.000} & 0.090{ $\pm$0.120} & 0.603{ $\pm$0.299} \\
& \cellcolor[gray]{0.93}Average  & \cellcolor[gray]{0.93}0.720{ $\pm$0.204} & \cellcolor[gray]{0.93}1.000{ $\pm$0.000} & \cellcolor[gray]{0.93}0.251{ $\pm$0.215} & \cellcolor[gray]{0.93}1.000{ $\pm$0.000} & \cellcolor[gray]{0.93}0.163{ $\pm$0.145} & \cellcolor{mycitecolor!30}0.611{ $\pm$0.321} \\
\hline

\multirow{4}{*}{\textit{Layer Normalization}}
& TrojDRL  & 0.763{ $\pm$0.203} & 0.896{ $\pm$0.185} & 0.147{ $\pm$0.093} & 1.000{ $\pm$0.000} & 0.396{ $\pm$0.253} & 0.547{ $\pm$0.368} \\
& BadRL    & 0.689{ $\pm$0.202} & 0.917{ $\pm$0.144} & 0.252{ $\pm$0.253} & 1.000{ $\pm$0.000} & 0.008{ $\pm$0.009} & 0.576{ $\pm$0.402} \\
& SleeperNets    & 0.755{ $\pm$0.182} & 0.917{ $\pm$0.144} & 0.082{ $\pm$0.095} & 1.000{ $\pm$0.000} & 0.005{ $\pm$0.006} & 0.624{ $\pm$0.363} \\
& UNIDOOR  & 0.710{ $\pm$0.212} & 0.917{ $\pm$0.144} & 0.233{ $\pm$0.237} & 1.000{ $\pm$0.000} & 0.090{ $\pm$0.114} & 0.522{ $\pm$0.374} \\
& \cellcolor[gray]{0.93}Average  & \cellcolor[gray]{0.93}0.729{ $\pm$0.200} & \cellcolor{mycitecolor!30}0.912{ $\pm$0.154} & \cellcolor[gray]{0.93}0.178{ $\pm$0.170} & \cellcolor[gray]{0.93}1.000{ $\pm$0.000} & \cellcolor[gray]{0.93}0.125{ $\pm$0.096} & \cellcolor{mycitecolor!30}0.568{ $\pm$0.377} \\
\hline

\multirow{4}{*}{\textit{ReDo}}
& TrojDRL  & 0.743{ $\pm$0.203} & 0.938{ $\pm$0.116} & 0.188{ $\pm$0.107} & 1.000{ $\pm$0.000} & 0.454{ $\pm$0.349} & 0.547{ $\pm$0.339} \\
& BadRL    & 0.695{ $\pm$0.209} & 1.000{ $\pm$0.001} & 0.271{ $\pm$0.265} & 1.000{ $\pm$0.000} & 0.005{ $\pm$0.007} & 0.656{ $\pm$0.284} \\
& SleeperNets    & 0.728{ $\pm$0.186} & 1.000{ $\pm$0.000} & 0.119{ $\pm$0.113} & 1.000{ $\pm$0.000} & 0.133{ $\pm$0.117} & 0.641{ $\pm$0.247} \\
& UNIDOOR  & 0.778{ $\pm$0.195} & 0.999{ $\pm$0.002} & 0.273{ $\pm$0.253} & 1.000{ $\pm$0.000} & 0.017{ $\pm$0.027} & 0.578{ $\pm$0.364} \\
& \cellcolor[gray]{0.93}Average  & \cellcolor[gray]{0.93}0.736{ $\pm$0.198} & \cellcolor[gray]{0.93}0.984{ $\pm$0.030} & \cellcolor[gray]{0.93}0.213{ $\pm$0.185} & \cellcolor[gray]{0.93}1.000{ $\pm$0.000} & \cellcolor[gray]{0.93}0.152{ $\pm$0.125} & \cellcolor{mycitecolor!30}0.605{ $\pm$0.309} \\
\hline

\multirow{4}{*}{\textit{SAM}}
& TrojDRL  & 0.682{ $\pm$0.217} & 1.000{ $\pm$0.001} & 0.305{ $\pm$0.020} & 1.000{ $\pm$0.000} & 0.778{ $\pm$0.235} & 0.804{ $\pm$0.233} \\
& BadRL    & 0.641{ $\pm$0.215} & 1.000{ $\pm$0.001} & 0.270{ $\pm$0.276} & 1.000{ $\pm$0.000} & 0.058{ $\pm$0.089} & 0.803{ $\pm$0.283} \\
& SleeperNets    & 0.698{ $\pm$0.194} & 1.000{ $\pm$0.000} & 0.166{ $\pm$0.107} & 1.000{ $\pm$0.000} & 0.082{ $\pm$0.074} & 0.903{ $\pm$0.115} \\
& UNIDOOR  & 0.715{ $\pm$0.193} & 1.000{ $\pm$0.000} & 0.438{ $\pm$0.074} & 1.000{ $\pm$0.000} & 0.384{ $\pm$0.124} & 0.745{ $\pm$0.283} \\
& \cellcolor[gray]{0.93}Average  & \cellcolor[gray]{0.93}0.684{ $\pm$0.205} & \cellcolor[gray]{0.93}1.000{ $\pm$0.000} & \cellcolor[gray]{0.93}0.295{ $\pm$0.119} & \cellcolor[gray]{0.93}1.000{ $\pm$0.000} & \cellcolor{mycitecolor2!40}0.326{ $\pm$0.131} & \cellcolor{mycitecolor2!40}0.814{ $\pm$0.229} \\

\hline
\end{tabular}
\end{table}

\begin{table}
\centering
\scriptsize
\renewcommand\arraystretch{1.4}
\caption{Impact of combinations in \textit{TM-Scratch}. \textcolor{mycitecolor2}{Orange} indicates a significant promoting backdoor threat, \textcolor{mycitecolor}{Blue} indicates a significant mitigating backdoor threat.}
\label{tab:combinations_results_1}
\begin{tabular}{cccccccc}
\hline
\multirow{2}{*}{\textbf{Combination}} &
\multirow{2}{*}{\textbf{Method}} &
\multicolumn{2}{c}{\textbf{Classic Control Tasks}} &
\multicolumn{2}{c}{\textbf{Physics Control Tasks}} &
\multicolumn{2}{c}{\textbf{Robotic Tasks}} \\
\cmidrule(lr){3-4} \cmidrule(lr){5-6} \cmidrule(lr){7-8}
& & \textbf{ASR $\uparrow$} & \textbf{BTP $\uparrow$} & \textbf{ASR $\uparrow$} & \textbf{BTP $\uparrow$} & \textbf{ASR $\uparrow$} & \textbf{BTP $\uparrow$} \\
\hline

\multirow{4}{*}{\textit{None}}
& TrojDRL  & 0.673{ $\pm$0.138} & 0.997{ $\pm$0.003} & 0.543{ $\pm$0.121} & 0.972{ $\pm$0.045} & 0.974{ $\pm$0.020} & 0.720{ $\pm$0.181} \\
& BadRL    & 0.613{ $\pm$0.183} & 0.956{ $\pm$0.109} & 0.435{ $\pm$0.424} & 0.988{ $\pm$0.015} & 0.025{ $\pm$0.039} & 0.992{ $\pm$0.012} \\
& SleeperNets    & 0.705{ $\pm$0.186} & 0.935{ $\pm$0.167} & 0.460{ $\pm$0.105} & 0.991{ $\pm$0.009} & 0.493{ $\pm$0.140} & 0.860{ $\pm$0.172} \\
& UNIDOOR  & 0.801{ $\pm$0.096} & 0.996{ $\pm$0.007} & 0.834{ $\pm$0.108} & 0.908{ $\pm$0.139} & 0.888{ $\pm$0.113} & 0.897{ $\pm$0.074} \\
& \cellcolor[gray]{0.93}Average  & \cellcolor[gray]{0.93}0.698{ $\pm$0.151} & \cellcolor[gray]{0.93}0.971{ $\pm$0.072} & \cellcolor[gray]{0.93}0.568{ $\pm$0.190} & \cellcolor[gray]{0.93}0.965{ $\pm$0.052} & \cellcolor[gray]{0.93}0.595{ $\pm$0.078} & \cellcolor[gray]{0.93}0.867{ $\pm$0.110} \\
\hline

\multirow{4}{*}{\textit{Swiss Cheese}}
& TrojDRL  & 0.742{ $\pm$0.192} & 0.955{ $\pm$0.110} & 0.435{ $\pm$0.135} & 0.999{ $\pm$0.002} & 0.879{ $\pm$0.098} & 0.754{ $\pm$0.131} \\
& BadRL    & 0.664{ $\pm$0.235} & 0.791{ $\pm$0.286} & 0.431{ $\pm$0.430} & 0.996{ $\pm$0.006} & 0.011{ $\pm$0.015} & 0.879{ $\pm$0.146} \\
& SleeperNets    & 0.725{ $\pm$0.189} & 0.933{ $\pm$0.168} & 0.496{ $\pm$0.114} & 1.000{ $\pm$0.000} & 0.310{ $\pm$0.194} & 0.783{ $\pm$0.145} \\
& UNIDOOR  & 0.763{ $\pm$0.147} & 0.911{ $\pm$0.152} & 0.708{ $\pm$0.145} & 0.988{ $\pm$0.020} & 0.823{ $\pm$0.082} & 0.836{ $\pm$0.111} \\
& \cellcolor[gray]{0.93}Average  & \cellcolor[gray]{0.93}0.723{ $\pm$0.191} & \cellcolor{mycitecolor!30}0.897{ $\pm$0.179} & \cellcolor[gray]{0.93}0.517{ $\pm$0.206} & \cellcolor[gray]{0.93}0.996{ $\pm$0.007} & \cellcolor{mycitecolor!30}0.506{ $\pm$0.097} & \cellcolor[gray]{0.93}0.813{ $\pm$0.133} \\
\hline

\multirow{4}{*}{\textit{Plastic}}
& TrojDRL  & 0.704{ $\pm$0.205} & 0.958{ $\pm$0.110} & 0.457{ $\pm$0.053} & 0.999{ $\pm$0.001} & 0.947{ $\pm$0.032} & 0.905{ $\pm$0.063} \\
& BadRL    & 0.612{ $\pm$0.233} & 0.921{ $\pm$0.145} & 0.442{ $\pm$0.442} & 0.906{ $\pm$0.139} & 0.046{ $\pm$0.053} & 0.932{ $\pm$0.117} \\
& SleeperNets    & 0.743{ $\pm$0.171} & 0.903{ $\pm$0.221} & 0.508{ $\pm$0.106} & 0.998{ $\pm$0.003} & 0.383{ $\pm$0.158} & 0.942{ $\pm$0.129} \\
& UNIDOOR  & 0.734{ $\pm$0.188} & 0.952{ $\pm$0.112} & 0.821{ $\pm$0.015} & 0.992{ $\pm$0.014} & 0.942{ $\pm$0.029} & 0.873{ $\pm$0.110} \\
& \cellcolor[gray]{0.93}Average  & \cellcolor[gray]{0.93}0.698{ $\pm$0.209} & \cellcolor[gray]{0.93}0.933{ $\pm$0.147} & \cellcolor[gray]{0.93}0.557{ $\pm$0.154} & \cellcolor[gray]{0.93}0.974{ $\pm$0.039} & \cellcolor[gray]{0.93}0.580{ $\pm$0.068} & \cellcolor[gray]{0.93}0.913{ $\pm$0.105} \\
\hline

\multirow{4}{*}{\textit{Lac}}
& TrojDRL  & 0.720{ $\pm$0.185} & 0.957{ $\pm$0.111} & 0.544{ $\pm$0.195} & 1.000{ $\pm$0.000} & 0.872{ $\pm$0.092} & 0.727{ $\pm$0.162} \\
& BadRL    & 0.656{ $\pm$0.237} & 0.957{ $\pm$0.111} & 0.425{ $\pm$0.426} & 0.998{ $\pm$0.003} & 0.014{ $\pm$0.011} & 0.955{ $\pm$0.078} \\
& SleeperNets    & 0.756{ $\pm$0.152} & 0.892{ $\pm$0.168} & 0.556{ $\pm$0.100} & 1.000{ $\pm$0.000} & 0.268{ $\pm$0.122} & 0.825{ $\pm$0.089} \\
& UNIDOOR  & 0.797{ $\pm$0.167} & 0.903{ $\pm$0.116} & 0.895{ $\pm$0.097} & 0.991{ $\pm$0.014} & 0.705{ $\pm$0.172} & 0.800{ $\pm$0.208} \\
& \cellcolor[gray]{0.93}Average  & \cellcolor[gray]{0.93}0.732{ $\pm$0.185} & \cellcolor[gray]{0.93}0.927{ $\pm$0.126} & \cellcolor[gray]{0.93}0.605{ $\pm$0.205} & \cellcolor[gray]{0.93}0.997{ $\pm$0.004} & \cellcolor{mycitecolor!30}0.465{ $\pm$0.099} & \cellcolor[gray]{0.93}0.827{ $\pm$0.134} \\
\hline

\multirow{4}{*}{\textit{SLac}}
& TrojDRL  & 0.705{ $\pm$0.199} & 0.958{ $\pm$0.110} & 0.409{ $\pm$0.146} & 0.999{ $\pm$0.002} & 0.949{ $\pm$0.043} & 0.973{ $\pm$0.024} \\
& BadRL    & 0.628{ $\pm$0.216} & 0.958{ $\pm$0.110} & 0.410{ $\pm$0.411} & 0.997{ $\pm$0.005} & 0.031{ $\pm$0.035} & 0.934{ $\pm$0.101} \\
& SleeperNets    & 0.732{ $\pm$0.172} & 0.925{ $\pm$0.168} & 0.569{ $\pm$0.080} & 0.997{ $\pm$0.005} & 0.518{ $\pm$0.194} & 0.954{ $\pm$0.072} \\
& UNIDOOR  & 0.795{ $\pm$0.159} & 0.897{ $\pm$0.145} & 0.884{ $\pm$0.113} & 0.976{ $\pm$0.028} & 0.848{ $\pm$0.183} & 0.944{ $\pm$0.058} \\
& \cellcolor[gray]{0.93}Average  & \cellcolor[gray]{0.93}0.715{ $\pm$0.186} & \cellcolor[gray]{0.93}0.935{ $\pm$0.133} & \cellcolor[gray]{0.93}0.568{ $\pm$0.187} & \cellcolor[gray]{0.93}0.992{ $\pm$0.010} & \cellcolor[gray]{0.93}0.586{ $\pm$0.114} & \cellcolor{mycitecolor2!40}0.951{ $\pm$0.064} \\
\hline

\multirow{4}{*}{\textit{SSW}}
& TrojDRL  & 0.586{ $\pm$0.161} & 0.963{ $\pm$0.070} & 0.358{ $\pm$0.078} & 0.898{ $\pm$0.133} & 0.925{ $\pm$0.068} & 0.962{ $\pm$0.026} \\
& BadRL    & 0.559{ $\pm$0.183} & 0.962{ $\pm$0.070} & 0.404{ $\pm$0.403} & 0.864{ $\pm$0.168} & 0.046{ $\pm$0.047} & 0.939{ $\pm$0.088} \\
& SleeperNets    & 0.725{ $\pm$0.161} & 0.900{ $\pm$0.166} & 0.551{ $\pm$0.151} & 0.961{ $\pm$0.050} & 0.565{ $\pm$0.118} & 0.928{ $\pm$0.112} \\
& UNIDOOR  & 0.797{ $\pm$0.164} & 0.981{ $\pm$0.022} & 0.743{ $\pm$0.116} & 0.800{ $\pm$0.128} & 0.889{ $\pm$0.130} & 0.893{ $\pm$0.112} \\
& \cellcolor[gray]{0.93}Average  & \cellcolor[gray]{0.93}0.667{ $\pm$0.167} & \cellcolor[gray]{0.93}0.952{ $\pm$0.082} & \cellcolor[gray]{0.93}0.514{ $\pm$0.187} & \cellcolor[gray]{0.93}0.881{ $\pm$0.120} & \cellcolor[gray]{0.93}0.606{ $\pm$0.091} & \cellcolor{mycitecolor2!40}0.930{ $\pm$0.084} \\

\hline
\end{tabular}
\end{table}

\begin{table}
\centering
\scriptsize
\renewcommand\arraystretch{1.4}
    \caption{Impact of combinations in \textit{TM-Post}. \textcolor{mycitecolor2}{Orange} indicates a significant promoting backdoor threat, \textcolor{mycitecolor}{Blue} indicates a significant mitigating backdoor threat.}
\label{tab:combinations_results_2}
\begin{tabular}{cccccccc}
\hline
\multirow{2}{*}{\makecell{\textbf{Combination}}} &
\multirow{2}{*}{\textbf{Method}} &
\multicolumn{2}{c}{\textbf{Classic Control Tasks}} &
\multicolumn{2}{c}{\textbf{Physics Control Tasks}} &
\multicolumn{2}{c}{\textbf{Robotic Tasks}} \\
\cmidrule(lr){3-4} \cmidrule(lr){5-6} \cmidrule(lr){7-8}
& & \textbf{ASR $\uparrow$} & \textbf{BTP $\uparrow$} & \textbf{ASR $\uparrow$} & \textbf{BTP $\uparrow$} & \textbf{ASR $\uparrow$} & \textbf{BTP $\uparrow$} \\
\hline

\multirow{4}{*}{\textit{None}}
& TrojDRL  & 0.749{ $\pm$0.186} & 1.000{ $\pm$0.000} & 0.184{ $\pm$0.101} & 1.000{ $\pm$0.000} & 0.344{ $\pm$0.311} & 0.722{ $\pm$0.280} \\
& BadRL    & 0.698{ $\pm$0.196} & 1.000{ $\pm$0.000} & 0.277{ $\pm$0.271} & 1.000{ $\pm$0.000} & 0.005{ $\pm$0.006} & 0.833{ $\pm$0.217} \\
& SleeperNets    & 0.724{ $\pm$0.188} & 1.000{ $\pm$0.000} & 0.149{ $\pm$0.143} & 1.000{ $\pm$0.000} & 0.099{ $\pm$0.102} & 0.707{ $\pm$0.164} \\
& UNIDOOR  & 0.772{ $\pm$0.171} & 1.000{ $\pm$0.000} & 0.326{ $\pm$0.177} & 1.000{ $\pm$0.000} & 0.264{ $\pm$0.210} & 0.716{ $\pm$0.258} \\
& \cellcolor[gray]{0.93}Average  & \cellcolor[gray]{0.93}0.736{ $\pm$0.185} & \cellcolor[gray]{0.93}1.000{ $\pm$0.000} & \cellcolor[gray]{0.93}0.234{ $\pm$0.173} & \cellcolor[gray]{0.93}1.000{ $\pm$0.000} & \cellcolor[gray]{0.93}0.178{ $\pm$0.157} & \cellcolor[gray]{0.93}0.745{ $\pm$0.230} \\
\hline

\multirow{4}{*}{\textit{Swiss Cheese}}
& TrojDRL  & 0.768{ $\pm$0.204} & 0.913{ $\pm$0.150} & 0.147{ $\pm$0.093} & 1.000{ $\pm$0.000} & 0.396{ $\pm$0.253} & 0.547{ $\pm$0.368} \\
& BadRL    & 0.690{ $\pm$0.201} & 0.917{ $\pm$0.144} & 0.252{ $\pm$0.253} & 1.000{ $\pm$0.000} & 0.008{ $\pm$0.009} & 0.576{ $\pm$0.402} \\
& SleeperNets    & 0.750{ $\pm$0.185} & 0.917{ $\pm$0.144} & 0.082{ $\pm$0.095} & 1.000{ $\pm$0.000} & 0.005{ $\pm$0.006} & 0.624{ $\pm$0.363} \\
& UNIDOOR  & 0.750{ $\pm$0.178} & 0.917{ $\pm$0.144} & 0.233{ $\pm$0.237} & 1.000{ $\pm$0.000} & 0.090{ $\pm$0.114} & 0.522{ $\pm$0.374} \\
& \cellcolor[gray]{0.93}Average  & \cellcolor[gray]{0.93}0.739{ $\pm$0.192} & \cellcolor{mycitecolor!30}0.916{ $\pm$0.146} & \cellcolor[gray]{0.93}0.178{ $\pm$0.170} & \cellcolor[gray]{0.93}1.000{ $\pm$0.000} & \cellcolor[gray]{0.93}0.125{ $\pm$0.096} & \cellcolor{mycitecolor!30}0.568{ $\pm$0.377} \\
\hline

\multirow{4}{*}{\textit{Plastic}}
& TrojDRL  & 0.721{ $\pm$0.217} & 0.934{ $\pm$0.123} & 0.128{ $\pm$0.129} & 1.000{ $\pm$0.000} & 0.882{ $\pm$0.120} & 0.716{ $\pm$0.405} \\
& BadRL    & 0.656{ $\pm$0.210} & 0.896{ $\pm$0.220} & 0.277{ $\pm$0.277} & 1.000{ $\pm$0.000} & 0.047{ $\pm$0.055} & 0.716{ $\pm$0.408} \\
& SleeperNets    & 0.741{ $\pm$0.183} & 0.979{ $\pm$0.056} & 0.119{ $\pm$0.119} & 1.000{ $\pm$0.000} & 0.178{ $\pm$0.140} & 0.779{ $\pm$0.201} \\
& UNIDOOR  & 0.742{ $\pm$0.206} & 0.958{ $\pm$0.111} & 0.392{ $\pm$0.156} & 1.000{ $\pm$0.000} & 0.362{ $\pm$0.262} & 0.683{ $\pm$0.434} \\
& \cellcolor[gray]{0.93}Average  & \cellcolor[gray]{0.93}0.715{ $\pm$0.204} & \cellcolor[gray]{0.93}0.942{ $\pm$0.127} & \cellcolor[gray]{0.93}0.229{ $\pm$0.170} & \cellcolor[gray]{0.93}1.000{ $\pm$0.000} & \cellcolor{mycitecolor2!40}0.368{ $\pm$0.144} & \cellcolor[gray]{0.93}0.724{ $\pm$0.362} \\
\hline

\multirow{4}{*}{\textit{Lac}}
& TrojDRL  & 0.738{ $\pm$0.193} & 0.979{ $\pm$0.055} & 0.077{ $\pm$0.078} & 1.000{ $\pm$0.000} & 0.464{ $\pm$0.357} & 0.610{ $\pm$0.278} \\
& BadRL    & 0.637{ $\pm$0.201} & 0.979{ $\pm$0.055} & 0.241{ $\pm$0.252} & 1.000{ $\pm$0.000} & 0.007{ $\pm$0.008} & 0.773{ $\pm$0.158} \\
& SleeperNets    & 0.744{ $\pm$0.159} & 1.000{ $\pm$0.000} & 0.068{ $\pm$0.073} & 1.000{ $\pm$0.000} & 0.013{ $\pm$0.012} & 0.650{ $\pm$0.191} \\
& UNIDOOR  & 0.747{ $\pm$0.153} & 1.000{ $\pm$0.000} & 0.211{ $\pm$0.211} & 1.000{ $\pm$0.000} & 0.177{ $\pm$0.181} & 0.685{ $\pm$0.201} \\
& \cellcolor[gray]{0.93}Average  & \cellcolor[gray]{0.93}0.716{ $\pm$0.176} & \cellcolor[gray]{0.93}0.989{ $\pm$0.028} & \cellcolor{mycitecolor!30}0.149{ $\pm$0.153} & \cellcolor[gray]{0.93}1.000{ $\pm$0.000} & \cellcolor[gray]{0.93}0.165{ $\pm$0.139} & \cellcolor[gray]{0.93}0.680{ $\pm$0.207} \\
\hline

\multirow{4}{*}{\textit{SLac}}
& TrojDRL  & 0.766{ $\pm$0.194} & 0.937{ $\pm$0.116} & 0.144{ $\pm$0.087} & 1.000{ $\pm$0.000} & 0.862{ $\pm$0.105} & 0.799{ $\pm$0.303} \\
& BadRL    & 0.692{ $\pm$0.204} & 0.936{ $\pm$0.117} & 0.276{ $\pm$0.278} & 1.000{ $\pm$0.000} & 0.044{ $\pm$0.050} & 0.790{ $\pm$0.311} \\
& SleeperNets    & 0.756{ $\pm$0.171} & 0.937{ $\pm$0.116} & 0.113{ $\pm$0.117} & 1.000{ $\pm$0.000} & 0.200{ $\pm$0.176} & 0.899{ $\pm$0.166} \\
& UNIDOOR  & 0.745{ $\pm$0.192} & 0.916{ $\pm$0.145} & 0.294{ $\pm$0.223} & 1.000{ $\pm$0.000} & 0.564{ $\pm$0.253} & 0.776{ $\pm$0.324} \\
& \cellcolor[gray]{0.93}Average  & \cellcolor[gray]{0.93}0.740{ $\pm$0.190} & \cellcolor[gray]{0.93}0.932{ $\pm$0.124} & \cellcolor[gray]{0.93}0.207{ $\pm$0.176} & \cellcolor[gray]{0.93}1.000{ $\pm$0.000} & \cellcolor{mycitecolor2!40}0.417{ $\pm$0.146} & \cellcolor{mycitecolor2!40}0.816{ $\pm$0.276} \\
\hline

\multirow{4}{*}{\textit{SSW}}
& TrojDRL  & 0.677{ $\pm$0.177} & 0.999{ $\pm$0.002} & 0.142{ $\pm$0.149} & 0.976{ $\pm$0.028} & 0.916{ $\pm$0.041} & 0.914{ $\pm$0.161} \\
& BadRL    & 0.641{ $\pm$0.195} & 0.999{ $\pm$0.002} & 0.255{ $\pm$0.265} & 0.947{ $\pm$0.055} & 0.049{ $\pm$0.046} & 0.889{ $\pm$0.177} \\
& SleeperNets    & 0.707{ $\pm$0.166} & 0.991{ $\pm$0.020} & 0.213{ $\pm$0.220} & 0.984{ $\pm$0.017} & 0.089{ $\pm$0.069} & 0.961{ $\pm$0.056} \\
& UNIDOOR  & 0.750{ $\pm$0.167} & 0.988{ $\pm$0.025} & 0.337{ $\pm$0.215} & 0.904{ $\pm$0.143} & 0.620{ $\pm$0.213} & 0.870{ $\pm$0.184} \\
& \cellcolor[gray]{0.93}Average  & \cellcolor[gray]{0.93}0.694{ $\pm$0.176} & \cellcolor[gray]{0.93}0.994{ $\pm$0.012} & \cellcolor[gray]{0.93}0.237{ $\pm$0.212} & \cellcolor[gray]{0.93}0.953{ $\pm$0.061} & \cellcolor{mycitecolor2!40}0.418{ $\pm$0.092} & \cellcolor{mycitecolor2!40}0.915{ $\pm$0.131} \\

\hline
\end{tabular}
\end{table}

\begin{table}[h]
\scriptsize
\renewcommand\arraystretch{1.3}
\caption{Design details of the backdoor tasks. \texttt{Task0}–\texttt{Task46} correspond to the 47 backdoor tasks in the main investigation, while \texttt{Task47}–\texttt{Task50} correspond to the 4 backdoor tasks in the extended investigation.}
\label{tab:backdoor_design}
\centering
\begin{tabular}{|c|c|c|}\hline
\textbf{Index} & \textbf{Environment} & \textbf{Backdoor Task} \\ \hline
\texttt{Task0} & CartPole & ( $s^{(0)}$, -4.8, push cart to the right ). \\
\texttt{Task1} & CartPole & ( $s^{(1)}$, 100, push cart to the right ). \\
\texttt{Task2} & CartPole & ( $s^{(2)}$, -0.42, push cart to the left ). \\
\texttt{Task3} & CartPole & ( $s^{(3)}$, -100, push cart to the left ). \\

\texttt{Task4} & Acrobot & ( $s^{(0)}$, -1, apply -1 torque ). \\
\texttt{Task5} & Acrobot & ( $s^{(1)}$, -1, apply 0 torque ). \\
\texttt{Task6} & Acrobot & ( $s^{(2)}$, -1, apply 1 torque ). \\
\texttt{Task7} & Acrobot & ( $s^{(3)}$, -1, apply -1 torque ). \\
\texttt{Task8} & Acrobot & ( $s^{(4)}$, 12.57, apply 0 torque ). \\
\texttt{Task9} & Acrobot & ( $s^{(5)}$, 28.27, apply 1 torque ). \\

\texttt{Task10} & MountainCar & ( $s^{(0)}$, -0.07, not accelerate ). \\
\texttt{Task11} & MountainCar & ( $s^{(1)}$, 0.07, accelerate to the right ). \\

\texttt{Task12} & Pendulum & ( $s^{(2)}$, 8, maximum left torque ). \\
\texttt{Task13} & Pendulum & ( $s^{(1)}$, -1, maximum right torque ). \\
\texttt{Task14} & Pendulum & ( $s^{(2)}$, -8, maximum right torque ). \\

\texttt{Task15} & CartPole &
( $s^{(0)}$, -4.8, push cart to the right ),
( $s^{(2)}$, -0.42, push cart to the left ). \\

\texttt{Task16} & CartPole &
( $s^{(1)}$, 100, push cart to the right ),
( $s^{(3)}$, -100, push cart to the left ).  \\

\texttt{Task17} & CartPole &
( $s^{(0)}$, -4.8, push cart to the right ),
( $s^{(3)}$, -100, push cart to the left ).  \\

\texttt{Task18} & CartPole &
( $s^{(1)}$, 100, push cart to the right ),
( $s^{(2)}$, -0.42, push cart to the left ).  \\

\texttt{Task19} & CartPole & \makecell[t]{
( $s^{(0)}$, -4.8, push cart to the right ), \\
( $s^{(1)}$, 100, push cart to the right ), \\
( $s^{(2)}$, -0.42, push cart to the left ), \\
( $s^{(3)}$, -100, push cart to the left ). } \\

\texttt{Task20} & Acrobot & \makecell[t]{
( $s^{(3)}$, -1, apply -1 torque ), \\
( $s^{(4)}$, 12.57, apply 0 torque ), \\
( $s^{(5)}$, 28.27, apply 1 torque ). } \\

\texttt{Task21} & MountainCar &
( $s^{(0)}$, -0.07, not accelerate ),
( $s^{(1)}$, 0.07, accelerate to the right ). \\

\texttt{Task22} & Pendulum &
( $s^{(2)}$, 8, maximum left torque ),
( $s^{(1)}$, -1, maximum right torque ).  \\

\texttt{Task23} & Pendulum &
( $s^{(2)}$, 8, maximum left torque ),
( $s^{(2)}$, -8, maximum right torque ).  \\

\texttt{Task24} & Pendulum &
( $s^{(1)}$, -1, maximum right torque ),
( $s^{(2)}$, -8, maximum right torque ).  \\

\texttt{Task25} & Pendulum & \makecell[t]{
( $s^{(2)}$, 8, maximum left torque ), \\
( $s^{(1)}$, -1, maximum right torque ), \\
( $s^{(2)}$, -8, maximum right torque ). } \\ \hline

\texttt{Task26} & Lunar Lander & ( $s^{(0)}$, 1.5, do nothing ). \\
\texttt{Task27} & Lunar Lander & ( $s^{(2)}$, -5, fire left orientation engine ). \\
\texttt{Task28} & Lunar Lander & ( $s^{(4)}$, 3.14, fire main engine ). \\
\texttt{Task29} & Lunar Lander & ( $s^{(6)}$, 0, fire right orientation engine ). \\

\texttt{Task30} & Bipedal Walker & ( $s^{(0)}$, 3.14, maximum forward speed ). \\
\texttt{Task31} & Bipedal Walker & ( $s^{(1)}$, 5, maximum backward speed ). \\

\texttt{Task32} & Lunar Lander &
( $s^{(0)}$, 1.5, do nothing ),
( $s^{(4)}$, 3.14, fire main engine ). \\

\texttt{Task33} & Lunar Lander &
( $s^{(2)}$, -5, fire left orientation engine ),
( $s^{(6)}$, 0, fire right orientation engine ).  \\

\texttt{Task34} & Lunar Lander &
( $s^{(0)}$, 1.5, do nothing ),
( $s^{(6)}$, 0, fire right orientation engine ).  \\

\texttt{Task35} & Lunar Lander &
( $s^{(2)}$, -5, fire left orientation engine ),
( $s^{(4)}$, 3.14, fire main engine ).  \\

\texttt{Task36} & Lunar Lander & \makecell[t]{
( $s^{(0)}$, 1.5, do nothing ), \\
( $s^{(2)}$, -5, fire left orientation engine ), \\
( $s^{(4)}$, 3.14, fire main engine ), \\
( $s^{(6)}$, 0, fire right orientation engine ). } \\

\texttt{Task37} & Bipedal Walker &
( $s^{(0)}$, 3.14, maximum forward speed ),
( $s^{(1)}$, 5, maximum backward speed ).  \\ \hline

\texttt{Task38} & Half Cheetah & ( $s^{(1)}$, 5, apply a torque of 1 to all rotors ). \\
\texttt{Task39} & Half Cheetah & ( $s^{(2)}$, 5, apply a torque of -1 to all rotors ). \\

\texttt{Task40} & Hopper & ( $s^{(1)}$, 5, apply a torque of 1 to all rotors ). \\
\texttt{Task41} & Hopper & ( $s^{(2)}$, -5, apply a torque of -1 to all rotors ). \\

\texttt{Task42} & Reacher & ( $s^{(0)}$, 5, apply a torque of 1 to all rotors ). \\
\texttt{Task43} & Reacher & ( $s^{(1)}$, -5, apply a torque of -1 to all rotors ). \\

\texttt{Task44} & Half Cheetah &
( $s^{(1)}$, 5, apply a torque of 1 to all rotors ),
( $s^{(2)}$, 5, apply a torque of -1 to all rotors ).  \\

\texttt{Task45} & Hopper &
( $s^{(1)}$, 5, apply a torque of 1 to all rotors ),
( $s^{(2)}$, -5, apply a torque of -1 to all rotors ).  \\

\texttt{Task46} & Reacher &
( $s^{(0)}$, 5, apply a torque of 1 to all rotors ),
( $s^{(1)}$, -5, apply a torque of -1 to all rotors ). \\ \hline

\texttt{Task47} & Predator-Prey &
( $s^{(4)}$, 0, move left at max speed ).  \\

\texttt{Task48} & Predator-Prey &
( $s^{(5)}$, 0, remain in place ).  \\

\texttt{Task49} & WorldComm &
( $s^{(4)}$, 0, move left at max speed ).  \\

\texttt{Task50} & WorldComm &
( $s^{(5)}$, 0, remain in place ). \\ \hline

\end{tabular}
\end{table}


\end{document}